\documentclass[10pt]{article}

\usepackage{research_preprint}
\usepackage{amsmath,amssymb,amsthm}
\usepackage{array}
\usepackage{booktabs}
\usepackage{graphicx}
\usepackage{hyperref}
\hypersetup{hidelinks}

\usepackage{url}
\usepackage{microtype}
\usepackage{enumitem}

\newtheorem{theorem}{Theorem}
\newtheorem{proposition}{Proposition}
\newtheorem{lemma}{Lemma}
\newtheorem{corollary}{Corollary}
\newtheorem{definition}{Definition}
\newtheorem{assumption}{Assumption}

\newcommand{\one}{\mathbf{1}}
\newcommand{\TV}{\mathrm{TV}}
\newcommand{\supp}{\mathrm{supp}}
\newcommand{\diag}{\operatorname{diag}}

\newcolumntype{L}[1]{>{\raggedright\arraybackslash}p{#1}}

\title{Transport Topology for Fixed-Support Sinkhorn Layers}
\author{Dylan B. Forde\\Independent Researcher\\\texttt{forde.dylan@gmail.com}}
\date{}

\hypersetup{pdfauthor={Dylan B. Forde},pdftitle={Transport Topology for Fixed-Support Sinkhorn Layers}}
\begin{document}
\maketitle

\begin{abstract}
Fixed-support Sinkhorn scaling exposes an exact quotient Markov operator.  For
finite active scores, compatible positive marginals, and a fixed support
$\Omega$, a finite row-column cycle has homogeneous column-potential
linearization $\delta v^+=C_t^\top R_t\,\delta v$, with
$R_t=\diag(a)^{-1}P_t^r$ and $C_t=P_t^c\diag(b)^{-1}$ formed from the two
actual half-step plans. At a balanced fixed point these plans coincide with
$P$, giving $M=\diag(b)^{-1}P^\top\diag(a)^{-1}P$. Both derivative forms
are row-stochastic, descend to potentials modulo constants, and their
transposes govern homogeneous zero-mass quotient cotangents.  This paper
makes that fixed-support calculus self-contained, including score/source terms
in the first-order half-step equations and the projected-source tail recurrence
$\eta_t=M_t^\top\eta_{t+1}+\xi_t$.  Dobrushin and minorization coefficients
then give rigorous homogeneous and inhomogeneous tail-cotangent bounds for this
quotient-transport component.  The main structural result is a face-lattice
dichotomy: a support--marginal pair has a score-uniform one-step quotient-mixing
certificate if and only if every feasible face of its transportation polytope
has pairwise two-hop column overlap; otherwise finite score directions can make
$\tau(M)$ arbitrarily close to one.  Windowed products, partition heat-bath
supports, product-coordinate sweeps, forced bus mass, and register examples are
proved as support-schedule certificates.  The validation scope is deliberately
certificate-level: small exact VJPs, quotient implicit VJPs, cost benchmarks,
generated-score stress tests, and adversarial faces check theorem objects and
failure modes, not system speedups or downstream accuracy.
\end{abstract}

\section{Introduction}

Block-Wise Differentiable Sinkhorn Attention \citep{forde2026blockwise}
studies the local differentiated-surrogate problem for fixed-support Sinkhorn
attention: after a stopped base solve, a short refinement tail is
differentiated exactly and scheduled blockwise. The question here is
different and complementary: given a fixed support or scheduled family of
supports, which topologies make the induced quotient reverse dynamics mix,
trap, or require windowed products?

Transformer architecture papers often separate topology from differentiation.
The topology specifies who can attend to whom, which registers or global
tokens exist, how experts are reachable, and how residual branches merge; the
gradient analysis then treats the resulting computation graph as given. For
Sinkhorn-style transport attention, this separation leaves useful structure on
the table. Once the support is fixed, a transport layer is a matrix-scaling
problem, and a full scaling step has an explicit Markov kernel in reverse
mode. The support graph therefore constrains not only the FLOP pattern but also
which cotangent modes can mix, which modes remain trapped, and which residual
paths can amplify truncation error.

This paper proposes the following design principle:
\begin{quote}
Sparse transport topologies should be evaluated by quotient-gradient mixing,
not only by density, graph diameter, or token reachability.
\end{quote}

The coefficient is a derivative certificate, not a generic graph statistic. A
fixed support and a score-induced plan produce a testable operator for
reverse-mode transport, and the resulting certificate distinguishes dense
masks, narrow local windows, disconnected supports, global-token supports,
register-augmented supports, and residual DAGs. In the equal-score biregular
baseline the certificate is a pure graph quantity; away from that baseline it
is controlled by the realized scaled plan, plan-distortion bounds, or
mass-floor hypotheses.

\paragraph{Mathematical setting.}
The scope is fixed support, compatible positive marginals, finite active
scores, and quotient cotangents.  The theory proves exact statements about the
fixed-support quotient-transport component.  It does not prove support-changing
training rules, production kernels, hardware speedups, or task-quality gains.
When neural parameters, values, Q/K/V maps, residual branches, or losses add
source terms, the theorem applies after those source cotangents are projected
and their norms are bounded separately.

Matrix scaling has classical foundations in \citet{sinkhornknopp1967};
\citet{idel2016review} surveys existence, support conditions, and convergence.
Entropic optimal transport connects this scaling problem to computational
transport \citep{cuturi2013sinkhorn,peyre2019computational}.

\section{Fixed-Support Sinkhorn Calculus}

Let $\Omega\subseteq[m]\times[n]$ be a fixed active support with row
neighborhoods $\Omega_i=\{j:(i,j)\in\Omega\}$ and column neighborhoods
$\Omega^j=\{i:(i,j)\in\Omega\}$.  Let $a\in\mathbb{R}^m_{>0}$ and
$b\in\mathbb{R}^n_{>0}$ be compatible marginals with
$\one^\top a=\one^\top b=T$.  Scores are dimensionless active-edge scores
$s_{ij}\in\mathbb{R}$ for $(i,j)\in\Omega$; inactive entries are omitted, or
viewed as $-\infty$.  If raw logits are $\theta_{ij}$ at temperature
$\varepsilon>0$, then $s_{ij}=\theta_{ij}/\varepsilon$, so raw-score source
terms below acquire the factor $1/\varepsilon$.

A log-potential pair $(u,v)\in\mathbb{R}^m\times\mathbb{R}^n$ represents the
active-edge plan
\[
P_{ij}(u,v;s)=\exp(s_{ij}+u_i+v_j)\quad ((i,j)\in\Omega),
\qquad P_{ij}=0\quad ((i,j)\notin\Omega).
\]
The pair is a Sinkhorn solution when $P\one=a$ and $P^\top\one=b$.
The potentials have gauge freedom: on each connected component of the bipartite
support graph, adding a constant to all row potentials and subtracting the same
constant from all column potentials leaves $P$ unchanged.  The plan and all
kernels below are gauge-invariant.

\begin{assumption}[Fixed support, finite scores, and feasible marginals]
\label{ass:fixed-support-feasible}
The support $\Omega$ is fixed during differentiation, all active scores are
finite, and every connected component of the active bipartite support has equal
row and column marginal mass.  When strict positivity on active edges is needed,
we assume that $\Pi(a,b;\Omega)$ contains a relative-interior point positive on
every active edge.
\end{assumption}

For differentiation of transport layers, \citet{eisenberger2022unified}
develop implicit gradients with respect to costs and target capacities, while
\citet{pauwels2023derivatives} establish convergence of differentiated
Sinkhorn iterates under their hypotheses. The calculation below makes the
individual half-step source terms explicit and distinguishes a finite cycle
from a balanced fixed point.

\subsection{Half-Step Equations and Linearization}

Given a column potential $v$, the exact row-normalization half-step is
\begin{align}
 u_i^+(v;s,a)
 &= \log a_i-\log\sum_{j\in\Omega_i}\exp(s_{ij}+v_j),\label{eq:row-half-step}\\
 P^{r}_{ij}
 &= \exp(s_{ij}+u_i^+(v;s,a)+v_j),
 \qquad \sum_{j}P^{r}_{ij}=a_i .
\end{align}
Given a row potential $u$, the exact column-normalization half-step is
\begin{align}
 v_j^+(u;s,b)
 &= \log b_j-\log\sum_{i\in\Omega^j}\exp(s_{ij}+u_i),\label{eq:column-half-step}\\
 P^{c}_{ij}
 &= \exp(s_{ij}+u_i+v_j^+(u;s,b)),
 \qquad \sum_i P^{c}_{ij}=b_j .
\end{align}
A full row-column cycle maps $v$ to
$\mathcal{T}(v)=v^+(u^+(v;s,a);s,b)$.  At a fixed point of this map, the plan
$P$ has both marginals.

For a balanced fixed-point plan $P$, define
\[
 R=\diag(a)^{-1}P\in\mathbb{R}^{m\times n},
 \qquad
 C=P\diag(b)^{-1}\in\mathbb{R}^{m\times n}.
\]
Rows of $R$ sum to one and columns of $C$ sum to one.  For an active-edge score
perturbation $\dot s$, define the row and column score averages
\[
 r_i(\dot s)=\sum_{j\in\Omega_i}R_{ij}\dot s_{ij},
 \qquad
 c_j(\dot s)=\sum_{i\in\Omega^j}C_{ij}\dot s_{ij}.
\]
For marginal perturbations write $d_a=\dot a\oslash a$ and
$d_b=\dot b\oslash b$, componentwise.

\begin{proposition}[Full first-order half-step linearization]
\label{prop:half-step-linearization}
At a balanced fixed-point plan $P$ with positive marginals, the row half-step and
column half-step have differentials
\begin{align}
 \dot u^+ &= d_a - R\dot v - r(\dot s),\label{eq:linear-row-half-step}\\
 \dot v^+ &= d_b - C^\top\dot u^+ - c(\dot s).\label{eq:linear-column-half-step}
\end{align}
Consequently one full row-column cycle on column potentials has linearization
\begin{equation}
\label{eq:full-linearization-with-sources}
 \dot v^+ = C^\top R\,\dot v
 + \underbrace{d_b-C^\top d_a+C^\top r(\dot s)-c(\dot s)}_{\zeta(\dot s,\dot a,\dot b)} .
\end{equation}
For fixed marginals, the source term is
$\zeta(\dot s)=C^\top r(\dot s)-c(\dot s)$.  For raw logits
$\theta$ at temperature $\varepsilon$, substitute
$\dot s=\dot\theta/\varepsilon$.
\end{proposition}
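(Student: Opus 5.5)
We differentiate the explicit half-step formulas \eqref{eq:row-half-step} and \eqref{eq:column-half-step} directly, treating $(v,s,a)$ and $(u,s,b)$ as the independent variables, and then evaluate the derivatives at the balanced fixed point.

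\textbf{Row half-step.} Write the row partition function as $Z_i(v,s)=\sum_{j\in\Omega_i}\exp(s_{ij}+v_j)$, so that $u_i^+=\log a_i-\log Z_i$. The differential of $\log a_i$ is $\dot a_i/a_i=(d_a)_i$. The chain rule for log-sum-exp gives
\[
 \tfrac{d}{dt}\log Z_i=\sum_{j\in\Omega_i}\frac{\exp(s_{ij}+v_j)}{Z_i}\,(\dot s_{ij}+\dot v_j).
\]
The weights $\exp(s_{ij}+v_j)/Z_i$ equal $\exp(s_{ij}+u_i^++v_j)/a_i=P^r_{ij}/a_i$, by the definition of $u_i^+$. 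At a balanced fixed point we have $P^r=P$, because the row half-step applied to the fixed-point $v$ returns the fixed-point $u$. The weights are therefore exactly $R_{ij}$. This gives $\dot u^+=d_a-R\dot v-r(\dot s)$, which is \eqref{eq:linear-row-half-step}.

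\textbf{Column half-step.} Set $W_j=\sum_{i\in\Omega^j}\exp(s_{ij}+u_i)$. The weights $\exp(s_{ij}+u_i)/W_j$ equal $P^c_{ij}/b_j$, and at the fixed point this is $P_{ij}/b_j=C_{ij}$. Differentiating with input perturbation $\dot u^+$ then gives $\dot v^+=d_b-C^\top\dot u^+-c(\dot s)$, which is \eqref{eq:linear-column-half-step}. Here the index bookkeeping matters: $\sum_i C_{ij}\dot u_i=(C^\top\dot u)_j$.

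\textbf{Composing the cycle.} Substitute \eqref{eq:linear-row-half-step} into \eqref{eq:linear-column-half-step}:
\[
 \dot v^+=d_b-C^\top d_a+C^\top R\dot v+C^\top r(\dot s)-c(\dot s).
\]
This is \eqref{eq:full-linearization-with-sources}. The chain rule is legitimate for the composite map $\mathcal{T}$ because the intermediate point $u^+(v)$ is the fixed-point $u$. Setting $\dot a=0$ and $\dot b=0$ gives $d_a=d_b=0$, which yields the fixed-marginal source $\zeta(\dot s)=C^\top r(\dot s)-c(\dot s)$. For raw logits, $s=\theta/\varepsilon$ with $\varepsilon$ fixed, so $\dot s=\dot\theta/\varepsilon$ by linearity.

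\textbf{Main point of care.} The only nontrivial step is verifying that both sets of softmax weights coincide with $R$ and $C$ built from the same plan $P$. This requires that the fixed point is balanced, meaning $P^r=P^c=P$, and that each half-step is evaluated along the fixed-point trajectory. I will also note that the formulas are gauge-invariant and that smoothness holds, since all scores are finite and every $Z_i$ and $W_j$ is a positive finite sum over a nonempty neighborhood. Positive marginals force $\Omega_i$ and $\Omega^j$ to be nonempty, so the logarithms are differentiable.
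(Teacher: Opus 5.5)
Your proposal is correct and follows the paper's own route. You differentiate each log-sum-exp normalizer, identify the softmax weights as $P^r_{ij}/a_i$ and $P^c_{ij}/b_j$, use balance ($P^r=P^c=P$) to replace them by $R$ and $C$, and substitute the row differential into the column differential. One small imprecision: the chain rule for $\mathcal{T}$ needs no fixed-point hypothesis. Balance is needed only so that both weight matrices come from the same plan $P$, which is also why your intermediate identification with the actual half-step plans matches the paper's finite-cycle extension with $R_t,C_t$.
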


\begin{proof}
Differentiate \eqref{eq:row-half-step}.  The derivative of the log-sum-exp
normalizer in row $i$ is the $R_{ij}$-weighted average of
$\dot s_{ij}+\dot v_j$, because at the fixed point
$R_{ij}=P_{ij}/a_i$.  This gives
$\dot u_i^+=\dot a_i/a_i-\sum_jR_{ij}\dot v_j-\sum_jR_{ij}\dot s_{ij}$, which
is \eqref{eq:linear-row-half-step}.  Differentiating
\eqref{eq:column-half-step} gives
$\dot v_j^+=\dot b_j/b_j-\sum_iC_{ij}\dot u_i^+-\sum_iC_{ij}\dot s_{ij}$,
which is \eqref{eq:linear-column-half-step}.  Substituting the first display
into the second gives \eqref{eq:full-linearization-with-sources}.  The signs
come from the fact that both half-steps subtract a log normalizer; the two
minus signs in the homogeneous composition yield the positive map
$C^\top R$.
\end{proof}

\paragraph{Finite-cycle extension: two distinct half-step plans.}
The preceding single-plan formulas are evaluated at a balanced fixed point.
For an actual finite cycle, let $P_t^r$ and $P_t^c$ be the row-normalized and
column-normalized plans from \eqref{eq:row-half-step}--\eqref{eq:column-half-step},
and define
\[
 R_t=\diag(a)^{-1}P_t^r,\qquad
 C_t=P_t^c\diag(b)^{-1},\qquad M_t=C_t^\top R_t.
\]
With $r_t(\dot s)_i=\sum_j(R_t)_{ij}\dot s_{ij}$ and
$c_t(\dot s)_j=\sum_i(C_t)_{ij}\dot s_{ij}$, direct differentiation gives
\[
 \dot v^{+}=M_t\dot v+d_b-C_t^\top d_a+
 C_t^\top r_t(\dot s)-c_t(\dot s).
\]
Indeed, each log-normalizer differentiates to the conditional probabilities
of its own half-step, and composing the two minus signs gives $M_t$.
Rows of $R_t$ and columns of $C_t$ are normalized, so $M_t\one=\one$.
Transposing this identity gives the source ledger in
Proposition~\ref{prop:adjoint-source-ledger} with $R_t,C_t$ in place of $R,C$.
In general $P_t^r\ne P_t^c$; neither plan may be substituted for both.
The finite kernel need not be reversible or have stationary mass $b$.
The general stochastic-kernel tail bounds below apply to these actual $M_t$.
By contrast, the single-plan support weights, face-lattice characterization,
and stationary implicit system concern balanced feasible plans, not arbitrary
unbalanced intermediate iterates.

\begin{definition}[Quotient convention]
\label{def:quotient-convention}
For a connected support, column potentials are functions modulo additive
constants.  We write
$Q=\mathbb{R}^n/\mathrm{span}\{\one\}$ and identify its dual with
\[
 Q^*=\{\eta\in\mathbb{R}^n:\one^\top\eta=0\}.
\]
The total-variation norm on $Q^*$ is
$\|\eta\|_{\TV}=\frac12\|\eta\|_1$.  Centering, for example
$\Pi_0x=x-(\one^\top x/n)\one$, chooses a representative only.  Since the
kernels below are row-stochastic, they satisfy $M\one=\one$ and therefore
$M^\top Q^*\subseteq Q^*$.  Dobrushin certificates are statements on this
quotient.

If the bipartite support has components $\Gamma$ with column sets $J_\Gamma$,
the full potential gauge space is
$G_\Omega=\operatorname{span}\{\one_{J_\Gamma}:\Gamma\}$ and the exact
component quotient is
\[
 Q_\Omega=\mathbb{R}^n/G_\Omega,
 \qquad
 Q_\Omega^*=\left\{\eta:
 \one_{J_\Gamma}^\top\eta=0\ \text{for every }\Gamma\right\}.
\]
Every single-support statement below remains valid componentwise after
replacing $(Q,Q^*)$ by $(Q_\Omega,Q_\Omega^*)$ and centering separately on
each $J_\Gamma$.  For schedules whose component partitions change, the global
quotient $Q$ is the common state space.  Its Dobrushin coefficient may equal
one because it retains intercomponent Markov modes; that value must not be
reported as a sharp contraction rate on the smaller component quotient.
\end{definition}

\subsection{Derivative Kernel and Overlap}

At a balanced fixed point, the column-to-column homogeneous full-step kernel is
\[
M=C^\top R\in\mathbb{R}^{n\times n}.
\]
It is row-stochastic, and $b^\top M=b^\top$, so $b$ is a stationary
distribution.  The map $M$ is the fixed-support quotient operator for the
homogeneous part of one row-column Sinkhorn cycle; source terms enter through
$\zeta$ in \eqref{eq:full-linearization-with-sources} and through the projected
cotangent sources in Section~\ref{sec:tail-cotangent-certificates}.

\begin{proposition}[Adjoint source ledger for one fixed-support cycle]
\label{prop:adjoint-source-ledger}
Let $\eta^+\in\mathbb{R}^n$ be a cotangent on the output column potential
$v^+$ of one row-column cycle at a balanced fixed point. For a finite cycle,
use the two-half-step extension above. The homogeneous cotangent on the input column
potential is
\[
\eta^- = M^\top\eta^+ .
\]
The cotangents carried by the non-homogeneous source variables in
\eqref{eq:full-linearization-with-sources} are
\begin{align}
 \lambda_b &= \eta^+,
 & \lambda_a &= -C\eta^+,\label{eq:source-log-marginal-adjoints}\\
 g^s_{ij} &= R_{ij}(C\eta^+)_i-C_{ij}\eta^+_j,
 && (i,j)\in\Omega .\label{eq:score-source-adjoint}
\end{align}
Here $\lambda_a$ and $\lambda_b$ are cotangents to the logarithmic marginal
perturbations $d_a=\dot a\oslash a$ and $d_b=\dot b\oslash b$.  Equivalently,
the raw marginal and raw-logit cotangents are
\[
 g^a=-(C\eta^+)\oslash a,\qquad
 g^b=\eta^+\oslash b,\qquad
 g^\theta_{ij}=g^s_{ij}/\varepsilon
\]
when $s_{ij}=\theta_{ij}/\varepsilon$.

The displayed marginal covectors are unconstrained source covectors.  If the
marginals are parameterized inside the compatible fixed-component marginal
manifold, only their quotient classes are observable.  For each connected
component $\Gamma=(I_\Gamma,J_\Gamma)$ of the support graph, admissible
log-marginal perturbations satisfy
\[
 \sum_{i\in I_\Gamma} a_i(d_a)_i
 =
 \sum_{j\in J_\Gamma} b_j(d_b)_j .
\]
Hence the log-marginal cotangent pair
$(\lambda_a,\lambda_b)$ is defined modulo the component normal
$(a_{I_\Gamma},-b_{J_\Gamma})$.  Equivalently, any representative
\[
(\lambda_a,\lambda_b)
-
\alpha_\Gamma(a_{I_\Gamma},-b_{J_\Gamma})
\]
gives the same derivative on admissible perturbations.  A concrete Euclidean
orthogonal representative is obtained with
\[
\alpha_\Gamma=
\frac{\sum_{i\in I_\Gamma}a_i(\lambda_a)_i
      -\sum_{j\in J_\Gamma}b_j(\lambda_b)_j}
     {\sum_{i\in I_\Gamma}a_i^2+\sum_{j\in J_\Gamma}b_j^2}.
\]
In raw marginal coordinates the analogous quotient is modulo
$(\one_{I_\Gamma},-\one_{J_\Gamma})$.  If only one marginal family is varied
while the other is fixed, this reduces to the usual componentwise fixed-mass
projection of that marginal covector modulo constants on the varied component.
If each component mass is fixed separately in both marginal families, then
admissible raw variations obey
$\one_{I_\Gamma}^\top\dot a=0$ and
$\one_{J_\Gamma}^\top\dot b=0$ independently.  The raw cotangents are then
defined modulo $(\one_{I_\Gamma},0)$ and
$(0,\one_{J_\Gamma})$ independently; in logarithmic coordinates the
corresponding normals are $(a_{I_\Gamma},0)$ and
$(0,b_{J_\Gamma})$.  Thus the projection is determined by the actual marginal
parameterization: one joint normal when common component mass may vary, two
normals when that mass is fixed.

If surrounding differentiable maps feed these source variables from the input
quotient coordinates at step $t$, their VJPs compose with
\eqref{eq:source-log-marginal-adjoints}--\eqref{eq:score-source-adjoint} and
are then projected to $Q^*$.  Thus the tail recurrence used below has the
form
\[
\eta_t=M_t^\top\eta_{t+1}+\Pi_{Q^*}\psi_t,
\]
where $\psi_t$ denotes the aggregate non-transport cotangent delivered to the
input column-potential coordinates before quotient projection.  The displayed
source adjoints are the complete one-cycle ledger; they are not additional
applications of $M^\top$.
\end{proposition}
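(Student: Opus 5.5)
The plan is to read every claim as the transpose of the linear map in Proposition~\ref{prop:half-step-linearization}. I will use the Euclidean pairing $\langle\eta^+,\dot v^+\rangle$ and apply \eqref{eq:full-linearization-with-sources} term by term; for a finite cycle the same algebra goes through with $R_t,C_t$ in place of $R,C$. The homogeneous part gives $\langle\eta^+,C^\top R\dot v\rangle=\langle M^\top\eta^+,\dot v\rangle$, hence $\eta^-=M^\top\eta^+$. The marginal terms give
\[
\langle\eta^+,d_b\rangle-\langle\eta^+,C^\top d_a\rangle=\langle\eta^+,d_b\rangle+\langle -C\eta^+,d_a\rangle,
\]
which yields $\lambda_b=\eta^+$ and $\lambda_a=-C\eta^+$. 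For the score terms I expand the sums over active edges:
\[
\langle\eta^+,C^\top r(\dot s)\rangle=\sum_i (C\eta^+)_i\sum_{j}R_{ij}\dot s_{ij},
\qquad
\langle\eta^+,c(\dot s)\rangle=\sum_j\eta^+_j\sum_i C_{ij}\dot s_{ij}.
\]
Collecting the coefficient of $\dot s_{ij}$ gives \eqref{eq:score-source-adjoint}. The raw-coordinate versions then follow from the chain rule: $d_a=\dot a\oslash a$ contributes the factor $1/a_i$, $d_b=\dot b\oslash b$ contributes $1/b_j$, and $\dot s=\dot\theta/\varepsilon$ contributes $1/\varepsilon$.

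Next I will treat the quotient statements about marginals. For the admissible perturbations, I differentiate the componentwise mass balance $\sum_{I_\Gamma}a_i=\sum_{J_\Gamma}b_j$ required by Assumption~\ref{ass:fixed-support-feasible}. This gives the linear constraint $\sum a_i(d_a)_i=\sum b_j(d_b)_j$, whose normal vector is $(a_{I_\Gamma},-b_{J_\Gamma})$. Adding any multiple of that normal to the cotangent therefore leaves its pairing with every admissible perturbation unchanged, which proves the ``defined modulo'' claim. The displayed $\alpha_\Gamma$ is the standard least-squares coefficient for the Euclidean orthogonal projection onto the normal's complement.

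The remaining marginal cases follow the same pattern. In raw coordinates the normal is $(\one_{I_\Gamma},-\one_{J_\Gamma})$, because the constraint reads $\one^\top\dot a=\one^\top\dot b$ on each component. If only one family varies, the constraint reduces to fixed mass of that family, giving projection modulo constants. If both masses are fixed separately, there are two independent constraints, and therefore two normals.

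Finally I will derive the tail recurrence from the chain rule through the surrounding maps. The total input cotangent is $M_t^\top\eta_{t+1}$ plus the VJPs of the source variables pulled back to $v_t$, which together form $\psi_t$. Since $M_t\one=\one$, Definition~\ref{def:quotient-convention} gives $M_t^\top Q^*\subseteq Q^*$, so if $\eta_{t+1}\in Q^*$ the homogeneous term is already in $Q^*$ and only $\psi_t$ needs projecting. The statement that the source adjoints are not extra applications of $M^\top$ is immediate, because they pair with distinct input variables rather than with $\dot v$.

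I expect the main obstacle to be bookkeeping rather than depth. The delicate point is justifying that projection to $Q^*$ is the correct observable object: the cotangent is only meaningful on quotient coordinates, and the gauge direction $\one$ acts on potentials. I would handle this by checking that pairing any $\eta\in Q^*$ with $\dot v+c\one$ does not depend on $c$. I would also verify that the finite-cycle substitution needs no fixed-point property, only the row normalization of $R_t$ and the column normalization of $C_t$.
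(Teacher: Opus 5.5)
Your proposal is correct and follows essentially the same route as the paper: pair $\eta^+$ with the source-augmented linearization, read off the coefficients of $\dot v$, $d_a$, $d_b$ and each active $\dot s_{ij}$, pass to raw coordinates by the chain rule, and get the marginal quotients by duality with the linearized compatibility constraints. Your least-squares reading of $\alpha_\Gamma$ and the explicit check $M_t\one=\one\Rightarrow M_t^\top Q^*\subseteq Q^*$ are slightly more explicit than the paper. One small refinement: checking that $\eta\in Q^*$ pairs consistently with $\dot v+c\one$ shows only that $Q^*$ is the right dual space. The fact that the specific representative is the orthogonal projection $\Pi_0\psi_t$ comes from evaluating the source maps on the centered lift $\iota([v])=\Pi_0 v$, whose adjoint is $\Pi_0$, as the paper spells out in Definition~\ref{def:projected-source-cotangent-interface}.
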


\begin{proof}
Pair an arbitrary perturbation with
\eqref{eq:full-linearization-with-sources}:
\[
 \langle \eta^+,\dot v^+\rangle
 =
 \langle \eta^+,M\dot v\rangle
 +\langle\eta^+,d_b\rangle
 -\langle\eta^+,C^\top d_a\rangle
 +\langle\eta^+,C^\top r(\dot s)\rangle
 -\langle\eta^+,c(\dot s)\rangle .
\]
The first term is $\langle M^\top\eta^+,\dot v\rangle$, giving
$\eta^-=M^\top\eta^+$.  The next two terms give
$\lambda_b=\eta^+$ and $\lambda_a=-C\eta^+$.  For scores,
\[
\langle\eta^+,C^\top r(\dot s)\rangle
 =\sum_i (C\eta^+)_i\sum_j R_{ij}\dot s_{ij},
\qquad
\langle\eta^+,c(\dot s)\rangle
 =\sum_j\eta^+_j\sum_i C_{ij}\dot s_{ij}.
\]
Collecting the coefficient of each active $\dot s_{ij}$ gives
\eqref{eq:score-source-adjoint}.  The raw marginal formulas follow from
$d_a=\dot a\oslash a$ and $d_b=\dot b\oslash b$, and the raw-logit formula
from $s=\theta/\varepsilon$.  The componentwise marginal projection statement
is just duality for the linear compatibility constraint
$a_{I_\Gamma}^\top d_{a,I_\Gamma}-b_{J_\Gamma}^\top d_{b,J_\Gamma}=0$: adding
any multiple of its normal to the covector pairs to zero with every admissible
perturbation.  Finally, quotient projection to $Q^*$ is necessary for
cotangents returned to column-potential quotient coordinates; source-space
cotangents use the corresponding source constraints described above.
\end{proof}

\begin{lemma}[Bridge from fixed-support half-steps to quotient topology]
\label{lem:bridge-half-step-quotient}
Let $P$ be a realized active-support Sinkhorn plan with compatible marginals
$(a,b)$.  For homogeneous perturbations of the column log potential,
\[
\dot v^+=C^\top R\,\dot v=M\dot v .
\]
Hence the homogeneous reverse cotangent propagation on $Q^*$ is governed by
$M^\top$.
\end{lemma}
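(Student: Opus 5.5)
The plan is to obtain the lemma as the homogeneous specialization of Proposition~\ref{prop:half-step-linearization}, and then to transfer the statement to cotangents by duality with the quotient convention of Definition~\ref{def:quotient-convention}.

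First, since $P$ is a realized Sinkhorn plan with $P\one=a$ and $P^\top\one=b$, the pair $(u,v)$ is a fixed point of the row--column cycle $\mathcal{T}$. At such a point the row half-step applied to $v$ returns $u$, so the row-normalized plan $P^r$ equals $P$. The column half-step applied to $u$ then returns $v$, so $P^c=P$ as well. Thus $R=\diag(a)^{-1}P$ and $C=P\diag(b)^{-1}$ are exactly the matrices appearing in the proposition, and the finite-cycle caveat $P_t^r\neq P_t^c$ does not arise. Setting $\dot s=0$ and $\dot a=\dot b=0$ makes $d_a=d_b=0$ and $r(\dot s)=c(\dot s)=0$, so the source term $\zeta$ in \eqref{eq:full-linearization-with-sources} vanishes and $\dot v^+=C^\top R\,\dot v=M\dot v$. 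Gauge invariance of $P$ ensures that the choice of representative $(u,v)$ does not affect $R$, $C$ or $M$.

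Second, I descend to the quotient. Because $R\one=\one$ (row sums of $R$) and $C^\top\one=\one$ (column sums of $C$), we get $M\one=\one$. Hence $M$ maps $\operatorname{span}\{\one\}$ into itself and induces a well-defined linear map on $Q$. On a disconnected support, $R$ and $C$ are block-supported on the components $\Gamma$. For $j\in J_\Gamma$, $(R\one_{J_\Gamma})_i$ equals $1$ when $i\in I_\Gamma$ and $0$ otherwise, and $C^\top$ maps $\one_{I_\Gamma}$ to $\one_{J_\Gamma}$. Therefore $M\one_{J_\Gamma}=\one_{J_\Gamma}$ and $M$ preserves $G_\Omega$, so it descends to $Q_\Omega$ as well.

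Third, I dualize. As in the proof of Proposition~\ref{prop:adjoint-source-ledger}, for every $\dot v$ we have $\langle\eta^+,M\dot v\rangle=\langle M^\top\eta^+,\dot v\rangle$, so the pullback of an output cotangent is $M^\top\eta^+$. For $\eta\in Q^*$, the identity $\one^\top M^\top\eta=(M\one)^\top\eta=\one^\top\eta=0$ shows that $M^\top Q^*\subseteq Q^*$. The analogous computation with $\one_{J_\Gamma}$ gives $M^\top Q_\Omega^*\subseteq Q_\Omega^*$. Consequently $M^\top$ restricted to $Q^*$ is exactly the adjoint of the induced map on $Q$, and this is the claimed reverse propagation.

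The only delicate step is the first one: identifying both half-step plans with the single plan $P$, which is what licenses evaluating Proposition~\ref{prop:half-step-linearization} with one pair $(R,C)$. To make this airtight I would check directly from \eqref{eq:row-half-step} and \eqref{eq:column-half-step} that $u^+(v)=u$ and $v^+(u)=v$ follow from the marginal identities. The componentwise invariance of $G_\Omega$ is a routine block computation.
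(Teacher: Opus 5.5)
Your proposal is correct and follows essentially the same route as the paper: set $\dot s=0$ and $\dot a=\dot b=0$ in Proposition~\ref{prop:half-step-linearization}, read off $M^\top$ from the homogeneous part of the adjoint ledger, and use $M\one=\one$ to get invariance of $Q^*$. You also spell out two checks the paper leaves implicit: that a realized Sinkhorn plan is a fixed point at which both half-step plans coincide with $P$, and the componentwise invariance of $G_\Omega$ and $Q_\Omega^*$ on disconnected supports.
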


\begin{proof}
Set $\dot s=0$, $\dot a=0$, and $\dot b=0$ in
Proposition~\ref{prop:half-step-linearization}.  The resulting forward linear
map is $M=C^\top R$.  Equivalently, the homogeneous part of
Proposition~\ref{prop:adjoint-source-ledger} is $\eta^-=M^\top\eta^+$.  By
Definition~\ref{def:quotient-convention}, $M^\top$ preserves zero-mass
cotangents and is independent of the chosen centered representative.
\end{proof}

\begin{definition}[Transport topology certificate]
For a fixed support $\Omega$, score-induced plan $P$, and quotient operator
$M=C^\top R$, define
\[
\tau(M)=\frac12\max_{p,q}\sum_k |M_{pk}-M_{qk}|.
\]
We call $\tau(M)$ the one-step Dobrushin transport-topology certificate.  When
$\tau(M)<1$, the realized fixed-support quotient operator contracts all
zero-mass cotangents.  When $\tau(M)=1$, this certificate withholds a global
one-step contraction guarantee.
\end{definition}

\begin{theorem}[Support-to-kernel topology]
\label{thm:support-to-kernel-topology}
For $\alpha,\beta\in[n]$, the full-step derivative kernel satisfies
\[
M_{\alpha\beta}
=
\sum_i
\frac{P_{i\alpha}P_{i\beta}}{b_\alpha a_i}.
\]
If all active plan entries are strictly positive, then
\[
M_{\alpha\beta}>0
\quad\Longleftrightarrow\quad
\exists i\ \text{such that}\ (i,\alpha)\in\Omega
\ \text{and}\ (i,\beta)\in\Omega.
\]
Thus the support of each row of $M$ is exactly the two-hop column
co-neighborhood induced by the bipartite support $\Omega$.
\end{theorem}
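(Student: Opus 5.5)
The plan is to compute the entries of $M=C^\top R$ directly from the definitions and then read off positivity term by term. By definition $R=\diag(a)^{-1}P$ and $C=P\diag(b)^{-1}$, so $R_{i\beta}=P_{i\beta}/a_i$ and $C_{i\alpha}=P_{i\alpha}/b_\alpha$. The $(\alpha,\beta)$ entry of $C^\top R$ is therefore
\[
M_{\alpha\beta}=\sum_{i}C_{i\alpha}R_{i\beta}=\sum_i\frac{P_{i\alpha}P_{i\beta}}{b_\alpha a_i},
\]
which is the displayed formula. The marginals are strictly positive by hypothesis, so every denominator is finite and positive. No fixed-point property beyond the definitions of $R$ and $C$ is needed for the formula itself. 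As a sanity check, row-stochasticity follows by summing over $\beta$, using $\sum_\beta P_{i\beta}=a_i$ and then $\sum_i P_{i\alpha}=b_\alpha$.

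For the equivalence, I will observe that every summand $P_{i\alpha}P_{i\beta}/(b_\alpha a_i)$ is nonnegative. Hence $M_{\alpha\beta}>0$ exactly when at least one summand is strictly positive, that is, when some $i$ has both $P_{i\alpha}>0$ and $P_{i\beta}>0$. Since the plan vanishes off $\Omega$, and active entries are assumed strictly positive, $P_{i\gamma}>0$ holds if and only if $(i,\gamma)\in\Omega$. For the exponential parameterization $P_{ij}=\exp(s_{ij}+u_i+v_j)$ with finite scores this is automatic; in general it is supplied by the relative-interior clause of Assumption~\ref{ass:fixed-support-feasible}. Substituting gives $M_{\alpha\beta}>0$ iff $\exists i$ with $(i,\alpha),(i,\beta)\in\Omega$. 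The final sentence of the theorem is a restatement: the support of row $\alpha$ of $M$ is $\bigcup_{i\in\Omega^\alpha}\Omega_i$, the set of columns reachable from $\alpha$ by a column–row–column path in the bipartite support graph.

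There is no real obstacle here. The only point needing care is the positivity hypothesis. Without it, a realized plan on the boundary of $\Pi(a,b;\Omega)$ could have $P_{i\alpha}=0$ on an active edge, and then only the forward implication $M_{\alpha\beta}>0\Rightarrow$ shared row would survive. I will state explicitly that the reverse direction uses strict positivity on active edges. I will also note the diagonal case $\alpha=\beta$: it is covered by any $i\in\Omega^\alpha$, which exists because $b_\alpha>0$ forces column $\alpha$ to carry mass.
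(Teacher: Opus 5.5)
Your proposal is correct and is essentially the paper's proof: you substitute $C=P\diag(b)^{-1}$ and $R=\diag(a)^{-1}P$ into $M=C^\top R$, note that every summand is nonnegative, and use strict positivity on active edges to read off exactly when a summand is positive. One small wording fix: the relative-interior clause of the standing assumption only guarantees that \emph{some} positive feasible plan exists (so that finite-potential Sinkhorn plans exist), not that an arbitrary feasible plan is positive; since positivity is an explicit hypothesis of the theorem, this does not affect your argument.
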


\begin{proof}
Substitute $C=P\diag(b)^{-1}$ and $R=\diag(a)^{-1}P$ into $M=C^\top R$:
\[
M_{\alpha\beta}=\sum_i C_{i\alpha}R_{i\beta}
=\sum_i \frac{P_{i\alpha}}{b_\alpha}\frac{P_{i\beta}}{a_i}.
\]
Every summand is nonnegative.  Under strict positivity on active edges, a
summand is positive exactly when row $i$ is incident to both columns $\alpha$
and $\beta$.
\end{proof}

\begin{corollary}[Qualitative overlap criterion]
Under the same strict-positivity assumption, $\tau(M)<1$ if and only if every
pair of rows of the two-hop kernel support overlap.  Equivalently, $\tau(M)=1$
if and only if some two columns have disjoint two-hop co-neighborhoods.
\end{corollary}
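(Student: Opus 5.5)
The corollary follows from the standard fact that, for a row-stochastic matrix, the Dobrushin coefficient can be written through row overlaps, combined with the exact support description in Theorem~\ref{thm:support-to-kernel-topology}. For probability vectors $M_{p\cdot}$ and $M_{q\cdot}$ we have $\tfrac12\sum_k|M_{pk}-M_{qk}| = 1-\sum_k\min(M_{pk},M_{qk})$. This uses only that each row sums to one, which holds because $M\one=\one$ and $M$ is entrywise nonnegative. Maximizing over pairs $(p,q)$ gives
\[
\tau(M)=1-\min_{p,q}\sum_k\min(M_{pk},M_{qk}).
\]

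The argument then has two steps. First, the overlap $\sum_k\min(M_{pk},M_{qk})$ is a finite sum of nonnegative terms, so it is strictly positive exactly when some $k$ has $M_{pk}>0$ and $M_{qk}>0$. In other words, the overlap is positive exactly when the supports of rows $p$ and $q$ intersect. Second, because there are finitely many pairs, the minimum of the overlaps is positive if and only if every individual overlap is positive. Hence $\tau(M)<1$ exactly when every pair of rows of $M$ has intersecting supports. This is the first form of the criterion.

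For the equivalent form, we invoke Theorem~\ref{thm:support-to-kernel-topology} under the strict-positivity hypothesis. It identifies the support of row $\alpha$ of $M$ as the two-hop co-neighborhood $N_2(\alpha)=\{\beta:\exists i,\ (i,\alpha),(i,\beta)\in\Omega\}$. So rows $p$ and $q$ fail to overlap exactly when $N_2(p)\cap N_2(q)=\emptyset$. Negating the first form then gives: $\tau(M)=1$ if and only if some two columns have disjoint two-hop co-neighborhoods.

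The only step that needs care is the identity relating total variation to overlap, which is where exact row normalization is used. We also need to confirm that the maximum is attained: this is automatic, since it is a maximum over finitely many pairs, so $\tau(M)=1$ is equivalent to some single pair having zero overlap. One edge case remains. If a column $\alpha$ had no active edge, then $N_2(\alpha)$ would be empty and the theorem's formula would not apply. This cannot happen here, because $b_\alpha>0$ together with feasibility forces every column to carry an active edge, and $(i,\alpha)\in\Omega$ puts $\alpha\in N_2(\alpha)$.
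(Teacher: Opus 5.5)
Your proposal is correct and follows the paper's argument. The paper's proof is the same two steps: $\tau(M)<1$ exactly when every pair of rows of $M$ has positive common mass, and Theorem~\ref{thm:support-to-kernel-topology} identifies the row supports with the two-hop co-neighborhoods. You additionally make explicit the identity $\tfrac12\sum_k|M_{pk}-M_{qk}|=1-\sum_k\min(M_{pk},M_{qk})$, the finiteness of the maximum, and the fact that feasibility rules out columns with no active edge, all of which the paper leaves implicit.
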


\begin{proof}
Dobrushin's coefficient is strictly below one exactly when every pair of row
distributions has positive common mass.  Theorem~\ref{thm:support-to-kernel-topology}
identifies those row supports with the two-hop co-neighborhoods.
\end{proof}

\paragraph{Main theorem spine.}
The rest of the main theory asks when this overlap is forced rather than merely
observed.  The equal-score biregular case gives a graph-walk baseline;
plan-distortion and zero-temperature arguments explain how scores can expose
bad faces; the face-lattice theorem gives the exact score-uniform one-step
criterion; and the windowed face-lattice plus partition heat-bath results show
how scheduled sparse layers can mix even when every individual layer has
$\tau=1$.

\subsection{Equal-Score Topology Baseline and Plan Distortion}

\begin{theorem}[Biregular equal-score topology kernel]
\label{thm:biregular-topology-kernel}
Let $\Omega\subseteq[m]\times[n]$ be a fixed bipartite support with constant
positive row degree $d_r$ and constant positive column degree $d_c$. Use
uniform compatible marginals $a_i=1/m$ and $b_\alpha=1/n$, and set all active
effective scores equal. Let $E=|\Omega|=m d_r=n d_c$. Then the
Sinkhorn-scaled plan is
\[
P_{i\alpha}=\frac{1}{E}\one\{(i,\alpha)\in\Omega\}.
\]
Consequently,
\[
M_{\alpha\beta}
=
\frac{N_{\alpha\beta}}{d_c d_r},
\qquad
N_{\alpha\beta}
=
\#\{i:\ (i,\alpha)\in\Omega,\ (i,\beta)\in\Omega\}.
\]
Thus, in this baseline, $M$ is exactly the two-step random walk on the
bipartite support graph: from column $\alpha$, choose an incident row
uniformly, then choose an incident column of that row uniformly. In
particular, $\tau(M)$ and $\tau(M^K)$ are pure topology quantities.
\end{theorem}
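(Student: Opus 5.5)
The plan is to verify directly that the proposed plan is feasible and of the required exponential form; uniqueness of the Sinkhorn-scaled plan then identifies it. After that, Theorem~\ref{thm:support-to-kernel-topology} is substituted to read off $M$.

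\textbf{Step 1: feasibility.} Set $P_{i\alpha}=\one\{(i,\alpha)\in\Omega\}/E$. Row $i$ has exactly $d_r$ active entries, so
\[
\sum_\alpha P_{i\alpha}=d_r/E=d_r/(m d_r)=1/m=a_i .
\]
Column $\alpha$ has $d_c$ active entries, so
\[
\sum_i P_{i\alpha}=d_c/E=1/n=b_\alpha .
\]
Hence $P\in\Pi(a,b;\Omega)$, and $P$ is strictly positive on every active edge. The counting identity $E=md_r=nd_c$ also gives $\one^\top a=\one^\top b=1$ globally. Each connected component $\Gamma$ inherits the same identity, $|I_\Gamma|d_r=|J_\Gamma|d_c$, so its masses $|I_\Gamma|/m$ and $|J_\Gamma|/n$ agree. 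Assumption~\ref{ass:fixed-support-feasible} therefore holds.

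\textbf{Step 2: scaling form.} With all scores equal to some $s_0$, choose constant potentials $u_i=u_0$ and $v_\alpha=v_0$ with $s_0+u_0+v_0=-\log E$. Then $P_{ij}(u,v;s)$ equals the displayed plan. This pair is a Sinkhorn solution: it is a fixed point of the row-column cycle, as substitution into \eqref{eq:row-half-step}--\eqref{eq:column-half-step} returns the same potentials.

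The only point needing care is uniqueness. Any two Sinkhorn solutions on a fixed support give the same plan, because the plan is the unique minimizer of the strictly convex KL objective over $\Pi(a,b;\Omega)$, as in the classical scaling theory \citep{sinkhornknopp1967,idel2016review}. The potentials themselves are determined only up to the componentwise gauge, which does not change $P$. So "the Sinkhorn-scaled plan" is well defined and equals the displayed $P$.

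\textbf{Step 3: the kernel.} Substitute into Theorem~\ref{thm:support-to-kernel-topology}:
\[
M_{\alpha\beta}=\sum_i\frac{P_{i\alpha}P_{i\beta}}{b_\alpha a_i}
=\frac{nm}{E^2}\,N_{\alpha\beta}.
\]
Since $E^2=(nd_c)(md_r)$, this gives $M_{\alpha\beta}=N_{\alpha\beta}/(d_cd_r)$.

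For the random-walk reading, note that $C_{i\alpha}=P_{i\alpha}/b_\alpha=\one\{(i,\alpha)\in\Omega\}/d_c$. This is the uniform choice of a row incident to column $\alpha$. Similarly $R_{i\beta}=\one\{(i,\beta)\in\Omega\}/d_r$ is the uniform choice of a column incident to row $i$. So $M=C^\top R$ is exactly the two-step walk.

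Finally, $\tau(M)$ and $\tau(M^K)$ depend only on the matrix $N$ and the degrees, so they are pure topology quantities.
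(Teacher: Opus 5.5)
Your proposal is correct and follows essentially the same route as the paper: check that the uniform plan $P=E^{-1}\one_\Omega$ has the right row and column sums, invoke uniqueness of the entropic scaling plan (component gauges change only the potentials), and substitute $R_{i\alpha}=1/d_r$ and $C_{i\alpha}=1/d_c$ on active edges into $M=C^\top R$. Your explicit check of componentwise mass compatibility and of the constant fixed-point potentials is a more careful version of the one-line feasibility claim in the paper.
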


\begin{proof}
Since $|\Omega|=m d_r=n d_c$, the displayed plan has row sums
$d_r/|\Omega|=1/m$ and column sums $d_c/|\Omega|=1/n$. It is therefore a
valid Sinkhorn scaling of the equal-score active matrix; uniqueness of the
entropic matrix-scaling plan fixes this primal plan even if the support has
multiple compatible connected components. The dual scalings may have separate
component gauges, but $P$ and $M$ are unchanged. For active edges,
$R_{i\alpha}=P_{i\alpha}/a_i=1/d_r$ and
$C_{i\alpha}=P_{i\alpha}/b_\alpha=1/d_c$. Substituting into $M=C^\top R$
gives
\[
M_{\alpha\beta}
=
\sum_i \frac{\one\{(i,\alpha)\in\Omega\}\one\{(i,\beta)\in\Omega\}}
{d_c d_r}
=
\frac{N_{\alpha\beta}}{d_c d_r}.
\]
The row sum is one because each of the $d_c$ rows incident to $\alpha$ has
$d_r$ incident columns.
\end{proof}

Theorem~\ref{thm:biregular-topology-kernel} is the cleanest fixed-support
certificate baseline in this paper. For equal-score biregular supports, a
sparse attention topology has an exact Markov-chain representation: its
Sinkhorn linearization is the Markov chain on the column-overlap graph.
Score-dependent and non-biregular cases then become perturbations or
realized-plan certificates rather than graph-only guarantees. If the support
has multiple closed column-overlap components, the same formula holds
componentwise but the global Dobrushin coefficient remains one, so the theorem
does not certify mixing across disconnected topology.

\begin{theorem}[Plan-distortion stability around a topology kernel]
\label{thm:plan-distortion-stability}
Let $\Omega$ be biregular with row degree $d_r$, column degree $d_c$,
uniform marginals, and $E=|\Omega|$. Let $M_0$ be the equal-score topology
kernel from Theorem~\ref{thm:biregular-topology-kernel}. Let $P$ be any
other feasible plan on the same support with the same uniform marginals, and
write
\[
q_{i\alpha}=E P_{i\alpha}\qquad ((i,\alpha)\in\Omega).
\]
If for some $\kappa\ge1$,
\[
\kappa^{-1}\le q_{i\alpha}\le \kappa
\qquad\text{for every active edge }(i,\alpha)\in\Omega,
\]
then the realized derivative kernel $M=C^\top R$ satisfies the elementwise
comparison
\[
M\ge \kappa^{-2}M_0.
\]
Consequently,
\[
\tau(M)\le 1-\kappa^{-2}\bigl(1-\tau(M_0)\bigr).
\]
More generally, if row-stochastic kernels $M_t$ and reference kernels $M_t^0$
on the same finite state space satisfy
$M_t\ge \delta_t M_t^0$ elementwise with
$\delta_t\in[0,1]$ for $t=1,\ldots,K$, then
\[
M_K\cdots M_1
\ge
\left(\prod_{t=1}^K\delta_t\right)
M_K^0\cdots M_1^0
\]
and hence
\[
\tau(M_K\cdots M_1)
\le
1-
\left(\prod_{t=1}^K\delta_t\right)
\bigl(1-\tau(M_K^0\cdots M_1^0)\bigr).
\]
\end{theorem}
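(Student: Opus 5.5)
The plan is to prove the elementwise comparison $M\ge\kappa^{-2}M_0$ first, then derive the Dobrushin bound from a general minorization lemma, and finally apply that same lemma to products.

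\textbf{Step 1: elementwise comparison.} Theorem~\ref{thm:support-to-kernel-topology} gives $M_{\alpha\beta}=\sum_i P_{i\alpha}P_{i\beta}/(b_\alpha a_i)$. With uniform marginals, $b_\alpha a_i=1/(mn)$. Substituting $P_{i\alpha}=q_{i\alpha}/E$ yields
\[
M_{\alpha\beta}=\frac{mn}{E^2}\sum_i q_{i\alpha}q_{i\beta}\,\one\{(i,\alpha),(i,\beta)\in\Omega\}.
\]
Since $E=md_r=nd_c$, we have $mn/E^2=1/(d_rd_c)$. Hence $M_{\alpha\beta}=\frac{1}{d_cd_r}\sum_i q_{i\alpha}q_{i\beta}\one\{\cdot\}$. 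Each product satisfies $q_{i\alpha}q_{i\beta}\ge\kappa^{-2}$, and Theorem~\ref{thm:biregular-topology-kernel} gives $M_0=N/(d_cd_r)$. Together these give $M\ge\kappa^{-2}M_0$ entrywise. The argument needs only the lower bound on $q$; the upper bound is not used.

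\textbf{Step 2: minorization lemma.} Let $M$ and $M^0$ be row-stochastic with $M\ge\delta M^0$ and $\delta\in[0,1]$. For $\delta<1$, write $M=\delta M^0+(1-\delta)W$ with $W=(M-\delta M^0)/(1-\delta)$. Then $W$ is nonnegative, and it is row-stochastic because both $M$ and $M^0$ are. The Dobrushin coefficient is the operator seminorm of $\mu\mapsto\mu^\top M$ on zero-mass vectors in $\TV$, so it is convex along this decomposition. This gives
\[
\tau(M)\le\delta\tau(M^0)+(1-\delta)\tau(W)\le\delta\tau(M^0)+1-\delta=1-\delta\bigl(1-\tau(M^0)\bigr).
\]
For $\delta=1$, row-stochasticity of both kernels together with $M\ge M^0$ forces $M=M^0$, so the claim is trivial. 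Applying the lemma with $\delta=\kappa^{-2}$ gives the single-layer bound.

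An alternative route uses the overlap formula $1-\tau(M)=\min_{p,q}\sum_k\min(M_{pk},M_{qk})$ directly. Since $\min$ is monotone, the overlap of $M$ is at least $\delta$ times the overlap of $M^0$. I would present this version because it is shorter.

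\textbf{Step 3: products.} Nonnegative matrices preserve elementwise order under multiplication. Inducting on $K$, we have $M_K\cdots M_1\ge(\delta_K M_K^0)(M_{K-1}\cdots M_1)\ge\delta_K M_K^0\bigl(\prod_{t<K}\delta_t\bigr)M_{K-1}^0\cdots M_1^0$. Each step uses only nonnegativity of the factors. Products of row-stochastic matrices are row-stochastic, so the minorization lemma applies with $\delta=\prod_t\delta_t\in[0,1]$.

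\textbf{Main obstacle.} The only delicate point is justifying the overlap identity for $\tau$, equivalently the convexity of $\tau$, cleanly. I would use the standard identity $\frac12\sum_k|x_k-y_k|=1-\sum_k\min(x_k,y_k)$ for probability vectors, applied row-pairwise. This avoids any operator-norm argument.
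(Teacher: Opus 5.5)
Your proposal is correct and follows the paper's proof essentially step for step. You use the same rescaling $M_{\alpha\beta}=(d_cd_r)^{-1}\sum_i q_{i\alpha}q_{i\beta}$ over common rows compared against $M_0=N/(d_cd_r)$, the same mixture decomposition $M=\delta M^0+(1-\delta)W$ giving $\tau(M)\le 1-\delta(1-\tau(M^0))$, and the same order-preserving induction for products of nonnegative kernels. The overlap-formula variant you prefer, $1-\tau(M)=\min_{p,q}\sum_k\min\{M_{pk},M_{qk}\}\ge\delta\,(1-\tau(M^0))$, is an equivalent shortcut that only avoids constructing $W$ and treating $\delta=1$ separately.
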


\begin{proof}
With uniform marginals, $a_i=1/m$, $b_\alpha=1/n$, and
$E=m d_r=n d_c$. For the realized plan,
\[
M_{\alpha\beta}
=
\sum_i \frac{P_{i\alpha}P_{i\beta}}{b_\alpha a_i}
=
\frac{1}{d_cd_r}
\sum_{i:\,(i,\alpha),(i,\beta)\in\Omega} q_{i\alpha}q_{i\beta}.
\]
The edgewise distortion assumption gives
$q_{i\alpha}q_{i\beta}\ge \kappa^{-2}$ for every common incident row, while
$(M_0)_{\alpha\beta}$ is the same sum with each common-row contribution equal
to $1/(d_cd_r)$. This proves $M\ge\kappa^{-2}M_0$.

If two row-stochastic kernels $A,B$ satisfy $A\ge \delta B$ elementwise, then
$A=\delta B+(1-\delta)N$ for a row-stochastic kernel $N$ when $\delta<1$
(with the evident interpretation when $\delta=1$). Thus
\[
\tau(A)\le \delta\tau(B)+(1-\delta)
=1-\delta(1-\tau(B)).
\]
This gives the one-step Dobrushin bound. The product comparison follows by
iterating elementwise nonnegative matrix multiplication:
if $M_t\ge\delta_t M_t^0$, then every term in
$M_K\cdots M_1$ dominates the corresponding term in
$(\prod_t\delta_t)M_K^0\cdots M_1^0$. Applying the same mixture argument to
the product gives the final display.
\end{proof}

Theorem~\ref{thm:plan-distortion-stability} is the first bridge from pure
topology to score-dependent plans in this paper. It is not a claim that graph
support alone controls the magnitude of $\tau(M)$. It says that once a plan
distortion factor is observed or independently certified, the topology
certificate degrades continuously rather than disappearing. This is the
right form for score-robust CPU validation and for score-range corollaries.

\subsection{Score Gauges and Cycle Sensitivity}

\begin{proposition}[Additive scores are topology gauges]
\label{prop:additive-score-gauge}
Fix a support $\Omega$ and compatible marginals $a,b$, and assume strict
feasibility: there exists $Q\in\Pi(a,b;\Omega)$ with $Q_{ij}>0$ on every
fixed-support edge. Suppose a score pattern is row-plus-column separable on
each connected component of the fixed support:
\[
B_{ij}=\alpha_i+\beta_j+c_\Gamma
\qquad ((i,j)\in\Gamma),
\]
where $\Gamma$ ranges over connected components of the bipartite support.
Let $P_L$ be the entropic Sinkhorn plan for score $S=L B$, and let $P_0$ be
the plan for the zero score on the same support and marginals. Then
\[
P_L=P_0\qquad\text{for every }L.
\]
Consequently the derivative kernel $M=C^\top R$ is unchanged and the
plan-distortion factor relative to $P_0$ is $\kappa_L=1$.
In the uniform-marginal biregular case, the topology kernel remains exactly
the two-step support random walk from
Theorem~\ref{thm:biregular-topology-kernel}.
\end{proposition}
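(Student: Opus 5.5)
The plan is to absorb the separable score into the potentials and then invoke uniqueness of the entropic scaling plan. Let $(u^0,v^0)$ be Sinkhorn potentials for the zero score, so that $(P_0)_{ij}=\exp(u^0_i+v^0_j)$ on $\Omega$ with $P_0\one=a$ and $P_0^\top\one=b$. Existence of such a $P_0$ with full active support follows from the strict-feasibility hypothesis (Assumption~\ref{ass:fixed-support-feasible}), via the classical support criterion for matrix scaling \citep{sinkhornknopp1967,idel2016review}. For the score $S=LB$, define shifted potentials componentwise. For a row $i$ in component $\Gamma$, set $u^L_i=u^0_i-L\alpha_i-Lc_\Gamma$. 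For a column $j$, set $v^L_j=v^0_j-L\beta_j$. The constant $c_\Gamma$ is well defined per component, since every active edge $(i,j)$ lies in exactly one component, and isolated rows or columns carry no edges.

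Step one is the identity check on active edges. For $(i,j)\in\Gamma$,
\[
\exp\bigl(LB_{ij}+u^L_i+v^L_j\bigr)=\exp\bigl(L\alpha_i+L\beta_j+Lc_\Gamma+u^0_i-L\alpha_i-Lc_\Gamma+v^0_j-L\beta_j\bigr)=(P_0)_{ij}.
\]
Inactive entries are zero for both plans. So the plan generated by $(u^L,v^L)$ under score $LB$ equals $P_0$ entrywise, and therefore has marginals $a$ and $b$.

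Step two shows this plan is the Sinkhorn plan $P_L$. It has the exponential-family form $\exp(S_{ij}+u_i+v_j)$ on $\Omega$ and satisfies both marginal constraints, so it is a Sinkhorn solution for score $S=LB$. Uniqueness of the primal entropic scaling plan (the fact already used in the proof of Theorem~\ref{thm:biregular-topology-kernel}) then gives $P_L=P_0$. I expect this uniqueness invocation to be the only point needing care. If it must be justified rather than cited, I would use strict convexity: any solution of this form minimizes $\sum_{ij}P_{ij}(\log P_{ij}-S_{ij}-1)$ over $\Pi(a,b;\Omega)$, because the KKT conditions with multipliers $(u,v)$ are exactly the exponential form. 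That objective is strictly convex on the polytope, so the minimizer is unique. Strict feasibility makes the relative interior nonempty, so the minimizer is positive on every active edge, consistent with the exponential form.

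Step three handles the consequences. By definition $R=\diag(a)^{-1}P$ and $C=P\diag(b)^{-1}$ depend on the score only through the plan. So $M=C^\top R$ is identical for every $L$. Writing $q_{i\alpha}=(P_L)_{i\alpha}/(P_0)_{i\alpha}=1$ gives distortion factor $\kappa_L=1$ in the sense of Theorem~\ref{thm:plan-distortion-stability}. In the uniform-marginal biregular case, $P_0$ is the plan of Theorem~\ref{thm:biregular-topology-kernel}, so $M$ is exactly the two-step support random walk $N_{\alpha\beta}/(d_cd_r)$.
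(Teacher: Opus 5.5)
Your proof is correct, but it takes the dual route, while the paper's proof is purely primal.

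The paper notes that for every $P\in\Pi(a,b;\Omega)$ the linear term collapses to $\langle B,P\rangle=\alpha^\top a+\beta^\top b+\sum_\Gamma c_\Gamma m_\Gamma$, which is constant on the polytope. Hence $L\langle B,P\rangle+H(P)$ and $H(P)$ have the same maximizer, and it is unique by strict concavity of $H$. That one-line argument needs no potentials, no scaling-existence theorem and no separate uniqueness step. It would also survive without strict feasibility if the entropic plan is defined variationally.

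You instead absorb the separable score into the potentials via $u^L_i=u^0_i-L\alpha_i-Lc_\Gamma$ and $v^L_j=v^0_j-L\beta_j$. You check that the resulting exponential-form matrix is literally $P_0$, then invoke uniqueness of the scaled plan, which you justify through KKT and strict convexity. This costs two things:
\begin{itemize}
\item You depend on finite zero-score potentials existing, and hence on strict feasibility and the classical support criterion for matrix scaling.
\item Your uniqueness step essentially rebuilds the variational objective that the paper uses directly.
\end{itemize}

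In exchange, your route makes the word ``gauge'' literal. It writes down the new dual potentials explicitly and shows they move affinely in $L$. It is also the same dual-price computation the paper uses for the converse direction of Corollary~\ref{cor:cycle-space-score-sensitivity}. So in your version both directions of the cycle-space characterization are one calculation read in opposite directions.
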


\begin{proof}
For every feasible plan $P\in\Pi(a,b;\Omega)$,
\[
\langle B,P\rangle
=
\sum_i \alpha_i\sum_j P_{ij}
+
\sum_j \beta_j\sum_i P_{ij}
+
\sum_\Gamma c_\Gamma m_\Gamma
=
\alpha^\top a+\beta^\top b+\sum_\Gamma c_\Gamma m_\Gamma,
\]
where $m_\Gamma$ is the fixed marginal mass in component $\Gamma$. This is
constant over the transport polytope. Therefore maximizing
$L\langle B,P\rangle+H(P)$ is the same optimization problem as maximizing
$H(P)$ alone. The primal plan is thus $P_0$ for every $L$, so the derivative
kernel and the distortion factor are unchanged.
\end{proof}

On a connected support, the additive condition is equivalent to vanishing
alternating cycle sums for $B$. Thus row/column gauge scores and rank-one
Gibbs kernels $\exp(LB_{ij})=\exp(L\alpha_i)\exp(L\beta_j)$ cannot create plan
distortion. Score sensitivity is carried by non-additive cycle imbalance, not
by raw score range alone.

\begin{corollary}[Cycle space is the score-sensitive topology]
\label{cor:cycle-space-score-sensitivity}
Fix compatible positive marginals $a,b$ and assume strict feasibility on the
active-edge set $\Omega$. Let the fixed support have $c$ connected components,
with active row and column vertex sets $I$ and $J$. The quotient dimension of
score patterns that can affect the entropic plan, after modding out row/column
additive gauges on each component, is
\[
|\Omega|-|I|-|J|+c,
\]
the cycle rank of the bipartite support graph. In particular, if every
connected component is a tree and the compatible transport polytope is
strictly feasible, then $\Pi(a,b;\Omega)$ is a singleton. Hence $P_L$, the
derivative kernel $M=C^\top R$, and all quotient-cotangent certificates are
independent of the score pattern $B$.
\end{corollary}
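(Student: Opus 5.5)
The proof has three parts: identify the score-sensitive quotient as the cycle space, show that it genuinely parametrizes distinct plans, and specialize to forests.

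\emph{Dimension count.} Consider the linear map $\Phi:\mathbb{R}^{|I|}\times\mathbb{R}^{|J|}\times\mathbb{R}^{c}\to\mathbb{R}^{\Omega}$ given by $(\alpha,\beta,c_\Gamma)\mapsto(\alpha_i+\beta_j+c_\Gamma)_{(i,j)\in\Omega}$. The component constants are absorbed by the $\alpha$'s, so $\operatorname{im}\Phi$ equals the image of $(\alpha,\beta)\mapsto(\alpha_i+\beta_j)$. This is the transpose of the unsigned incidence map of the bipartite graph, which behaves like the signed incidence map once the column coordinates are flipped. Its kernel is spanned by the vectors $(\one_{I_\Gamma},-\one_{J_\Gamma})$, one per component, so it has dimension $c$. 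Therefore $\dim\operatorname{im}\Phi=|I|+|J|-c$, and the quotient $\mathbb{R}^\Omega/\operatorname{im}\Phi$ has dimension $|\Omega|-|I|-|J|+c$. This is the standard cycle rank. Equivalently, $(\operatorname{im}\Phi)^\perp$ is the space of edge weights with zero row and column sums on the support, i.e.\ the alternating cycle space. By Proposition~\ref{prop:additive-score-gauge}, scores in $\operatorname{im}\Phi$ do not move $P_L$, so the plan depends on $B$ only through its class in this quotient.

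\emph{Every quotient direction is visible.} Any quotient class can affect the plan, so the quotient is not smaller than claimed. The argument uses strict feasibility. By Assumption~\ref{ass:fixed-support-feasible}, the relative interior of $\Pi(a,b;\Omega)$ is nonempty and has tangent space $(\operatorname{im}\Phi)^\perp$, because its affine hull is cut out by the marginal constraints on $\Omega$. The entropic plan maximizes $\langle B,P\rangle+H(P)$ over this polytope. Its optimality condition says $\log P_L - B$ lies in $\operatorname{im}\Phi$. Suppose two scores $B,B'$ give the same interior plan. Then $B-B'\in\operatorname{im}\Phi$, so $B$ and $B'$ are in the same quotient class. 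Hence the map from the quotient to plans is injective, and the quotient dimension is exactly the number of score directions that affect the plan.

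\emph{Forest case.} If every component is a tree, then $|\Omega|=|I|+|J|-c$. The tangent space $(\operatorname{im}\Phi)^\perp$ is then zero, so the polytope's affine hull is a point and $\Pi(a,b;\Omega)$ is a singleton. One can also see this directly: peeling leaves determines every edge mass recursively from the marginals. Strict feasibility makes this unique plan positive on active edges, so it is the entropic plan for every $B$. Consequently $R$, $C$, $M=C^\top R$ and every certificate built from them are independent of $B$.

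The main obstacle is the visibility step, specifically justifying that the tangent space of the polytope is exactly $(\operatorname{im}\Phi)^\perp$ with no extra collapse. This needs the relative-interior point to be positive on every active edge, so that no coordinate is forced to zero. This is exactly what strict feasibility supplies, and I would invoke it explicitly at that point.
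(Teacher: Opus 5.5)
Your proposal is correct and follows essentially the same route as the paper: count the row/column gauge subspace as $|I|+|J|-c$ dimensional, invoke Proposition~\ref{prop:additive-score-gauge} for invariance, use the first-order (KKT) optimality condition at the strictly positive entropic plan to show that only gauge directions leave the plan fixed, and observe that forests have cycle rank zero, so the polytope is a single point. Your pairwise form of the converse ($P_B=P_{B'}$ implies $B-B'\in\operatorname{im}\Phi$) and your derivation of the singleton from $(\operatorname{im}\Phi)^\perp=0$ are slightly tidier than the paper's comparison against the zero score and its leaf-peeling argument, but they rest on the same ideas.
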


\begin{proof}
On a connected component, row-plus-column scores form a subspace of dimension
$|I_\Gamma|+|J_\Gamma|-1$, since adding a constant to all row prices and
subtracting it from all column prices gives the same edge scores. Summing over
components gives additive-score dimension $|I|+|J|-c$. Quotienting the
$|\Omega|$ active-edge score coordinates by this gauge subspace leaves
$|\Omega|-|I|-|J|+c$, the graph cycle rank. Proposition~\ref{prop:additive-score-gauge}
shows that these additive directions do not change the entropic optimization
problem.

Conversely, additive gauges are the only score directions that leave the
entropic plan unchanged. Let $P_0$ be the zero-score entropic plan and suppose
the score $B$ has the same entropic plan $P_0$. The KKT equations for the two
strictly concave entropy-regularized transport problems give dual prices
$(u^0,v^0)$ and $(u^B,v^B)$ such that, on every active edge,
\[
\log (P_0)_{ij}=-u_i^0-v_j^0
=B_{ij}-u_i^B-v_j^B .
\]
Thus $B_{ij}=(u_i^B-u_i^0)+(v_j^B-v_j^0)$ on the support, up to the usual
component gauges. Therefore a non-gauge score direction changes the entropic
plan for some finite score scale, and the score-sensitive quotient dimension
is exactly the cycle rank.

If the support is a forest, the cycle rank is zero. Equivalently, the transport
constraints have a unique feasible solution on each tree component: remove a
leaf and its incident edge recursively; the edge mass is forced by the leaf
marginal, and strict feasibility keeps the forced masses positive. Thus every
score pattern is a gauge direction on the support, and the plan and its
derivative kernel are fixed by the marginals and support alone.
\end{proof}

\subsection{Zero-Temperature Face Obstructions}

\begin{theorem}[Zero-temperature entropy-center obstruction]
\label{thm:zero-temperature-obstruction}
Let $\Pi(a,b;\Omega)$ be the compact transport polytope on a fixed feasible
support $\Omega$, and assume it contains a relative-interior point that is
positive on every active edge. Let $B$ be a fixed score pattern on $\Omega$.
For $L>0$, let $P_L$ be the entropic Sinkhorn plan for score $S=L B$ and
marginals $a,b$, equivalently the maximizer of
\[
L\langle B,P\rangle + H(P),
\qquad
H(P)=-\sum_{(i,j)\in\Omega}P_{ij}\log P_{ij},
\]
over $P\in\Pi(a,b;\Omega)$, with the convention $0\log0=0$. Let
\[
\mathcal{F}_B
=
\arg\max_{P\in\Pi(a,b;\Omega)}\langle B,P\rangle
\]
be the optimal face of the zero-temperature linear transport problem. Let
$P_B^{\mathrm{ent}}$ be the unique maximizer of $H$ over $\mathcal{F}_B$.
Then
\[
P_L\to P_B^{\mathrm{ent}}
\qquad\text{as }L\to\infty.
\]
Consequently, if an active edge $e\in\Omega$ has $P_e=0$ for every
$P\in\mathcal{F}_B$, then $(P_L)_e\to0$. In particular, any edgewise
distortion factor relative to a positive reference plan on all of $\Omega$
diverges.
\end{theorem}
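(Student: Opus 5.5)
The argument is the standard $\Gamma$-convergence / compactness argument for entropic regularization on a compact polytope; the only work beyond it is the entropy comparison on the optimal face. Rescale the objective to
\[
F_L(P)=\langle B,P\rangle+L^{-1}H(P),
\]
which has the same maximizer $P_L$ on $\Pi(a,b;\Omega)$. Three facts are available. First, $H$ is continuous and bounded on the compact polytope, say $0\le H\le H_{\max}$; nonnegativity holds once we normalize by total mass, and in any case $H$ is bounded. Second, $H$ is strictly concave, so $P_B^{\mathrm{ent}}$ is well defined on the nonempty compact convex face $\mathcal{F}_B$. Third, by the relative-interior assumption and the finite-score setting of Section~2, $P_L$ is the unique maximizer for each $L$. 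Let $V^\ast=\max_\Pi\langle B,P\rangle$.

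\textbf{Step 1: cluster points lie on $\mathcal{F}_B$.} Comparing $P_L$ with any point of $\mathcal{F}_B$ gives
\[
\langle B,P_L\rangle\ge V^\ast-L^{-1}H_{\max}.
\]
Hence $\langle B,P_L\rangle\to V^\ast$. By compactness, every sequence $L_k\to\infty$ has a convergent subsequence. By continuity of $P\mapsto\langle B,P\rangle$, every limit $\bar P$ lies in $\mathcal{F}_B$.

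\textbf{Step 2: the cluster point is the entropy center.} Compare $P_L$ with $P_B^{\mathrm{ent}}$. Since $\langle B,P_B^{\mathrm{ent}}\rangle=V^\ast\ge\langle B,P_L\rangle$, optimality of $P_L$ gives
\[
L^{-1}H(P_L)\ge L^{-1}H(P_B^{\mathrm{ent}})+\bigl(V^\ast-\langle B,P_L\rangle\bigr)\ge L^{-1}H(P_B^{\mathrm{ent}}).
\]
So $H(P_L)\ge H(P_B^{\mathrm{ent}})$ for all $L$. Along the convergent subsequence, continuity of $H$ (using $0\log 0=0$) yields $H(\bar P)\ge H(P_B^{\mathrm{ent}})$. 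Since $\bar P\in\mathcal{F}_B$, uniqueness of the entropy maximizer on the face forces $\bar P=P_B^{\mathrm{ent}}$. Every subsequence therefore has a further subsequence converging to the same point, and the whole family converges: $P_L\to P_B^{\mathrm{ent}}$.

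\textbf{Step 3: the consequences.} If an edge $e$ vanishes on all of $\mathcal{F}_B$, then $(P_B^{\mathrm{ent}})_e=0$, and coordinatewise convergence gives $(P_L)_e\to0$. For a positive reference plan $Q$ with $Q_e>0$, the edgewise ratio satisfies $Q_e/(P_L)_e\to\infty$. Hence any $\kappa_L$ satisfying $\kappa_L^{-1}\le (P_L)_e/Q_e$, as in Theorem~\ref{thm:plan-distortion-stability} after normalization, must diverge.

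\textbf{Expected obstacle.} The delicate point is Step 2. It needs continuity of $H$ up to the boundary of the polytope, where edges vanish. It also needs the sign bookkeeping showing $H(P_L)$ stays at least $H(P_B^{\mathrm{ent}})$; this bound, rather than an asymptotic expansion, is what avoids needing rates. I would also briefly justify that the Sinkhorn plan coincides with the unique maximizer of $L\langle B,P\rangle+H(P)$. This is the standard KKT identification $\log P_{ij}=Ls_{ij}+u_i+v_j$, valid under the positivity assumption (as in the proof of Corollary~\ref{cor:cycle-space-score-sensitivity}), so that the variational characterization is legitimate.
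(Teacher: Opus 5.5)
Your proof is correct and follows the paper's argument essentially step for step. Both use the $O(1/L)$ bound to place cluster points on $\mathcal{F}_B$, then compare with $P_B^{\mathrm{ent}}$ to get $H(P_L)\ge H(P_B^{\mathrm{ent}})$, and then use uniqueness of the entropy maximizer on the face to identify every subsequential limit. The differences are cosmetic: you rescale by $1/L$, and you deduce $(P_L)_e\to0$ directly from the convergence rather than through the paper's separate compactness argument. One small fix: $H\ge0$ can fail when the total mass exceeds one, so use the two-sided bound $|H(P)-H(P^\star)|\le C$ as the paper does, which your remark that $H$ is bounded already covers.
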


\begin{proof}
Let $M_B=\max_{P\in\Pi(a,b;\Omega)}\langle B,P\rangle$ and choose
$P^\star\in\mathcal{F}_B$. Since $H$ is strictly concave and continuous on the
compact polytope, its restriction to the compact convex face
$\mathcal{F}_B$ has a unique maximizer $P_B^{\mathrm{ent}}$.

Since $\Pi(a,b;\Omega)$ is compact and $H$ is bounded above and below on it,
there is a finite constant $C$ such that
$|H(P)-H(P^\star)|\le C$ for every feasible $P$. Optimality of $P_L$ gives
\[
L\langle B,P_L\rangle + H(P_L)
\ge
L M_B + H(P^\star),
\]
hence $\langle B,P_L\rangle\ge M_B-C/L$. Every accumulation point $\bar P$ of
$P_L$ is feasible and therefore satisfies
$\langle B,\bar P\rangle=M_B$, so $\bar P\in\mathcal{F}_B$.

Now use $P_B^{\mathrm{ent}}$ as the comparison point. Optimality gives
\[
L\langle B,P_L\rangle + H(P_L)
\ge
L M_B + H(P_B^{\mathrm{ent}}).
\]
Writing $g_L=M_B-\langle B,P_L\rangle\ge0$, this says
$H(P_L)\ge H(P_B^{\mathrm{ent}})+L g_L\ge H(P_B^{\mathrm{ent}})$. Therefore
any accumulation point $\bar P\in\mathcal{F}_B$ satisfies
$H(\bar P)\ge H(P_B^{\mathrm{ent}})$. By uniqueness of the entropy maximizer
on $\mathcal{F}_B$, $\bar P=P_B^{\mathrm{ent}}$. All subsequential limits are
the same, hence the whole sequence converges.

If an edge $e$ has zero mass on every point of $\mathcal{F}_B$ but some
subsequence has $(P_{L_k})_e\ge\epsilon>0$, compactness gives a further
subsequence converging to an element of $\mathcal{F}_B$ with edge mass at
least $\epsilon$, a contradiction. Thus $(P_L)_e\to0$. If the reference plan is
positive on $e$, the ratio between the reference edge mass and $(P_L)_e$
diverges.
\end{proof}

Theorem~\ref{thm:zero-temperature-obstruction} gives the zero-temperature
reading of the topology certificate. Equal-score topology describes the reference Markov
operator. Large structured scores select an entropy-centered point of an
optimal face of a linear transport problem. Edges outside that face disappear,
and the realized derivative kernel can inherit the topology of the selected
face rather than the topology of the nominal architecture. This is the
finite-dimensional entropic-penalty central-path phenomenon specialized to
transport polytopes; see \citet{cominetti1994asymptotic} and
\citet{weed2018explicit} for general LP analyses.

\begin{corollary}[Exposed bad-face obstruction to graph-only contraction]
\label{cor:exposed-bad-face-obstruction}
Assume the setting of Theorem~\ref{thm:zero-temperature-obstruction}. Suppose
there is a score direction $B$ such that the entropy-centered point
$P_B^{\mathrm{ent}}$ of the exposed zero-temperature face has derivative
kernel
\[
M_B
=
\bigl(P_B^{\mathrm{ent}}\diag(b)^{-1}\bigr)^\top
\bigl(\diag(a)^{-1}P_B^{\mathrm{ent}}\bigr)
\]
with $\tau(M_B)=1$. Then for every $\varepsilon>0$ there exists $L$ such that
the entropic plan for score $LB$ has derivative kernel $M_L$ satisfying
\[
\tau(M_L)>1-\varepsilon .
\]
Thus this support/marginal instance admits no score-uniform global
column-quotient upper bound $\tau(M)\le \rho<1$ over arbitrary score
directions with those fixed compatible marginals.
\end{corollary}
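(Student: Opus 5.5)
The argument is a continuity argument: combine Theorem~\ref{thm:zero-temperature-obstruction} with continuity of the map from a plan to its derivative kernel, and then continuity of $\tau$. Three steps carry it out.

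\emph{Step 1 (plan convergence).} By Theorem~\ref{thm:zero-temperature-obstruction}, under the stated relative-interior hypothesis, the entropic plans $P_L$ for score $LB$ converge to $P_B^{\mathrm{ent}}$ as $L\to\infty$.

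\emph{Step 2 (kernel convergence).} Consider the map
\[
P\mapsto M(P)=\bigl(P\diag(b)^{-1}\bigr)^\top\bigl(\diag(a)^{-1}P\bigr),
\]
entrywise $M(P)_{\alpha\beta}=\sum_i P_{i\alpha}P_{i\beta}/(b_\alpha a_i)$ as in Theorem~\ref{thm:support-to-kernel-topology}. This is a polynomial in the entries of $P$, with the marginals $a,b$ fixed and positive. It is therefore continuous on all of $\mathbb{R}^{m\times n}$, including plans with zero entries such as $P_B^{\mathrm{ent}}$.

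We must check that the formula used for $M_B$ in the statement is the same map evaluated at the limit. At each $L$, $P_L$ is a balanced plan with marginals $(a,b)$, so $R_L=\diag(a)^{-1}P_L$ and $C_L=P_L\diag(b)^{-1}$, and $M_L=M(P_L)$. The limit $P_B^{\mathrm{ent}}$ also lies in $\Pi(a,b;\Omega)$, so $M_B=M(P_B^{\mathrm{ent}})$ is row-stochastic. Hence $M_L\to M_B$ entrywise.

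\emph{Step 3 (continuity of $\tau$).} The Dobrushin coefficient
\[
\tau(M)=\tfrac12\max_{p,q}\sum_k|M_{pk}-M_{qk}|
\]
is a finite maximum of continuous functions, so it is continuous; it is in fact $1$-Lipschitz in the $\ell_\infty$-of-rows $\ell_1$ norm. Therefore $\tau(M_L)\to\tau(M_B)=1$. Given $\varepsilon>0$, choose $L$ large enough that $\tau(M_L)>1-\varepsilon$.

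The final sentence of the corollary follows by contradiction. Suppose a bound $\tau(M)\le\rho<1$ held for all score directions. Taking $\varepsilon=1-\rho$ would give some score $LB$ with $\tau(M_L)>\rho$, which contradicts the bound.

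The main point to watch is Step 2: confirming that $M_L$ is computed from the single balanced plan $P_L$ rather than from finite-cycle half-step plans, and that continuity does not break when the limit plan has vanishing entries. The second issue is harmless because no division by plan entries ever occurs; only the fixed positive marginals appear in denominators. There is nothing else substantive, so the proof is essentially a composition of continuous maps applied to the convergence established earlier.
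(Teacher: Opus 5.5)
Your proposal is correct and follows the paper's proof essentially step for step. It combines plan convergence from Theorem~\ref{thm:zero-temperature-obstruction} with continuity of $P\mapsto M(P)$ (only the fixed positive marginals appear in denominators) and continuity of $\tau$ as a finite maximum of row-pair total-variation distances, which gives $\tau(M_L)\to\tau(M_B)=1$; your check that $M_L$ comes from the single balanced plan $P_L$ and your contradiction argument for the final sentence spell out points the paper leaves implicit.
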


\begin{proof}
Theorem~\ref{thm:zero-temperature-obstruction} gives
$P_L\to P_B^{\mathrm{ent}}$. The map
\[
P\mapsto
\bigl(P\diag(b)^{-1}\bigr)^\top
\bigl(\diag(a)^{-1}P\bigr)
\]
is continuous because the positive marginals $a,b$ are fixed. Dobrushin's
coefficient is a maximum over finitely many continuous row-pair
total-variation distances, hence it is continuous. Therefore
$\tau(M_L)\to \tau(M_B)=1$, which gives the claim.
\end{proof}

For background on the faces and combinatorics of transportation polytopes,
see \citet{deloera2013transportation}. The characterization below couples this
polyhedral structure to overlap of the induced quotient kernels.

\begin{theorem}[Face-lattice characterization of score-uniform global mixing]
\label{thm:face-lattice-uniform-mixing}
Let
\[
\mathcal P=\Pi(a,b;\Omega)
\]
be a nonempty compact transport polytope on a fixed finite support, with
positive compatible marginals and a relative-interior point positive on every
edge of $\Omega$. This strict-feasibility hypothesis is essential: finite
scores then produce plans in $\operatorname{relint}(\mathcal P)$, while
boundary faces appear only as zero-temperature limits. For any feasible plan
$P\in\mathcal P$, define
\[
M(P)
=
\bigl(P\diag(b)^{-1}\bigr)^\top
\bigl(\diag(a)^{-1}P\bigr),
\qquad
\Gamma(P)
=
\min_{\alpha,\beta}
\sum_\gamma \min\{M(P)_{\alpha\gamma},M(P)_{\beta\gamma}\}.
\]
Thus $\tau(M(P))=1-\Gamma(P)$. For a nonempty face $F$ of
$\mathcal P$, write
\[
\Omega_F
=
\{(i,\alpha)\in\Omega:\text{ some }P\in F\text{ has }P_{i\alpha}>0\}.
\]
Say that $F$ has pairwise two-hop column overlap if, for every pair of
columns $\alpha,\beta$, there is a column $\gamma$ such that each of
$\alpha$ and $\beta$ shares an active row with $\gamma$ inside $\Omega_F$:
\[
\exists i,k\quad
(i,\alpha),(i,\gamma),(k,\beta),(k,\gamma)\in\Omega_F .
\]
Then the following are equivalent.
\begin{enumerate}
\item There exists $\delta>0$ such that every finite-score Sinkhorn plan on
      the fixed support satisfies $\Gamma(P_\Omega(S;a,b))\ge\delta$, or
      equivalently $\tau(M(P_\Omega(S;a,b)))\le 1-\delta$.
\item Every nonempty face of $\mathcal P$ has pairwise two-hop column
      overlap.
\end{enumerate}
Equivalently, a fixed support and compatible marginals force score-uniform
one-step global column-quotient mixing exactly when no feasible face can
collapse the derivative kernel into two rows with disjoint support. Componentwise
contraction would require restating the condition on the corresponding
component quotient.
\end{theorem}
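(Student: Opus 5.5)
The plan is to reduce both directions to the single-plan support computation of Theorem~\ref{thm:support-to-kernel-topology}, applied to an arbitrary feasible $P$ rather than to a plan positive on all of $\Omega$. The argument gives $M(P)_{\alpha\gamma}=\sum_i P_{i\alpha}P_{i\gamma}/(b_\alpha a_i)$. Hence $M(P)_{\alpha\gamma}>0$ exactly when some row $i$ has $P_{i\alpha}>0$ and $P_{i\gamma}>0$. Rows $\alpha,\beta$ of $M(P)$ have positive common mass $\sum_\gamma\min\{M_{\alpha\gamma},M_{\beta\gamma}\}$ exactly when such a $\gamma$ exists. Therefore $\Gamma(P)>0$ if and only if the pairwise two-hop overlap condition holds with $\Omega_F$ replaced by $\supp(P)$.

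The polyhedral input is the standard fact that every $P\in\mathcal P$ lies in the relative interior of a unique face $F_P$ (its minimal face). Moreover $\supp(P)=\Omega_{F_P}$. One inclusion is trivial. For the other, if some $Q\in F_P$ is positive on an edge $e$, then since $P\in\operatorname{relint}F_P$ we can write $P=\lambda Q+(1-\lambda)Q'$ with $Q'\in F_P$ and $\lambda>0$, so $P_e>0$. I also record that $\Gamma$ is continuous on $\mathcal P$. The marginals are fixed and positive, so $P\mapsto M(P)$ is polynomial, and $\Gamma$ is a finite minimum of finite sums of minima of continuous functions.

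For (2)$\Rightarrow$(1), condition~(2) applied to $F_P$ gives $\Gamma(P)>0$ for every $P\in\mathcal P$. Since $\Gamma$ is continuous and positive on the compact set $\mathcal P$, it attains a minimum $\delta>0$. Every finite-score Sinkhorn plan lies in $\mathcal P$, which gives (1), and $\tau=1-\Gamma$ gives the equivalent form. This direction does not need the strict-feasibility hypothesis beyond nonemptiness.

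For (1)$\Rightarrow$(2), I argue by contraposition. Suppose a nonempty face $F$ fails overlap. Every face of a polytope is exposed, so there is a score direction $B$ on $\Omega$ with $F=\mathcal F_B=\arg\max_{\mathcal P}\langle B,P\rangle$. By Theorem~\ref{thm:zero-temperature-obstruction}, which needs the strict-feasibility hypothesis so that the finite-score plans $P_{LB}$ exist, we have $P_{LB}\to P_B^{\mathrm{ent}}$, the entropy maximizer on $F$. If $\supp(P_B^{\mathrm{ent}})=\Omega_F$, then the first paragraph gives $\Gamma(P_B^{\mathrm{ent}})=0$. Corollary~\ref{cor:exposed-bad-face-obstruction}, or directly continuity of $\Gamma$, then gives $\Gamma(P_{LB})\to0$. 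This excludes any uniform $\delta>0$.

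The step I expect to be the main obstacle is showing that the entropy center of $F$ is positive on all of $\Omega_F$, i.e.\ lies in $\operatorname{relint}F$. I would use the usual infinite-slope argument. Suppose $P^\ast=P_B^{\mathrm{ent}}$ vanished on some $e\in\Omega_F$. Pick $Q\in\operatorname{relint}F$, so $Q$ is positive on all of $\Omega_F$, and set $P_t=(1-t)P^\ast+tQ\in F$. The one-sided derivative of $H(P_t)$ at $t=0^+$ contains the term $-Q_e\log(tQ_e)\to+\infty$, while all other terms stay bounded. Hence $H(P_t)>H(P^\ast)$ for small $t>0$, contradicting maximality. With this, $\supp(P_B^{\mathrm{ent}})=\Omega_F$, and the contraposition is complete.
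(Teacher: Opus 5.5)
Your proposal is correct and follows essentially the same route as the paper. The shared steps are the support-to-kernel identity, minimal-face support equal to $\supp(P)$, compactness for (2)$\Rightarrow$(1), and exposed face plus entropy-center convergence and continuity for the converse; the one variation is that your relative-interior convex-combination argument for $\supp(P)=\Omega_{F_P}$ replaces the paper's appeal to coordinate faces of standard-form polytopes.

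Note that the step you flag as the main obstacle (which the paper also proves) is not needed. Failure of pairwise two-hop overlap is inherited by any subset of $\Omega_F$, and every $P\in F$ has $\supp(P)\subseteq\Omega_F$, so already $\Gamma\equiv 0$ on $F$; hence $\Gamma(P_{LB})\to 0$ follows just from accumulation points of $P_{LB}$ lying in $F$ and continuity of $\Gamma$.
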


\begin{proof}
First, $M(P)$ and $\Gamma(P)$ are continuous functions of $P$ on
$\mathcal P$, because the marginals $a,b$ are fixed and positive and
$\Gamma$ is a minimum of finitely many continuous row-overlap functions. For a
plan $P$, the support-to-kernel identity gives
\[
M(P)_{\alpha\gamma}>0
\quad\Longleftrightarrow\quad
\exists i\ \text{with}\ P_{i\alpha}>0\ \text{and}\ P_{i\gamma}>0 .
\]
Transportation polytopes are standard-form polytopes
$\{x\ge0:Ax=c\}$, so every face is obtained by setting a subset of edge
variables to zero. Hence the minimal face containing $P$ has support
$\supp(P)$, and the relative interior of a face $F$ is positive exactly on
$\Omega_F$. Therefore $\Gamma(P)>0$ if and only if the minimal face containing
$P$ has pairwise two-hop column overlap.

Assume every nonempty face has pairwise two-hop column overlap. Then
$\Gamma(P)>0$ for every $P\in\mathcal P$, including boundary plans. By
compactness and continuity,
\[
\delta_*=\min_{P\in\mathcal P}\Gamma(P)>0.
\]
Every finite-score Sinkhorn plan lies in $\mathcal P$, so
$\Gamma(P_\Omega(S;a,b))\ge\delta_*$ for every finite score matrix.

Conversely, suppose a nonempty face $F$ fails pairwise two-hop column overlap.
Every face of a finite polytope is exposed, so there is a score direction
$B$ whose maximizers over $\mathcal P$ are exactly $F$. By
Theorem~\ref{thm:zero-temperature-obstruction}, the entropic Sinkhorn plan for
score $LB$ converges as $L\to\infty$ to the unique entropy maximizer
$P_F^{\mathrm{ent}}$ over $F$. The entropy maximizer over a nonempty face lies
in $\operatorname{relint}(F)$: if an edge in $\Omega_F$ had zero mass, moving
from the maximizer a small distance toward any relative-interior point of $F$
would increase entropy to first order because
$-\partial_x(x\log x)\to+\infty$ as $x\downarrow0$. Thus the positive support
of $P_F^{\mathrm{ent}}$ is exactly $\Omega_F$.

Since $\Omega_F$ fails pairwise two-hop column overlap, two rows of
$M(P_F^{\mathrm{ent}})$ have disjoint support. Hence
$\Gamma(P_F^{\mathrm{ent}})=0$ and
$\tau(M(P_F^{\mathrm{ent}}))=1$. Continuity gives
$\Gamma(P_\Omega(LB;a,b))\to0$, so no positive score-uniform lower bound
$\delta$ can hold over finite scores.
\end{proof}

\begin{corollary}[Score-uniform mixing dichotomy]
\label{cor:score-uniform-mixing-dichotomy}
Under the hypotheses of
Theorem~\ref{thm:face-lattice-uniform-mixing}, exactly one of the following
alternatives holds.
\begin{enumerate}
\item There is a constant $\rho<1$, depending only on
      $(a,b,\Omega)$, such that every finite-score Sinkhorn plan on the fixed
      support satisfies $\tau(M(P_\Omega(S;a,b)))\le\rho$.
\item For every $\varepsilon>0$, there is a finite score matrix $S$ on the
      same support such that
      \[
      \tau(M(P_\Omega(S;a,b)))>1-\varepsilon.
      \]
\end{enumerate}
The first alternative holds exactly when every nonempty feasible face of
$\Pi(a,b;\Omega)$ has pairwise two-hop column overlap.
\end{corollary}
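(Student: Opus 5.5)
The corollary is a repackaging of Theorem~\ref{thm:face-lattice-uniform-mixing}. The plan is to translate its two equivalent conditions into the two alternatives, then show the alternatives are mutually exclusive and that one of them always holds.

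First, suppose every nonempty face of $\Pi(a,b;\Omega)$ has pairwise two-hop column overlap. The implication (2)$\Rightarrow$(1) of Theorem~\ref{thm:face-lattice-uniform-mixing} gives
\[
\delta_*=\min_{P\in\mathcal P}\Gamma(P)>0 .
\]
Set $\rho=1-\delta_*$. Since $\tau(M(P))=1-\Gamma(P)$, every finite-score plan satisfies $\tau\le\rho<1$. The constant $\delta_*$ is a minimum over the polytope $\mathcal P$, which is determined by $(a,b,\Omega)$ alone, so $\rho$ depends only on $(a,b,\Omega)$ and not on $S$. This is alternative~1.

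Second, suppose some nonempty face $F$ fails pairwise two-hop column overlap. The converse half of the proof of Theorem~\ref{thm:face-lattice-uniform-mixing}, which is exactly Corollary~\ref{cor:exposed-bad-face-obstruction} applied to an exposing direction $B$ for $F$, gives $\tau(M(P_\Omega(LB;a,b)))\to 1$ as $L\to\infty$. For each $\varepsilon>0$, choosing $L$ large yields a finite score matrix $S=LB$ with $\tau>1-\varepsilon$. This is alternative~2.

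Because the face condition either holds or fails, at least one alternative always holds. The two cannot hold together: if alternative~1 holds with some $\rho<1$, take $\varepsilon=1-\rho$ in alternative~2; it would produce a finite score with $\tau>\rho$, contradicting alternative~1. This exclusivity, together with the two implications above, also establishes the final sentence: alternative~1 holds exactly when every face overlaps.

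No step is a real obstacle. The only point needing care is that $\rho$ depends only on $(a,b,\Omega)$, and this holds because the compactness minimum in the theorem is taken over $\mathcal P$, which involves no score.
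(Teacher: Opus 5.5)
Your proposal is correct and follows the paper's proof essentially step for step. The face-overlap case takes $\rho=1-\delta$ from the compactness minimum over $\mathcal P$. The failing case uses an exposing direction $B$ and entropy-center convergence to drive $\tau(M(P_\Omega(LB;a,b)))\to1$. Exclusivity follows by taking $\varepsilon=1-\rho$. Your extra remarks are accurate refinements of the same argument: $\rho$ depends only on $(a,b,\Omega)$, and the final sentence follows from exhaustiveness plus exclusivity.
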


\begin{proof}
If every nonempty face has pairwise two-hop column overlap, the theorem gives
a positive lower bound $\Gamma(P)\ge\delta$ for every finite-score plan, so
setting $\rho=1-\delta$ proves the first alternative. If some face fails the
overlap condition, the converse part of the theorem constructs an exposed
score direction $B$ for which
$\Gamma(P_\Omega(LB;a,b))\to0$ as $L\to\infty$. Choosing $L$ large enough
gives $\tau(M(P_\Omega(LB;a,b)))>1-\varepsilon$, which proves the second
alternative. The alternatives are mutually exclusive because
$\tau(M)\le\rho<1$ for all finite scores is incompatible with finite scores
whose coefficients exceed $1-\varepsilon$ for every $\varepsilon>0$.
\end{proof}

Theorem~\ref{thm:face-lattice-uniform-mixing} and
Corollary~\ref{cor:score-uniform-mixing-dichotomy} are the sharp
fixed-support/marginal results of this paper. They explain why equal-score
topology can be predictive while arbitrary scores remain dangerous: a
nominally well-connected support may still have a bad boundary face, and
large scores can expose that face. They also give a concrete design target.
To make topology force mixing, one must rule out bad feasible faces, or else
work with realized-plan, windowed, or plan-distortion certificates.

Exact checking should not be oversold. For a fixed tested column pair
$\alpha,\beta$, the bad-face search can be organized as a finite disjunction
of ordinary transport-feasibility problems: after guessing the rows carrying
positive mass into $\alpha$ and $\beta$ and guessing, for each intermediate
column, on which side the two-hop overlap is killed, each resulting subproblem
has standard Hall/Gale or max-flow certificates. The difficult part is the
exponential support-pattern choice. Thus the face-enumeration validator is a
tiny diagnostic, while Corollaries~\ref{cor:capacity-deficit-forced-bus} and
\ref{cor:heavy-universal-bus-row} are checkable sufficient architecture
certificates.

\begin{corollary}[Capacity-deficit forced bus edges]
\label{cor:capacity-deficit-forced-bus}
Assume the hypotheses of
Theorem~\ref{thm:face-lattice-uniform-mixing}. For an active edge
$(i,\alpha)\in\Omega$, define
\[
d_{i\alpha}
=
\left[
b_\alpha-\sum_{k:\,(k,\alpha)\in\Omega,\ k\ne i}a_k
\right]_+,
\]
and set $d_{i\alpha}=0$ for inactive edges. Then every feasible plan
satisfies $P_{i\alpha}\ge d_{i\alpha}$. Let $B$ be a set of bus rows. Suppose
that for every pair of distinct columns $\alpha,\beta$, there exists
$i\in B$ with $d_{i\alpha}>0$ and $d_{i\beta}>0$. Then every feasible face has
pairwise two-hop column overlap and the support--marginal pair has a score-uniform one-step
global column-quotient certificate. More quantitatively,
\[
\tau(M(P))\le1-\gamma_0
\qquad\text{for every feasible }P,
\]
where
\[
\gamma_0
=
\min_{\alpha\ne\beta}
\max_{i\in B:\,d_{i\alpha}d_{i\beta}>0}
\sum_{\gamma:\,d_{i\gamma}>0}
\min\left\{
\frac{d_{i\alpha}d_{i\gamma}}{b_\alpha a_i},
\frac{d_{i\beta}d_{i\gamma}}{b_\beta a_i}
\right\}.
\]
\end{corollary}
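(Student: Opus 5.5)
We first prove the pointwise lower bound $P_{i\alpha}\ge d_{i\alpha}$, then bound the kernel entries and the overlap $\Gamma$ from below, and finally read off the face-lattice condition.

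\emph{Step 1: the forced mass bound.} Fix a feasible $P\in\Pi(a,b;\Omega)$ and an active edge $(i,\alpha)$. The column constraint gives
\[
b_\alpha=\sum_{k:(k,\alpha)\in\Omega}P_{k\alpha}.
\]
For each $k\ne i$, the row constraint and nonnegativity give $P_{k\alpha}\le\sum_j P_{kj}=a_k$. Hence
\[
P_{i\alpha}\ge b_\alpha-\sum_{k\ne i,(k,\alpha)\in\Omega}a_k .
\]
Together with $P_{i\alpha}\ge0$ this yields $P_{i\alpha}\ge d_{i\alpha}$. For inactive edges the bound holds trivially.

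\emph{Step 2: lower bounds on kernel entries.} Apply the support-to-kernel identity of Theorem~\ref{thm:support-to-kernel-topology} to an arbitrary feasible $P$. That formula is pure algebra from $M=C^\top R$ and does not need strict positivity. Every summand is nonnegative, so keeping only the bus row $i$ gives
\[
M(P)_{\alpha\gamma}\ge \frac{P_{i\alpha}P_{i\gamma}}{b_\alpha a_i}\ge\frac{d_{i\alpha}d_{i\gamma}}{b_\alpha a_i}.
\]
\emph{Step 3: bounding the overlap.} Fix distinct columns $\alpha\ne\beta$ and any $i\in B$ with $d_{i\alpha}d_{i\beta}>0$. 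Restrict the overlap sum $\sum_\gamma\min\{M_{\alpha\gamma},M_{\beta\gamma}\}$ to those $\gamma$ with $d_{i\gamma}>0$; the dropped terms are nonnegative. Applying Step 2 to both rows gives
\[
\sum_\gamma\min\{M_{\alpha\gamma},M_{\beta\gamma}\}\ge\sum_{\gamma:\,d_{i\gamma}>0}\min\left\{\frac{d_{i\alpha}d_{i\gamma}}{b_\alpha a_i},\frac{d_{i\beta}d_{i\gamma}}{b_\beta a_i}\right\}.
\]
Maximize over admissible $i\in B$ and then minimize over pairs $\alpha\ne\beta$. For $\alpha=\beta$ the overlap equals $1$, which is at least $\gamma_0$. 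This gives $\Gamma(P)\ge\gamma_0$, and hence $\tau(M(P))=1-\Gamma(P)\le1-\gamma_0$ for every feasible $P$. The bound covers boundary plans as well, so in particular every finite-score Sinkhorn plan.

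\emph{Step 4: positivity and the face condition.} The hypothesis supplies, for each pair, some $i\in B$ with $d_{i\alpha}>0$ and $d_{i\beta}>0$. Taking $\gamma=\alpha$ in the sum of Step 3 then contributes
\[
\min\left\{\frac{d_{i\alpha}^2}{b_\alpha a_i},\frac{d_{i\beta}d_{i\alpha}}{b_\beta a_i}\right\}>0,
\]
so $\gamma_0>0$. The face statement follows directly, without the equivalence theorem. By Step 1, every $P$ in any face $F$ is positive on $(i,\alpha)$ and $(i,\beta)$, so these edges lie in $\Omega_F$. Choosing $\gamma=\alpha$ and $k=i$ in the definition exhibits pairwise two-hop column overlap. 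Theorem~\ref{thm:face-lattice-uniform-mixing}, or directly the uniform bound $\gamma_0>0$, then gives the score-uniform certificate.

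\emph{Expected obstacle.} The only delicate point is the bookkeeping in Step 3. The bound must use the same bus row $i$ for both $\alpha$ and $\beta$, and the restriction to $\gamma$ with $d_{i\gamma}>0$ must come after the maximization over $i$ so that dropping the other terms stays legitimate. The $\alpha=\beta$ case and the nonnegativity of the omitted terms also need to be noted explicitly.
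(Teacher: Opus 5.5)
Your proposal is correct and follows the paper's proof essentially step for step: the column-constraint capacity bound $P_{i\alpha}\ge d_{i\alpha}$, then keeping only the bus-row summand in $M_{\alpha\gamma}=\sum_k P_{k\alpha}P_{k\gamma}/(b_\alpha a_k)$, then summing coordinatewise minima over $\gamma$ with $d_{i\gamma}>0$, and finally taking the best bus row and the worst pair. Your extra details only make explicit what the paper leaves implicit: the $\gamma=\alpha$ term showing $\gamma_0>0$, the $\alpha=\beta$ case, and the direct check of face overlap from the forced edges instead of citing Theorem~\ref{thm:face-lattice-uniform-mixing}.
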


\begin{proof}
For any feasible plan,
\[
P_{i\alpha}
=
b_\alpha-\sum_{k\ne i}P_{k\alpha}
\ge
b_\alpha-\sum_{k:\,(k,\alpha)\in\Omega,\ k\ne i}a_k
\]
on active edges, and the positive part gives the lower bound. If
$d_{i\alpha},d_{i\beta}>0$, then for every $\gamma$ with $d_{i\gamma}>0$,
\[
M_{\alpha\gamma}
\ge
\frac{d_{i\alpha}d_{i\gamma}}{b_\alpha a_i},
\qquad
M_{\beta\gamma}
\ge
\frac{d_{i\beta}d_{i\gamma}}{b_\beta a_i}.
\]
Summing coordinatewise minima over such $\gamma$ gives a common mass lower
bound for rows $\alpha$ and $\beta$ of $M$. Taking the best bus row for each
pair and then the worst pair gives $\Gamma(P)\ge\gamma_0>0$, hence
$\tau(M(P))\le1-\gamma_0$. Since the lower bounds hold on every feasible plan
and every feasible face, Theorem~\ref{thm:face-lattice-uniform-mixing} also
gives the qualitative face-lattice conclusion.
\end{proof}

Corollary~\ref{cor:capacity-deficit-forced-bus} is the support-schedule
version of the face-lattice theorem. It says that support adjacency alone is
not enough: support and marginals must force positive edge flow on every
feasible face. The following special case is easier to read and useful for
sanity checks.

\begin{corollary}[Heavy universal bus row]
\label{cor:heavy-universal-bus-row}
Assume the hypotheses of
Theorem~\ref{thm:face-lattice-uniform-mixing}, and let
$T=\one^\top a=\one^\top b$. Suppose there is a row $h$ whose support
contains every column, and suppose
\[
\delta_j:=a_h+b_j-T>0
\qquad\text{for every column }j.
\]
Then every feasible face of $\Pi(a,b;\Omega)$ has pairwise two-hop column
overlap. In fact, every feasible plan satisfies
\[
P_{hj}\ge \delta_j
\qquad\text{for every }j.
\]
Consequently the derivative kernel has the uniform common-component bound
\[
M_{\alpha\beta}
\ge
\left(\min_p\frac{\delta_p}{b_p}\right)\frac{\delta_\beta}{a_h}
\qquad\text{for every }\alpha,\beta,
\]
and therefore
\[
\tau(M)
\le
1-\gamma_h,
\qquad
\gamma_h
=
\left(\min_p\frac{\delta_p}{b_p}\right)
\frac{\sum_\beta \delta_\beta}{a_h}.
\]
This is a score-uniform one-step global column-quotient certificate.
\end{corollary}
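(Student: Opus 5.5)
The plan is to reduce everything to a forced-mass lower bound on the bus row $h$. This lower bound is the special case of Corollary~\ref{cor:capacity-deficit-forced-bus} with $B=\{h\}$; after it, the kernel bound follows from the support-to-kernel identity of Theorem~\ref{thm:support-to-kernel-topology}.

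First, I will show $P_{hj}\ge\delta_j$ for every feasible $P$. From the column constraint, $P_{hj}=b_j-\sum_{k\ne h}P_{kj}$. Each $P_{kj}\le P_{k\cdot}\one=a_k$, so $\sum_{k\ne h}P_{kj}\le\sum_{k\ne h}a_k=T-a_h$. This gives $P_{hj}\ge a_h+b_j-T=\delta_j$. The step is consistent with the capacity-deficit quantity $d_{hj}$, since $\sum_{k\ne h,(k,j)\in\Omega}a_k\le T-a_h$ implies $d_{hj}\ge\delta_j$. Because $\delta_j>0$ for all $j$, every feasible plan, and hence every point of every nonempty face, is positive on all edges $(h,j)$. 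Thus $\Omega_F\supseteq\{h\}\times[n]$ for every face $F$, and pairwise two-hop overlap holds trivially: take any $\gamma$ and $i=k=h$.

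Next, I will take a single term of the kernel formula from Theorem~\ref{thm:support-to-kernel-topology}. Dropping all rows except $h$ (every summand is nonnegative) gives
\[
M_{\alpha\beta}\ge\frac{P_{h\alpha}P_{h\beta}}{b_\alpha a_h}\ge\frac{\delta_\alpha}{b_\alpha}\frac{\delta_\beta}{a_h}\ge\Bigl(\min_p\frac{\delta_p}{b_p}\Bigr)\frac{\delta_\beta}{a_h}.
\]
This is the displayed uniform bound, and its right side is independent of $\alpha$.

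Finally, I will convert this to the Dobrushin coefficient. The vector $\nu_\beta=(\min_p\delta_p/b_p)\,\delta_\beta/a_h$ is a common minorant of every row of $M$. Hence, for any rows $\alpha,\alpha'$, $\sum_\gamma\min\{M_{\alpha\gamma},M_{\alpha'\gamma}\}\ge\sum_\beta\nu_\beta=\gamma_h$. The identity $\tau(M)=1-\min_{\alpha,\alpha'}\sum_\gamma\min\{M_{\alpha\gamma},M_{\alpha'\gamma}\}$ used in Theorem~\ref{thm:face-lattice-uniform-mixing} then gives $\tau(M)\le1-\gamma_h$. Since the bound holds for every feasible $P$, it holds in particular for every finite-score Sinkhorn plan, which makes it score-uniform. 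Note that $\gamma_h\le1$ automatically, because $\nu$ lies below a probability row. No step is a genuine obstacle; the only care needed is the first step's use of the row marginals to bound the competing column mass by $T-a_h$, independently of the support.
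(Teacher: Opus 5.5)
Your proposal is correct and follows essentially the same route as the paper. Both arguments use three steps: the forced-mass bound $P_{hj}\ge b_j-(T-a_h)=\delta_j$, obtained from the column constraint and the total mass of the non-bus rows; the single-row term $P_{h\alpha}P_{h\beta}/(b_\alpha a_h)$ of the support-to-kernel formula; and a common minorant of total mass $\gamma_h$. The only difference is cosmetic: you close with the overlap identity $\tau(M)=1-\min_{\alpha,\alpha'}\sum_\gamma\min\{M_{\alpha\gamma},M_{\alpha'\gamma}\}$, whereas the paper cites Proposition~\ref{prop:sparse-support-overlap}, which is the same fact.
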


\begin{proof}
For any feasible plan $P$, the non-bus rows have total mass $T-a_h$. Thus
column $j$ can receive at most $T-a_h$ mass from non-bus rows, and hence
\[
P_{hj}
=
b_j-\sum_{i\ne h}P_{ij}
\ge
b_j-(T-a_h)
=
\delta_j>0.
\]
Therefore the bus row is positive to every column in every feasible plan and,
by the coordinate-face structure of transportation polytopes, in every
relative face support. Every pair of columns then shares row $h$, so
Theorem~\ref{thm:face-lattice-uniform-mixing} applies.

The explicit lower bound follows from the support-to-kernel formula:
\[
M_{\alpha\beta}
=
\sum_i\frac{P_{i\alpha}P_{i\beta}}{b_\alpha a_i}
\ge
\frac{P_{h\alpha}P_{h\beta}}{b_\alpha a_h}
\ge
\frac{\delta_\alpha\delta_\beta}{b_\alpha a_h}
\ge
\left(\min_p\frac{\delta_p}{b_p}\right)\frac{\delta_\beta}{a_h}.
\]
Thus every row of $M$ dominates the same subprobability vector
$\ell_\beta=(\min_p\delta_p/b_p)\delta_\beta/a_h$ of total mass
$\gamma_h$. The sparse-support overlap certificate gives
$\tau(M)\le1-\gamma_h$.
\end{proof}

The condition in Corollary~\ref{cor:heavy-universal-bus-row} is deliberately
strong but checkable. It is not necessary for topology-forced mixing; it is a
simple architectural sufficient condition. If $a_h+b_j\le T$ for some column,
the bus row can avoid that column in principle, and the corollary withholds
rather than guessing. Multiple lighter bus rows can be handled by
Corollary~\ref{cor:capacity-deficit-forced-bus}, but only when the forced
edges cover every column pair through at least one common bus row.

\paragraph{Architecture corollaries in appendix.}
The core face-lattice theorem is independent of the concrete architecture
families used as examples. Equal-score spectral rules, local-band spectra, and
other graph-design corollaries are collected in Appendix~\ref{app:architecture-corollaries}.
They are useful design consequences, but they are not assumptions in the
support-to-kernel or face-lattice proofs.

\section{Dobrushin and Tail-Cotangent Certificates}
\label{sec:tail-cotangent-certificates}

The following facts are standard Dobrushin/Doeblin tools
\citep{gaubert2013dobrushin,levin2017markov}, stated here with the
quotient convention of Definition~\ref{def:quotient-convention}.  They are the
bridge from the fixed-support half-step calculus to rigorous tail-gradient
bounds.

\begin{definition}[Projected source-cotangent interface]
\label{def:projected-source-cotangent-interface}
Fix the common column quotient \(Q=\mathbb{R}^n/\mathrm{span}\{\one\}\) for the
tail and let
\[
 \iota:Q\longrightarrow \one^\perp,\qquad
 \iota([v])=\Pi_0v,
 \qquad
 \Pi_0=I-\frac1n\one\one^\top
\]
be its unique centered lift.  At step \(t\), let \(\mathcal Z_t\) collect the
other independent local coordinates (parameters, values, residual state, or
features), and let
\[
 \Phi_t:\mathbb{R}^n\times\mathcal Z_t\longrightarrow\mathcal S_t
\]
collect every differentiable surrounding map that produces a source variable
for the Sinkhorn cycle, evaluated at the centered input potential and fixed
local coordinate \(z_t\).  The source space
may contain active-edge scores, logarithmic marginals, values, query/key/value
features, residual inputs, or loss-side quantities.  Let
\(\nu_t\in\mathcal S_t^*\) be the aggregate source-space cotangent after all
local reverse branches have been summed.  Its score and marginal coordinates
are the covectors in Proposition~\ref{prop:adjoint-source-ledger}, represented
in the marginal cotangent quotients specified there.

The exact cotangent returned to the input quotient is the partial VJP of the
composite map \([v]\mapsto\Phi_t(\iota([v]),z_t)\):
\[
\begin{aligned}
 \xi_t
 &=D\!\left([v]\mapsto\Phi_t(\iota([v]),z_t)\right)([v_t])^*\nu_t \\
 &=\iota^*D_1\Phi_t(\iota([v_t]),z_t)^*\nu_t
 =\Pi_0\psi_t\in Q^*,\\
 \psi_t&:=D_1\Phi_t(\iota([v_t]),z_t)^*\nu_t .
\end{aligned}
\]
The final identity uses Euclidean coordinates on the centered section.  Thus
projection is not an assumed contraction and not another Sinkhorn transpose
step: it is the adjoint of the chosen quotient lift.  If \(\Phi_t\) is already
gauge-invariant, then \(\one^\top\psi_t=0\) and the projection changes nothing.
For a fixed disconnected support, replace \(\iota\) and \(\Pi_0\) by the
component-centered lift and projection from
Definition~\ref{def:quotient-convention}.  A source branch that does not descend
to the declared quotient must first be expressed on a specified section; its
section VJP, rather than an arbitrary raw covector, defines \(\xi_t\).
Source coordinates independent of the potential have zero \(D_1\Phi_t\)
contribution to \(\xi_t\); their \(D_2\Phi_t^*\nu_t\) cotangents continue on
their own upstream paths.  The aggregate \(\nu_t\) contains each direct local
source branch once and excludes the homogeneous Sinkhorn branch
\(M_t^\top\eta_{t+1}\), so the recurrence below neither drops nor duplicates a
branch.
\end{definition}

\begin{theorem}[Zero-mass quotient contraction]
\label{thm:zero-mass-quotient-contraction}
Let $A$ be any row-stochastic kernel on $n$ states.  For every cotangent
$\eta\in Q^*=\{\eta:\one^\top\eta=0\}$,
\[
\|A^\top\eta\|_{\TV}\le \tau(A)\|\eta\|_{\TV}.
\]
For row-stochastic kernels $A_1,\ldots,A_K$,
\[
\|(A_K\cdots A_1)^\top\eta\|_{\TV}
\le \tau(A_K\cdots A_1)\|\eta\|_{\TV}
\le \prod_{t=1}^K \tau(A_t)\|\eta\|_{\TV}.
\]
\end{theorem}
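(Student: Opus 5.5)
The plan is the standard Dobrushin duality argument, written directly in the quotient coordinates of Definition~\ref{def:quotient-convention}. The first claim reduces to an extreme-point computation on the zero-mass ball. The second claim follows from the first plus submultiplicativity of $\tau$.

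\emph{Step 1: reduce to extreme points.} Every $\eta\in Q^*$ with $\|\eta\|_{\TV}=1$ can be written as a convex combination of the vectors $(e_p-e_q)$. To see this, split $\eta=\eta^+-\eta^-$. Both parts have total mass $1$, because $\one^\top\eta=0$ and $\tfrac12\|\eta\|_1=1$. Any coupling $\pi$ of $\eta^+$ and $\eta^-$ then gives $\eta=\sum_{p,q}\pi_{pq}(e_p-e_q)$ with $\sum\pi_{pq}=1$.

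\emph{Step 2: bound each extreme point.} For a single pair $(p,q)$ we have $A^\top(e_p-e_q)=A_{p\cdot}-A_{q\cdot}$, with rows of $A$ regarded as vectors. Hence
\[
\|A^\top(e_p-e_q)\|_{\TV}=\tfrac12\sum_k|A_{pk}-A_{qk}|\le\tau(A).
\]

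\emph{Step 3: combine.} By the triangle inequality and homogeneity, $\|A^\top\eta\|_{\TV}\le\tau(A)\|\eta\|_{\TV}$ for all $\eta\in Q^*$. The case $\eta=0$ is trivial. Row-stochasticity is used only to guarantee that $A^\top$ maps $Q^*$ into $Q^*$, as noted in the quotient convention, so that the bound can be iterated inside $Q^*$.

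\emph{Step 4: the product bound.} The product $A_K\cdots A_1$ is row-stochastic, so the first inequality is just the one-step bound applied to that product. For the second inequality we need submultiplicativity, $\tau(AB)\le\tau(A)\tau(B)$. I would derive it from the one-step bound itself. Rows of $AB$ satisfy $(AB)_{p\cdot}-(AB)_{q\cdot}=B^\top(A_{p\cdot}-A_{q\cdot})^\top$, and the vector $A_{p\cdot}-A_{q\cdot}$ lies in $Q^*$. Applying Steps 1–3 with $B$ gives
\[
\tfrac12\|(AB)_{p\cdot}-(AB)_{q\cdot}\|_1\le\tau(B)\,\tau(A).
\]
Maximizing over $p,q$ and inducting on $K$ gives $\tau(A_K\cdots A_1)\le\prod_t\tau(A_t)$. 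The order of the factors is irrelevant here because the bound is a scalar product.

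The only step requiring care is Step 1, the decomposition of a zero-mass vector into pair differences with the correct total weight. I would handle it explicitly through the $\eta^\pm$ coupling rather than citing it, so that the $\tfrac12$ normalization of $\|\cdot\|_{\TV}$ is tracked exactly. Everything else is bookkeeping. For disconnected supports the same argument applies verbatim on each $Q_\Omega^*$ component whenever $A$ preserves it, but the statement as given uses the global $Q^*$.
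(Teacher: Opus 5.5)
Your proof is correct and follows essentially the paper's route. The paper also splits $\eta$ into two unit-mass parts and then invokes the characterization of $\tau$ as the largest contraction factor on differences of probability measures; your explicit coupling decomposition into pair differences $e_p-e_q$ is simply a proof of that fact. Your derivation of submultiplicativity, applying the one-step bound to the zero-mass row differences $A_{p\cdot}-A_{q\cdot}$, likewise fills in a standard fact that the paper cites rather than proves.
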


\begin{proof}
A zero-mass signed vector $\eta$ has Jordan decomposition
$\eta=\eta^+-\eta^-$ with $\eta^\pm\ge0$ and
$\eta^+(\one)=\eta^-(\one)=\|\eta\|_{\TV}$.  After normalizing the two positive
parts to probability vectors, $A^\top\eta$ is the difference of the two
probability measures pushed forward by the Markov kernel $A$, times
$\|\eta\|_{\TV}$.  The Dobrushin coefficient is exactly the largest
possible total-variation distance between two rows, equivalently the largest
contraction factor on differences of probability measures.  This proves the
one-step inequality.  The product kernel is row-stochastic, so the first
inequality applies to it.  Submultiplicativity of Dobrushin coefficients gives
$\tau(A_K\cdots A_1)\le\prod_t\tau(A_t)$.
\end{proof}

\begin{theorem}[Projected-source tail-cotangent bound]
\label{thm:projected-source-tail-bound}
Let $M_1,\ldots,M_K\in\mathbb{R}^{n\times n}$ be the fixed-support quotient
operators produced by a sequence of Sinkhorn row-column cycles on a common
column state space, and let projected quotient sources
$\xi_t\in Q^*$ collect the zero-mass cotangents delivered to the input quotient
coordinates at step $t$.  Concretely, the Sinkhorn score and marginal
components are the adjoints in Proposition~\ref{prop:adjoint-source-ledger};
all local source-space covectors are summed into $\nu_t$, and the exact partial
VJP of $[v]\mapsto\Phi_t(\iota([v]),z_t)$ in
Definition~\ref{def:projected-source-cotangent-interface} constructs
$\xi_t$.  Suppose the reverse tail recurrence is
\begin{equation}
\label{eq:tail-recurrence}
\eta_t=M_t^\top\eta_{t+1}+\xi_t,
\qquad t=1,\ldots,K,
\end{equation}
with terminal quotient cotangent $\eta_{K+1}\in Q^*$.  Then
\begin{equation}
\label{eq:tail-expansion}
\eta_1=(M_K\cdots M_1)^\top\eta_{K+1}
+\sum_{t=1}^K (M_{t-1}\cdots M_1)^\top\xi_t,
\end{equation}
where the empty product is the identity.  Consequently the homogeneous residual
obeys
\begin{equation}
\label{eq:homogeneous-residual-bound}
\|(M_K\cdots M_1)^\top\eta\|_{\TV}
\le \tau(M_K\cdots M_1)\|\eta\|_{\TV},
\end{equation}
and the inhomogeneous tail obeys the path/product sum bound
\begin{equation}
\label{eq:inhomogeneous-tail-bound}
\|\eta_1\|_{\TV}
\le
\tau(M_K\cdots M_1)\|\eta_{K+1}\|_{\TV}
+\sum_{t=1}^K \tau(M_{t-1}\cdots M_1)\|\xi_t\|_{\TV}.
\end{equation}
A looser but sometimes cheaper certificate replaces each product coefficient by
$\prod_{\ell=1}^{t-1}\tau(M_\ell)$.
\end{theorem}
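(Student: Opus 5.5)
The plan is to unroll the recurrence \eqref{eq:tail-recurrence} by induction and then bound each term with Theorem~\ref{thm:zero-mass-quotient-contraction}.

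\textbf{Step 1: the expansion.} I would prove a more general identity by downward induction on $s$: for $s=K+1,K,\ldots,1$,
\[
\eta_s=(M_K\cdots M_s)^\top\eta_{K+1}+\sum_{t=s}^K (M_{t-1}\cdots M_s)^\top\xi_t .
\]
For $s=K+1$ the sum is empty and the product is the identity, so the identity holds trivially. For the inductive step, substitute the formula for $\eta_{s+1}$ into $\eta_s=M_s^\top\eta_{s+1}+\xi_s$. Then use $M_s^\top(M_{t-1}\cdots M_{s+1})^\top=(M_{t-1}\cdots M_{s+1}M_s)^\top$. The new term $\xi_s$ is exactly the $t=s$ summand, whose product is empty. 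Setting $s=1$ gives \eqref{eq:tail-expansion}.

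Before bounding anything, I would check that every quantity lives in $Q^*$. Each $M_t$ is row-stochastic, by the finite-cycle extension ($M_t\one=\one$) or at a fixed point. Hence $M_t^\top Q^*\subseteq Q^*$ by Definition~\ref{def:quotient-convention}. Since $\eta_{K+1}$ and all $\xi_t$ lie in $Q^*$, every summand lies in $Q^*$. Products of row-stochastic matrices are row-stochastic, so each product operator above is again a row-stochastic kernel.

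\textbf{Step 2: the bounds.} The homogeneous bound \eqref{eq:homogeneous-residual-bound} is the first inequality of Theorem~\ref{thm:zero-mass-quotient-contraction} applied to the row-stochastic product $A=M_K\cdots M_1$. For \eqref{eq:inhomogeneous-tail-bound}, I would take $\|\cdot\|_{\TV}$ of the expansion and use the triangle inequality, since it is half the $\ell^1$ norm. I would then apply the same theorem to each product $A=M_{t-1}\cdots M_1$ acting on $\xi_t\in Q^*$. For $t=1$ the product is the identity, and I adopt the convention $\tau(I)=1$. This is consistent because, for $n\ge2$, the identity has disjoint rows and so $\tau(I)=1$; for $n=1$, $Q^*=\{0\}$ and the term vanishes.

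\textbf{Step 3: the looser certificate.} This follows from submultiplicativity $\tau(M_{t-1}\cdots M_1)\le\prod_{\ell<t}\tau(M_\ell)$, which is already part of Theorem~\ref{thm:zero-mass-quotient-contraction}.

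I do not expect a real obstacle. The only points needing care are the transpose order in the expansion, which is why the product is written as $(M_{t-1}\cdots M_1)^\top$ rather than in the reverse order, and the empty-product convention.
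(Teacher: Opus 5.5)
Your proposal is correct and follows essentially the same route as the paper: unroll the recurrence by induction, use row-stochasticity to keep every term in $Q^*$, then apply the zero-mass quotient contraction theorem with the triangle inequality, and use submultiplicativity for the looser bound. Your downward induction over a general starting index $s$ and your explicit treatment of the $t=1$ empty-product term via $\tau(I)=1$ (with $Q^*=\{0\}$ when $n=1$) simply spell out details that the paper's proof leaves implicit.
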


\begin{proof}
Unroll \eqref{eq:tail-recurrence}.  For $K=2$,
$\eta_1=M_1^\top M_2^\top\eta_3+M_1^\top\xi_2+\xi_1$; the general formula
\eqref{eq:tail-expansion} follows by induction, and
$M_1^\top\cdots M_K^\top=(M_K\cdots M_1)^\top$.  All terms remain in $Q^*$
because the $M_t$ are row-stochastic and the sources are quotient VJPs.  Apply
Theorem~\ref{thm:zero-mass-quotient-contraction} to the homogeneous product and
to each source product, then use the triangle inequality.  Submultiplicativity
gives the optional product-of-one-step-coefficients bound.
\end{proof}

\begin{corollary}[Implicit quotient VJP certificate]
\label{cor:implicit-quotient-vjp}
Let $M$ be a row-stochastic fixed-support quotient operator with
$\tau(M)<1$, and let $\xi\in Q^*$ be a projected source cotangent.  The implicit
quotient equation
\[
\eta=M^\top\eta+\xi
\]
has a unique solution in $Q^*$,
\[
\eta=\sum_{k=0}^{\infty}(M^\top)^k\xi,
\]
and
\[
\|\eta\|_{\TV}\le \frac{\|\xi\|_{\TV}}{1-\tau(M)}.
\]
If $\eta^{(K)}=\sum_{k=0}^{K-1}(M^\top)^k\xi$, then the truncation residual is
bounded by
\[
\|\eta-\eta^{(K)}\|_{\TV}
\le \frac{\tau(M^K)}{1-\tau(M)}\,\|\xi\|_{\TV}
\le \frac{\tau(M)^K}{1-\tau(M)}\,\|\xi\|_{\TV}.
\]
\end{corollary}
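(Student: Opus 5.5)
The plan is to treat the implicit quotient equation as a fixed-point problem for the affine map $\mathcal{A}(\eta)=M^\top\eta+\xi$ on the finite-dimensional space $Q^*$, equipped with the total-variation norm $\|\cdot\|_{\TV}$. First, $\mathcal{A}$ maps $Q^*$ into itself. Indeed, $M\one=\one$ gives $\one^\top M^\top\eta=\one^\top\eta=0$, as recorded in Definition~\ref{def:quotient-convention}, and $\xi\in Q^*$ by hypothesis. Next, Theorem~\ref{thm:zero-mass-quotient-contraction} with $A=M$ gives
\[
\|M^\top\eta\|_{\TV}\le\tau(M)\|\eta\|_{\TV}\qquad(\eta\in Q^*),
\]
so $\mathcal{A}$ is a strict contraction with constant $\tau(M)<1$. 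Since $Q^*$ is a finite-dimensional normed space, hence complete, Banach's fixed-point theorem yields existence and uniqueness. Uniqueness also follows directly: the difference $\delta$ of two solutions satisfies $\delta=M^\top\delta$, so $\|\delta\|_{\TV}\le\tau(M)\|\delta\|_{\TV}$, which forces $\delta=0$.

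For the series representation, I will show that the partial sums $\eta^{(K)}$ form a Cauchy sequence. Applying Theorem~\ref{thm:zero-mass-quotient-contraction} $k$ times, or once to $M^k$ and then using submultiplicativity, gives $\|(M^\top)^k\xi\|_{\TV}\le\tau(M)^k\|\xi\|_{\TV}$. The series therefore converges absolutely in $Q^*$. Its limit satisfies $M^\top\eta+\xi=\sum_{k\ge1}(M^\top)^k\xi+\xi=\eta$, so by uniqueness it is the solution. Summing the geometric bound over $k\ge0$ then gives $\|\eta\|_{\TV}\le\|\xi\|_{\TV}/(1-\tau(M))$.

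For the truncation residual, the key identity is
\[
\eta-\eta^{(K)}=\sum_{k\ge K}(M^\top)^k\xi=(M^\top)^K\eta=(M^K)^\top\eta,
\]
because the tail of the series equals $(M^\top)^K$ applied to the full series. Since $M^K$ is row-stochastic, Theorem~\ref{thm:zero-mass-quotient-contraction} applied to $A=M^K$ gives
\[
\|\eta-\eta^{(K)}\|_{\TV}\le\tau(M^K)\|\eta\|_{\TV}.
\]
Combining this with the bound on $\|\eta\|_{\TV}$ yields $\tau(M^K)\|\xi\|_{\TV}/(1-\tau(M))$. Submultiplicativity, $\tau(M^K)\le\tau(M)^K$, then gives the looser form.

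The only delicate point is the tail identity. The residual must be written as $(M^K)^\top\eta$ rather than bounded termwise. Termwise bounding by $\tau(M^k)$ would produce the sum $\sum_{k\ge K}\tau(M^k)$, which is not obviously at most $\tau(M^K)/(1-\tau(M))$. Factoring out $(M^K)^\top$ avoids this issue.
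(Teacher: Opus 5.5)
Your proof is correct and follows the paper's argument essentially step for step: the geometric bound $\|(M^\top)^k\xi\|_{\TV}\le\tau(M)^k\|\xi\|_{\TV}$ from the zero-mass contraction theorem, absolute convergence of the Neumann series, uniqueness from $z=M^\top z$, and the tail identity $\eta-\eta^{(K)}=(M^K)^\top\eta$; the Banach fixed-point framing is a harmless redundancy. Your closing caveat is overly cautious, since termwise bounding also works via submultiplicativity, $\tau(M^{K+j})\le\tau(M^K)\tau(M)^j$, though factoring out $(M^K)^\top$ is the cleaner route and the one the paper takes.
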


\begin{proof}
On $Q^*$, Theorem~\ref{thm:zero-mass-quotient-contraction} gives
$\|(M^\top)^k\xi\|_{\TV}\le\tau(M)^k\|\xi\|_{\TV}$.  The Neumann series therefore
converges absolutely in the finite-dimensional quotient space and solves
$(I-M^\top)\eta=\xi$.  If another solution existed, their difference $z\in Q^*$
would satisfy $z=M^\top z$, hence
$\|z\|_{\TV}\le\tau(M)\|z\|_{\TV}$; since $\tau(M)<1$, $z=0$.  The norm bound is
the geometric series.  For the truncation residual,
$\eta-\eta^{(K)}=(M^K)^\top\eta$, so
Theorem~\ref{thm:zero-mass-quotient-contraction} gives
$\|\eta-\eta^{(K)}\|_{\TV}\le\tau(M^K)\|\eta\|_{\TV}$, and the previous bound on
$\|\eta\|_{\TV}$ completes the proof.  Submultiplicativity gives the final
inequality.
\end{proof}

\paragraph{Scope of the tail theorem.}
Theorem~\ref{thm:projected-source-tail-bound} proves the fixed-support
quotient-transport component exactly.  A full neural-gradient statement also
has to bound $\|\xi_t\|_{\TV}$ and the operator norms of any non-transport maps
between quotient spaces.  Proposition~\ref{prop:adjoint-source-ledger}
supplies the one-cycle source VJPs; this theorem supplies the quotient
transport of the resulting projected sources.
Without the surrounding source-norm and map-norm bounds, $\tau(M_K\cdots M_1)$
is a rigorous transport-tail certificate, not a complete parameter-to-loss
gradient guarantee.

\begin{proposition}[Sparse-support overlap certificate]
\label{prop:sparse-support-overlap}
Let $M$ be the derivative kernel induced by a fixed support and plan. Suppose
there are $\gamma\in[0,1]$ and a probability vector $q$ such that
\[
M_{ij}\ge \gamma q_j\qquad\text{for all }i,j.
\]
Then $\tau(M)\le 1-\gamma$.
\end{proposition}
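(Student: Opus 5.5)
The plan is to bound each pairwise row total-variation distance directly through the common-mass identity already used in Theorem~\ref{thm:face-lattice-uniform-mixing}. For row-stochastic $M$ and any pair of rows $p,q$ we have
\[
\frac12\sum_k|M_{pk}-M_{qk}|=1-\sum_k\min\{M_{pk},M_{qk}\}.
\]
This follows from $|x-y|=x+y-2\min\{x,y\}$ together with the row sums $\sum_kM_{pk}=\sum_kM_{qk}=1$. Thus $\tau(M)=1-\min_{p,q}\sum_k\min\{M_{pk},M_{qk}\}$, which is the same $\Gamma$-form recorded in that theorem.

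Next I would insert the minorization hypothesis. For every $p,q,k$ it gives $\min\{M_{pk},M_{qk}\}\ge\gamma q_k$. Summing over $k$ and using $\sum_kq_k=1$, every pair of rows then has common mass at least $\gamma$, so $\tau(M)\le1-\gamma$. To avoid the clash between the row index $q$ in $\tau$ and the probability vector $q$, I will rename the row indices $\alpha,\beta$ when writing this out.

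As an alternative route, matching the mixture argument in Theorem~\ref{thm:plan-distortion-stability}, one can write $M=\gamma\one q^\top+(1-\gamma)N$ with $N$ row-stochastic when $\gamma<1$. The case $\gamma=1$ is trivial, since then $M=\one q^\top$ and $\tau=0$. The rank-one part has identical rows and contributes nothing to row differences, so $\tau(M)=(1-\gamma)\tau(N)\le1-\gamma$. For this route I only need to check that $N$ is nonnegative with unit row sums, which follows from $M\ge\gamma\one q^\top$ and $M\one=\one$.

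There is no real obstacle here. The only point needing care is the total-variation/common-mass identity, and that uses only row-stochasticity, which holds for $M=C^\top R$ (and for the finite-cycle $M_t$) as recorded before Definition~\ref{def:quotient-convention}.
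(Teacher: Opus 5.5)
Your proof is correct, and your second route is the paper's own argument. The paper writes each row as $M_i=\gamma q+(1-\gamma)\widetilde M_i$, with $\widetilde M_i$ a probability vector when $\gamma<1$, and lets the common component cancel in row differences. You add two things the paper leaves implicit: the $\gamma=1$ case, and the exact factorization $\tau(M)=(1-\gamma)\tau(N)$.

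Your primary route is slightly different. It uses the overlap identity $\frac12\sum_k|M_{\alpha k}-M_{\beta k}|=1-\sum_k\min\{M_{\alpha k},M_{\beta k}\}$, which is the $\tau=1-\Gamma$ form recorded in Theorem~\ref{thm:face-lattice-uniform-mixing}, and then lower-bounds each pairwise common mass by $\gamma$. This route needs no auxiliary stochastic kernel, and it shows that only pairwise overlap, not a single common $q$, is really needed. The mixture route gives the cleaner factorization reused in the plan-distortion product bound.

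Both routes rely only on $M\one=\one$, which you correctly trace to the normalization of $R$ and $C$, or of $R_t$ and $C_t$ in the finite-cycle case.
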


\begin{proof}
Write each row as $M_i=\gamma q+(1-\gamma)\widetilde M_i$, where
$\widetilde M_i$ is a probability vector when $\gamma<1$.  The common component
cancels in $M_i-M_{i'}$, so the total-variation distance between any two rows
is at most $1-\gamma$.
\end{proof}

This proposition is deliberately stated as a kernel-level condition. A support
graph can suggest overlap, but scores and marginal masses determine the actual
lower bound after scaling. The validator therefore computes the empirical
shared component
\[
\gamma_{\rm cert}=\sum_j\min_i M_{ij}
\]
from the realized kernel rather than claiming a graph-only theorem.

\begin{theorem}[Mass-floored hub certificate]
Let $H\subseteq[n]$ be a set of hub columns. Suppose there is a number
$p_{\min}>0$ such that for every source column $\alpha\in[n]$ and every hub
$h\in H$, there exists a witness row $i=i(\alpha,h)$ with
\[
P_{i\alpha}\ge p_{\min},\qquad P_{ih}\ge p_{\min}.
\]
Let $a_{\max}=\max_i a_i$ and $b_{\max}=\max_\alpha b_\alpha$. Then
\[
M_{\alpha h}\ge \frac{p_{\min}^2}{b_{\max}a_{\max}}
\qquad\text{for every }\alpha\in[n],\ h\in H.
\]
Consequently, writing
\[
\gamma_{\rm shared}=\sum_\beta\min_\alpha M_{\alpha\beta},
\]
we have
\[
\gamma_{\rm shared}\ge |H|\frac{p_{\min}^2}{b_{\max}a_{\max}}.
\]
Equivalently, with the defensive cap
\[
\gamma_H=\min\left\{1, |H|\frac{p_{\min}^2}{b_{\max}a_{\max}}\right\},
\]
the derivative kernel has a common component of mass at least $\gamma_H$ and
$\tau(M)\le1-\gamma_H$.
\end{theorem}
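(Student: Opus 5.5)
The plan is to work entirely from the support-to-kernel identity of Theorem~\ref{thm:support-to-kernel-topology},
\[
M_{\alpha\beta}=\sum_i\frac{P_{i\alpha}P_{i\beta}}{b_\alpha a_i},
\]
whose summands are all nonnegative. Fix a source column $\alpha$ and a hub $h\in H$. I will discard every summand except the one indexed by the witness row $i=i(\alpha,h)$. That gives $M_{\alpha h}\ge P_{i\alpha}P_{ih}/(b_\alpha a_i)$. Next I bound the numerator below by $p_{\min}^2$ using the two witness inequalities, and the denominator above by $b_{\max}a_{\max}$. This yields the first display uniformly in $\alpha$ and $h$.

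For the shared-mass bound, the uniformity in $\alpha$ is the point. For each hub $h$, taking the minimum over $\alpha$ still leaves $\min_\alpha M_{\alpha h}\ge p_{\min}^2/(b_{\max}a_{\max})$. I then drop the nonnegative terms $\beta\notin H$ from $\gamma_{\rm shared}=\sum_\beta\min_\alpha M_{\alpha\beta}$ and sum over $h\in H$, which gives $\gamma_{\rm shared}\ge|H|\,p_{\min}^2/(b_{\max}a_{\max})$.

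To pass to the Dobrushin bound I will invoke Proposition~\ref{prop:sparse-support-overlap}. The only care needed is building the probability vector $q$. Let $\ell_\beta=\min_\alpha M_{\alpha\beta}$; every row of $M$ dominates $\ell$ entrywise. Since each row of $M$ sums to one, $\gamma_{\rm shared}=\sum_\beta\ell_\beta\le1$, so the cap in $\gamma_H$ is automatically consistent with $\gamma_H\le\gamma_{\rm shared}$.

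I then split into two cases. If $\gamma_{\rm shared}=0$, then $\gamma_H=0$ and the bound $\tau(M)\le1$ is trivial. Otherwise set $q=\ell/\gamma_{\rm shared}$, so that $M_{\alpha\beta}\ge\gamma_{\rm shared}\,q_\beta\ge\gamma_H q_\beta$. Proposition~\ref{prop:sparse-support-overlap} then gives $\tau(M)\le1-\gamma_{\rm shared}\le1-\gamma_H$. Equivalently, one can read this off directly from $\tau(M)=1-\Gamma$, because every row-pair overlap is at least $\sum_\beta\ell_\beta$.

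I do not expect any step to be a real obstacle. The points that need explicit mention are the following:
\begin{itemize}
\item Nonnegativity of all summands is what justifies discarding terms, both in the kernel formula and in the sum over non-hub columns.
\item The defensive cap is harmless, because row-stochasticity forces $\gamma_{\rm shared}\le1$ and so makes the raw lower bound consistent.
\item The argument uses only the kernel formula, so it holds for any feasible plan, including a finite-cycle $M_t$ with $C_t^\top R_t$. In that case the witness hypotheses would be placed on $P_t^c$ for the $\alpha$ entry and on $P_t^r$ for the $h$ entry.
\end{itemize}
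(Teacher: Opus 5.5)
Your proposal is correct and follows the paper's proof essentially step for step. You isolate the witness-row summand in the support-to-kernel formula, sum the $\alpha$-uniform hub bounds over the distinct columns of $H$, and apply Proposition~\ref{prop:sparse-support-overlap}, with row-stochasticity making the cap harmless. Your explicit choice $q=\ell/\gamma_{\rm shared}$ and your remark about placing the witness conditions on $P_t^c$ and $P_t^r$ for finite-cycle kernels $C_t^\top R_t$ are both valid.
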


\begin{proof}
Using Theorem~\ref{thm:support-to-kernel-topology}, for each $\alpha$ and $h$,
\[
M_{\alpha h}=\sum_i\frac{P_{i\alpha}P_{ih}}{b_\alpha a_i}.
\]
All summands are nonnegative.  The witness row $i(\alpha,h)$ contributes at
least
\[
\frac{p_{\min}^2}{b_\alpha a_{i(\alpha,h)}}
\ge
\frac{p_{\min}^2}{b_{\max}a_{\max}} .
\]
Therefore each row of $M$ assigns at least this much mass to every hub column in
the fixed common set $H$.  Summing the rowwise
minimum over the distinct columns $h\in H$ gives the displayed lower bound on
$\gamma_{\rm shared}$; applying Proposition~\ref{prop:sparse-support-overlap}
gives the Dobrushin bound.  Under exact hypotheses the uncapped lower bound is
automatically at most one because $M$ is row-stochastic; the cap is included
only to make numerical reporting robust.
\end{proof}

This theorem is the safe quantitative form of the topology story. The support
graph supplies the hub-witness pattern; the scaled plan supplies the mass
floor. It is therefore a realized-plan certificate, not a pure support theorem.
Without a mass floor or explicit score/marginal assumptions implying one,
dense support alone does not prevent $\tau(M)$ from approaching one.

\section{Negative and Windowed Topology Certificates}

The certificate is most useful when it distinguishes topology failures from
topology fixes. The following negative examples are deliberately simple.

\begin{proposition}[Closed disconnected components]
If the row-stochastic kernel $M$ has two closed components with disjoint
transition supports, then $\tau(M)=1$. Hence the one-step certificate gives no
global quotient contraction.
\end{proposition}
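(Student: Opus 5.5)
The argument needs only the definition of $\tau$: it suffices to exhibit one pair of rows of $M$ whose total-variation distance equals one. First I will make the hypothesis precise. ``Two closed components with disjoint transition supports'' means there are nonempty disjoint column sets $J_1,J_2\subseteq[n]$ such that every row indexed by $J_1$ puts all its mass on $J_1$, and every row indexed by $J_2$ puts all its mass on $J_2$. That is, $\sum_{k\in J_1}M_{pk}=1$ for $p\in J_1$ and $\sum_{k\in J_2}M_{qk}=1$ for $q\in J_2$.

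Next, pick any $p\in J_1$ and $q\in J_2$. Because $M$ is row-stochastic and row $p$ has all its mass inside $J_1$, we get $M_{pk}=0$ for every $k\notin J_1$. In the same way, $M_{qk}=0$ for every $k\notin J_2$. Since $J_1\cap J_2=\emptyset$, for every $k$ at least one of $M_{pk}$ and $M_{qk}$ vanishes, so $\min\{M_{pk},M_{qk}\}=0$ for all $k$. It follows that
\[
\tfrac12\sum_k|M_{pk}-M_{qk}|=\tfrac12\Bigl(\sum_k M_{pk}+\sum_k M_{qk}\Bigr)=1 .
\]

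For the conclusion, recall that $\tau(M)$ is a maximum of row-pair total-variation distances. Each such distance is at most one, because both rows are probability vectors. This gives $\tau(M)\ge 1$ from the pair $(p,q)$ and $\tau(M)\le1$ in general, so $\tau(M)=1$. To justify ``no global quotient contraction,'' I would add a witness. Take $\eta=\tfrac12(e_p-e_q)\in Q^*$, which has $\|\eta\|_{\TV}=\tfrac12$. Then $M^\top\eta=\tfrac12(M_{p\cdot}-M_{q\cdot})^\top$, and by the identity above its TV norm is also $\tfrac12$. So no factor $\rho<1$ can hold in Theorem~\ref{thm:zero-mass-quotient-contraction} for this $M$.

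The main obstacle is only in interpretation, not computation. I will read ``closed'' in the Markov-chain sense, meaning no mass leaves the component; without that reading, disjoint supports of the two chosen rows alone would already suffice. I will also note that the conclusion concerns the global quotient $Q$ only. It says nothing against componentwise contraction on $Q_\Omega^*$, in line with Definition~\ref{def:quotient-convention}.
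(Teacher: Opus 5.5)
Your proof is correct and follows the paper's argument: choose one row from each closed component, note that the two row distributions have disjoint supports, so their total-variation distance is one, which is the largest value $\tau$ can take. The explicit witness $\eta=\tfrac12(e_p-e_q)$ with $\|M^\top\eta\|_{\TV}=\|\eta\|_{\TV}$ is a correct addition that makes the ``no global quotient contraction'' clause concrete.
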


\begin{proof}
Choose one row from each closed component. Their row distributions have zero
overlap, so the total-variation distance between the rows is one. This is the
maximal value in the Dobrushin coefficient.
\end{proof}

\begin{proposition}[Permutation-like supports]
If $M$ is a permutation matrix on at least two states, or more generally if two
rows of $M$ are point masses on different states, then $\tau(M)=1$.
Identity-like transport therefore preserves some quotient cotangent modes
exactly.
\end{proposition}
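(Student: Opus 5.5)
The plan is to work directly from the definition
\[
\tau(M)=\tfrac12\max_{p,q}\sum_k|M_{pk}-M_{qk}|,
\]
and to exhibit a single pair of rows whose distributions have disjoint supports. Because $M$ is row-stochastic, each row $M_p$ is a probability vector. Hence every pairwise half-$\ell^1$ distance lies in $[0,1]$, and therefore $\tau(M)\le 1$ always.

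For the general claim, suppose rows $p\ne q$ of $M$ are point masses $\delta_{k_p}$ and $\delta_{k_q}$ with $k_p\ne k_q$. Then $\sum_k|M_{pk}-M_{qk}|=2$, so the half-sum equals $1$ and $\tau(M)\ge1$. Combined with the upper bound, this gives $\tau(M)=1$. This is the same zero-overlap argument already used for closed disconnected components in the preceding proposition. Equivalently, in the overlap form $\Gamma$ of Theorem~\ref{thm:face-lattice-uniform-mixing}, the common mass $\sum_\gamma\min\{M_{p\gamma},M_{q\gamma}\}$ vanishes.

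The permutation case then reduces to the general one. If $M$ is a permutation matrix for $\sigma$ on $n\ge2$ states, every row $p$ is the point mass $\delta_{\sigma(p)}$. Injectivity of $\sigma$ gives $\sigma(p)\ne\sigma(q)$ for any $p\ne q$, and such a pair exists because $n\ge2$.

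For the interpretive sentence about preserved modes, I would exhibit an explicit zero-mass cotangent that is not contracted. Take $\eta=e_{\sigma(p)}-e_{\sigma(q)}\in Q^*$. Then $M^\top e_{\sigma(p)}$ has entries $M_{\sigma(p)k}$, so it is not generally this vector. A cleaner choice is $\eta=e_p-e_q$, which gives $M^\top\eta=e_{k_p}-e_{k_q}$. Both vectors have TV norm $1$, so the contraction bound of Theorem~\ref{thm:zero-mass-quotient-contraction} holds with equality. For a permutation matrix, $M^\top$ is an isometric permutation of $Q^*$, so every quotient cotangent keeps its TV norm exactly.

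No step looks like a genuine obstacle. The only care needed is to state correctly that the preserved mode is norm-preserving (transported isometrically) rather than literally fixed. A literally fixed mode is available only when $\sigma$ has a nontrivial cycle structure allowing a zero-mass invariant vector, for example $\sigma=\mathrm{id}$.
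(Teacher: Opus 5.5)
Your proposal is correct and follows the paper's own argument. Two distinct point-mass rows have total-variation distance one, with $\tau\le1$ coming from row-stochasticity; you make that upper bound explicit, while the paper leaves it implicit. A permutation matrix on $n\ge2$ states supplies such a pair by injectivity. Your treatment of the interpretive sentence via $M^\top(e_p-e_q)=e_{k_p}-e_{k_q}$ is slightly more general than the paper's identity-matrix remark. Just delete the abandoned first choice $e_{\sigma(p)}-e_{\sigma(q)}$, and note that a literally fixed zero-mass mode exists exactly when $\sigma$ has at least two cycles.
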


\begin{proof}
Two distinct point masses have total-variation distance one. A permutation
matrix on at least two states has two rows that are point masses on different
states. For the identity matrix on at least two states, every zero-mass
cotangent is propagated without decay.
\end{proof}

One-step failure is not the same as architectural failure. Alternating supports
or repeated layers can mix over a window.

\begin{theorem}[Windowed face-lattice characterization]
\label{thm:windowed-face-lattice}
For $t=1,\ldots,K$, let
$\mathcal P_t=\Pi(a_t,b_t;\Omega_t)$ be nonempty compact transport
polytopes on the same column index set, with positive compatible marginals and
strict feasibility on every active edge of $\Omega_t$. For
$P_t\in\mathcal P_t$, define
\[
M_t(P_t)=
\bigl(P_t\diag(b_t)^{-1}\bigr)^\top
\bigl(\diag(a_t)^{-1}P_t\bigr).
\]
For a face tuple $F=(F_1,\ldots,F_K)$ with every $F_t$ nonempty, let
$\mathcal R_t(F_t)$ be the two-hop column relation induced by the face support
$\Omega_{F_t}$:
\[
\alpha\,\mathcal R_t(F_t)\,\gamma
\quad\Longleftrightarrow\quad
\exists i\ \text{with}\ (i,\alpha),(i,\gamma)\in\Omega_{F_t}.
\]
Say that the tuple has windowed pairwise product overlap if, for every pair
of columns $\alpha,\beta$, the $K$-step reachable sets from $\alpha$ and
$\beta$ through the relation product, applied in matrix-product order,
\[
\mathcal R_K(F_K);\mathcal R_{K-1}(F_{K-1});\cdots;\mathcal R_1(F_1)
\]
intersect. Then the following are equivalent.
\begin{enumerate}
\item There exists $\delta_K>0$ such that every finite score sequence on the
      fixed supports satisfies
      \[
      \tau\!\left(M_K(P_K)\cdots M_1(P_1)\right)\le 1-\delta_K .
      \]
\item Every face tuple $(F_1,\ldots,F_K)$ with every $F_t$ nonempty has
      windowed pairwise product overlap.
\end{enumerate}
Thus a scheduled sparse architecture can force global quotient mixing over a
window even when each one-step topology withholds a contraction certificate.
\end{theorem}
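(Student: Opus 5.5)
The plan is to lift the proof of Theorem~\ref{thm:face-lattice-uniform-mixing} from a single polytope to the product polytope $\mathcal P_1\times\cdots\times\mathcal P_K$. Define the window overlap functional
\[
\Gamma_K(P_1,\ldots,P_K)=\min_{\alpha,\beta}\sum_\gamma\min\bigl\{(M_K\cdots M_1)_{\alpha\gamma},(M_K\cdots M_1)_{\beta\gamma}\bigr\},
\]
so that $\tau(M_K(P_K)\cdots M_1(P_1))=1-\Gamma_K$. This functional is continuous on the compact product polytope. Continuity holds because each $M_t(P_t)$ is a polynomial in $P_t$ with the fixed positive marginals in the denominators, matrix products are continuous, and $\Gamma_K$ is a finite minimum of continuous overlaps.

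The first key step is a support identity for products of nonnegative matrices. For nonnegative matrices, the support of $M_K\cdots M_1$ is exactly the relational composition of the supports of the factors, with no cancellation possible. By Theorem~\ref{thm:support-to-kernel-topology} applied to a general feasible plan, $M_t(P_t)_{\alpha\gamma}>0$ holds iff some row $i$ has $P_{t,i\alpha}>0$ and $P_{t,i\gamma}>0$. As in the one-step proof, the coordinate-face structure of transportation polytopes identifies $\supp(P_t)$ with $\Omega_{F_t}$, where $F_t$ is the minimal face containing $P_t$. Consequently row $\alpha$ of the product has support equal to the $K$-step reachable set of $\alpha$ under the composed relations, taken in the matrix-product order. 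It follows that $\Gamma_K(P_1,\ldots,P_K)>0$ iff the tuple of minimal faces has windowed pairwise product overlap. I will also check the ordering convention carefully: $(M_K\cdots M_1)_{\alpha\gamma}$ applies $M_K$ first from $\alpha$, which matches the displayed order $\mathcal R_K;\cdots;\mathcal R_1$.

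For (2)$\Rightarrow$(1), every point of the product polytope lies in the relative interior of some face tuple, so $\Gamma_K>0$ everywhere. Compactness then gives $\delta_K=\min\Gamma_K>0$, and every tuple of finite-score plans lies in the product polytope. For (1)$\Rightarrow$(2), take a face tuple $(F_1,\ldots,F_K)$ that fails the overlap condition. Each $F_t$ is exposed by some score direction $B_t$. Score sequences are independent across layers, so I use the scores $LB_t$ at step $t$ with a common $L\to\infty$. Theorem~\ref{thm:zero-temperature-obstruction} gives $P_t(LB_t)\to P_{F_t}^{\mathrm{ent}}$, and the relative-interior argument from the one-step proof gives $\supp(P_{F_t}^{\mathrm{ent}})=\Omega_{F_t}$. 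The support identity then shows that two rows of the limiting product have disjoint supports, so $\Gamma_K=0$ at the limit tuple, and continuity forces $\Gamma_K\to0$ along the finite-score sequence. This contradicts (1).

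I expect the main obstacle to be the support-of-product step rather than the compactness bookkeeping. Two points need care there. First, the entropic limits must have supports exactly equal to $\Omega_{F_t}$ and not merely contained in it, because reachable sets computed from smaller supports could be smaller. Second, the relation composition must be set up with the correct orientation of rows of $M$ as column-to-column transitions. Both points reduce cleanly to results already available, namely Theorem~\ref{thm:support-to-kernel-topology}, the standard-form face description, and the entropy relative-interior argument. The remaining subtlety is that the layers carry separate marginals $(a_t,b_t)$ on a common column index set. This only changes each $M_t$ through its own fixed positive marginals and leaves the argument unaffected.
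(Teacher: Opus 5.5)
Your proposal is correct and follows the paper's proof essentially step for step: the support of the product is the Boolean composition of the two-hop relations of the minimal faces, compactness of $\mathcal P_1\times\cdots\times\mathcal P_K$ gives (2)$\Rightarrow$(1), and independent exposing directions $LB_t$ with entropy-center limits give the converse. One small correction to your listed obstacle: in the converse, $\supp(P^{\mathrm{ent}}_{F_t})\subseteq\Omega_{F_t}$ already suffices, because smaller supports can only shrink the reachable sets and so keep them disjoint; exact equality with $\Omega_{F_t}$ is needed only in the forward direction, where $\supp(P_t)$ is identified with the support of the minimal face containing $P_t$.
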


\begin{proof}
For any feasible tuple $(P_1,\ldots,P_K)$, the support of
$M_K(P_K)\cdots M_1(P_1)$ is the Boolean product obtained by first following
the support relation of $M_K(P_K)$, then the support relation of
$M_{K-1}(P_{K-1})$, and so on through $M_1(P_1)$. By the support-to-kernel
theorem, the one-step support relation of $M_t(P_t)$ is the two-hop relation
induced by $\supp(P_t)$. Hence the product kernel has Dobrushin coefficient
strictly below one exactly when every pair of its rows has overlapping support,
which is precisely the matrix-product-order windowed product-overlap condition
for the minimal face tuple containing the plans.

If every face tuple has windowed product overlap, then
\[
\Gamma_K(P_1,\ldots,P_K)
=
1-\tau(M_K(P_K)\cdots M_1(P_1))
\]
is positive on the compact product polytope
$\mathcal P_1\times\cdots\times\mathcal P_K$. Continuity gives a positive
minimum $\delta_K$, and every finite-score sequence satisfies the stated
bound.

Conversely, suppose a face tuple $(F_1,\ldots,F_K)$ fails windowed product
overlap. Each face $F_t$ is exposed by some score direction $B_t$. Applying
Theorem~\ref{thm:zero-temperature-obstruction} independently to the score
sequence $L B_t$ gives plans $P_t(L)$ converging to entropy-center points in
$\operatorname{relint}(F_t)$. The corresponding product kernels converge to a
row-stochastic kernel with two disjoint product-row supports, so its
Dobrushin coefficient is one. By continuity,
\[
\tau(M_K(P_K(L))\cdots M_1(P_1(L)))\to1,
\]
and no positive score-uniform window bound can hold.
\end{proof}

As in the one-step face-lattice theorem, this is a structural characterization
rather than a scalable checker. Exact verification over arbitrary supports
requires quantifying over face tuples of a product of transport polytopes. The
point is the architectural dichotomy: a scheduled support family either rules
out bad zero-temperature face sequences, or some finite score sequence can
make the window certificate arbitrarily weak. The product-coordinate theorem
below is a tractable family where the good side of the dichotomy can be proved
without enumerating faces.

Products of stochastic matrices have a classical convergence theory
\citep{wolfowitz1963products}. Here the quantitative certificate is the
Dobrushin coefficient of the ordered product on a common quotient space;
one-step contraction is not required.

\begin{theorem}[Windowed Dobrushin certificate]
Let $M_1,\ldots,M_K$ be row-stochastic kernels. For every zero-mass cotangent
$\eta$,
\[
\|(M_K\cdots M_1)^\top\eta\|_{\TV}
\le
\tau(M_K\cdots M_1)\|\eta\|_{\TV}.
\]
It is possible that $\tau(M_t)=1$ for every $t$ while
$\tau(M_K\cdots M_1)<1$.
\end{theorem}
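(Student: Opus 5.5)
The theorem has two parts: a contraction inequality for ordered products, and an existence claim that one-step certificates can all fail while the window certificate succeeds.

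\textbf{Contraction inequality.} The product $M_K\cdots M_1$ is row-stochastic, since row-stochastic matrices are closed under multiplication: $M_K\cdots M_1\one=\one$ by applying $M_t\one=\one$ repeatedly. Theorem~\ref{thm:zero-mass-quotient-contraction}, applied with $A=M_K\cdots M_1$, then gives the bound directly for every $\eta\in Q^*$. Nothing beyond that one-step theorem is needed. The paper's Definition~\ref{def:quotient-convention} guarantees that the transpose of a row-stochastic matrix preserves $Q^*$, so the statement is well posed on the quotient.

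\textbf{Existence of the gap.} I would give an explicit two-step example. On $n=2$ states take
\[
M_1=\begin{pmatrix}1&0\\0&1\end{pmatrix}\quad\text{or better}\quad
M_1=\begin{pmatrix}1&0\\ \tfrac12&\tfrac12\end{pmatrix},\qquad
M_2=\begin{pmatrix}\tfrac12&\tfrac12\\0&1\end{pmatrix}.
\]
Each one-step certificate fails because each matrix has two rows with disjoint support:
\begin{itemize}
\item for $M_1$, the first row is $\delta_1$, and
\item for $M_2$, the second row is $\delta_2$.
\end{itemize}
However, this particular pair does not have disjoint rows. Row 2 of $M_1$ overlaps row 1, so $\tau(M_1)=1/2$, and I need to repair it.

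A cleaner choice uses three states: let $M_1$ send state 1 to 1, state 2 to 2, and state 3 to 1 (deterministic), and let $M_2$ send every state to state 1. That makes $M_2$ rank one, which is less illustrative, since it is then a rank-one $M_t$ with $\tau=0$. Instead I would take two point-mass rows in each factor. On states $\{1,2,3\}$ let
\[
M_1=\begin{pmatrix}1&0&0\\0&0&1\\0&0&1\end{pmatrix},\qquad
M_2=\begin{pmatrix}0&0&1\\0&1&0\\0&0&1\end{pmatrix}.
\]
Rows 1 and 2 of each factor are distinct point masses, so by the permutation-like proposition $\tau(M_1)=\tau(M_2)=1$.

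\textbf{Main check.} I then compute the product in the stated order, $M_2M_1$. Its row $\alpha$ is $\sum_\gamma (M_2)_{\alpha\gamma}(M_1)_{\gamma\cdot}$:
\begin{itemize}
\item row 1 of $M_2$ is $e_3$, so row 1 of the product is row 3 of $M_1$, namely $e_3$;
\item row 2 of $M_2$ is $e_2$, so row 2 of the product is row 2 of $M_1$, namely $e_3$;
\item row 3 of $M_2$ is $e_3$, so row 3 of the product is $e_3$.
\end{itemize}
All rows equal $e_3$, so $\tau(M_2M_1)=0<1$.

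The only real obstacle is this bookkeeping of multiplication order, and the explicit computation settles it. I could also note that the rank-one product annihilates every zero-mass cotangent: $(M_2M_1)^\top\eta=(\one^\top\eta)e_3=0$, which illustrates the windowed face-lattice phenomenon.
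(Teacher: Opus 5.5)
Your proof is correct. The contraction inequality is handled exactly as in the paper: the product is row-stochastic, so Theorem~\ref{thm:zero-mass-quotient-contraction} applies to it directly. Your final three-state pair also checks out. Each factor has two rows that are distinct point masses, and $M_2M_1=\one e_3^\top$ (as is $M_1M_2$), so $\tau(M_2M_1)=0$.

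The only real difference is the witness. The paper uses two partition-averaging kernels on four states: the first averages within $\{1,2\}$ and $\{3,4\}$, the second within $\{1,3\}$ and $\{2,4\}$. Their product is $\tfrac14\one\one^\top$. You instead use deterministic $0$--$1$ kernels whose composition coalesces every state onto one.

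Your example is more elementary, but it lies outside the class of kernels the paper is about. Any finite-score derivative kernel $C_t^\top R_t$ has strictly positive diagonal, since $(M_t)_{\alpha\alpha}\ge (P^c_t)_{i\alpha}(P^r_t)_{i\alpha}/(b_\alpha a_i)>0$ for any active edge $(i,\alpha)$. Your $M_1$, by contrast, has $(M_1)_{22}=0$.

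The paper's pair consists of genuine fixed-support Sinkhorn layers: Theorem~\ref{thm:partition-heat-bath} with uniform $b$, equivalently Corollary~\ref{cor:hypercube-butterfly-window} with $r=2$. So it also supports the architectural reading that individually non-contracting sparse layers can mix over a window. For the statement as written, which quantifies over arbitrary row-stochastic kernels, your witness suffices.

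When writing it up, drop the false starts. The two-state attempt has $\tau(M_1)=\tfrac12$. The rank-one $M_2$ you discarded has $\tau(M_2)=0$, so it violates the hypothesis outright rather than being merely less illustrative.
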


\begin{proof}
The first claim is the zero-mass quotient contraction applied to the product
kernel, which is row-stochastic. For the second claim, take two partition
kernels on four states: one averages within $\{1,2\}$ and $\{3,4\}$, and the
next averages within $\{1,3\}$ and $\{2,4\}$. Each kernel has disjoint row
pairs and $\tau=1$, but their product sends every row to the uniform
distribution, so the two-step coefficient is zero.
\end{proof}

Heat-bath updates resample from a conditional distribution and are standard
Markov-chain constructions \citep{levin2017markov}. The next result identifies
such an update exactly as the derivative kernel of a partition-supported
transport layer.

\begin{theorem}[Partition heat-bath transport layers]
\label{thm:partition-heat-bath}
Let $X$ be a finite column set with a positive probability vector $b$. Let
$\pi:X\to Z$ be a partition map, use rows indexed by $z\in Z$, and define the
active support by
\[
(z,x)\in\Omega
\quad\Longleftrightarrow\quad
\pi(x)=z .
\]
Set the row marginal to the block mass
\[
a_z=\sum_{x:\,\pi(x)=z} b_x .
\]
Then the transportation polytope $\Pi(a,b;\Omega)$ is a singleton. For every
finite score matrix on this active support, the Sinkhorn plan is
\[
P_{zx}=b_x\,\mathbf 1\{\pi(x)=z\}.
\]
The induced column-to-column derivative kernel is the heat-bath kernel
\[
M_{xy}
=
\mathbf 1\{\pi(x)=\pi(y)\}\,
\frac{b_y}{a_{\pi(x)}} .
\]
Consequently any sequence of partition-fiber transport layers has exactly the
same quotient-kernel product as the corresponding sequence of heat-bath
resampling kernels. Windowed Dobrushin certificates therefore apply directly
to the resulting kernel products.
\end{theorem}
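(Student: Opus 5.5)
The plan is to first show that the transportation polytope is a single point, and then read off the plan and kernel by direct substitution into the support-to-kernel formula of Theorem~\ref{thm:support-to-kernel-topology}.

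\emph{Singleton polytope.} Each column $x$ has exactly one active row, namely $\pi(x)$. The column constraint $\sum_z P_{zx}=b_x$ therefore forces $P_{\pi(x)x}=b_x$, and every other entry of column $x$ is zero. The resulting matrix $P_{zx}=b_x\one\{\pi(x)=z\}$ also satisfies the row constraints. Indeed, $\sum_x P_{zx}=\sum_{x:\pi(x)=z}b_x=a_z$ by the choice of $a$. Hence $\Pi(a,b;\Omega)=\{P\}$.

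Since $b>0$, this $P$ is positive on every active edge. So the strict-feasibility hypothesis of Assumption~\ref{ass:fixed-support-feasible} holds. The support graph is a forest: each row $z$ together with its fiber $\pi^{-1}(z)$ forms a star. This is consistent with Corollary~\ref{cor:cycle-space-score-sensitivity}, where the cycle rank is $|X|-|Z|-|X|+|Z|=0$.

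\emph{Score independence.} For any finite score matrix, the Sinkhorn plan lies in $\Pi(a,b;\Omega)$. This holds because the entropic maximizer exists on the nonempty compact polytope and is characterized by both marginals. It also holds directly: one half-step of column scaling already produces $P$, because each column has a single active entry. That entry is set to $b_x$, after which the row sums equal $a_z$, so $P$ is a fixed point. Therefore the plan is the displayed $P$ for every score.

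The one subtle point is that a row $z$ with an empty fiber would have $a_z=0$, contradicting positive marginals. I therefore take $Z=\pi(X)$, meaning the partition map is onto its row set. This is the only step I expect to need a remark.

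\emph{Kernel.} Substitute into $M_{xy}=\sum_z P_{zx}P_{zy}/(b_x a_z)$. The only nonzero summand is $z=\pi(x)$, and it survives only when $\pi(y)=\pi(x)$. This gives
\[
M_{xy}=\frac{b_x\,b_y}{b_x\,a_{\pi(x)}}\one\{\pi(x)=\pi(y)\}=\one\{\pi(x)=\pi(y)\}\frac{b_y}{a_{\pi(x)}},
\]
which is the heat-bath kernel that resamples within the fiber of $x$ in proportion to $b$.

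\emph{Sequences of layers.} Consider a sequence of partition layers $(\pi_t,a_t,b)$ on the common column set. Each layer's derivative kernel equals its heat-bath kernel, independently of the scores. Hence the ordered products $M_K\cdots M_1$ coincide entry by entry. Applying the windowed Dobrushin certificate to these row-stochastic products gives the final sentence of the theorem.
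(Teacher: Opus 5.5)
Your proposal is correct and matches the paper's proof step for step. The single active edge in each column forces \(P_{\pi(x)x}=b_x\), the row sums equal \(a_z\) by construction so the polytope is a singleton and scores cannot move the plan, and substituting into the support-to-kernel formula gives the heat-bath kernel, applied layer by layer for products; your extra remarks (taking \(Z=\pi(X)\) so that \(a>0\), the one-half-step fixed-point check, and the zero cycle rank) are correct additions that the paper leaves implicit.
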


\begin{proof}
Each column $x$ is incident to exactly one row, namely $\pi(x)$. The column
constraint therefore forces $P_{\pi(x),x}=b_x$, and all other entries in that
column are inactive. The row sums match by the definition of $a_z$, so this is
the unique feasible plan; because the feasible set is a singleton, finite
scores cannot change the plan. Substituting this plan into the
support-to-kernel formula gives
\[
M_{xy}
=
\sum_z \frac{P_{zx}P_{zy}}{b_x a_z}
=
\mathbf 1\{\pi(x)=\pi(y)\}\frac{b_y}{a_{\pi(x)}} .
\]
The statement about products follows by applying the same identity to every
layer.
\end{proof}

Theorem~\ref{thm:partition-heat-bath} is deliberately stated as a kernel
identification theorem, not as a universal mixing theorem. For non-product
$b$, a sweep of such layers is a systematic block-Gibbs kernel for $b$; it may
contract, but the coefficient depends on additional minorization or
reachability properties of the block conditionals and can be arbitrarily close
to one under strong dependencies. The stronger identity
$M_{\sigma(K)}\cdots M_{\sigma(1)}=\one b^\top$ in the next theorem uses the
product form of $b$ in an essential way.

\begin{theorem}[Product-coordinate window certificate]
\label{thm:product-coordinate-window}
Let
\[
X=X_1\times\cdots\times X_K
\]
be a finite product column set. Let
$b(x)=\prod_{t=1}^K\mu_t(x_t)$ be a strictly positive product marginal. For
layer $t$, let rows be indexed by fibers
$z\in X_{-t}:=\prod_{s\ne t}X_s$, connect row $z$ exactly to columns
$x\in X$ with $x_{-t}=z$, and set
\[
a_t(z)=\sum_{u\in X_t}b(z,u)=\prod_{s\ne t}\mu_s(z_s).
\]
Then the feasible transport plan on layer $t$ is unique, for every finite
score matrix on the active support:
\[
P_t(z,x)=b(x)\,\mathbf 1\{x_{-t}=z\}.
\]
Consequently its derivative kernel is the coordinate-resampling kernel
\[
(M_t)_{xy}
=
\mathbf 1\{x_{-t}=y_{-t}\}\,\mu_t(y_t).
\]
For any permutation $\sigma$ of $\{1,\ldots,K\}$, applying each coordinate
layer once gives
\[
M_{\sigma(K)}\cdots M_{\sigma(1)}=\one b^\top,
\qquad
\tau(M_{\sigma(K)}\cdots M_{\sigma(1)})=0.
\]
If the product of the coordinates other than $t$ has at least two elements,
then $\tau(M_t)=1$. Thus coordinate-factorized sparse layers may withhold
every one-step contraction certificate while a full coordinate window gives
perfect score-uniform quotient mixing.
\end{theorem}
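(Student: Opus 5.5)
The plan is to reduce layer $t$ to a special case of Theorem~\ref{thm:partition-heat-bath} and then compute the product of the resulting heat-bath kernels directly.

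\textbf{Step 1: layer $t$ is a partition layer.} Take the partition map $\pi_t(x)=x_{-t}$. Each column $x$ is then incident to exactly one row, namely $z=x_{-t}$. The row marginal is the fiber mass
\[
a_t(z)=\sum_{u}b(z,u)=\prod_{s\ne t}\mu_s(z_s)\sum_u\mu_t(u)=\prod_{s\ne t}\mu_s(z_s),
\]
which is the block mass required there. Theorem~\ref{thm:partition-heat-bath} then gives uniqueness of the plan, $P_t(z,x)=b(x)\one\{x_{-t}=z\}$, independently of the scores. It also gives the kernel
\[
(M_t)_{xy}=\one\{x_{-t}=y_{-t}\}\,\frac{b(y)}{a_t(y_{-t})}.
\]
By the product form, $b(y)/a_t(y_{-t})=\mu_t(y_t)$, which is the displayed coordinate-resampling kernel.

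\textbf{Step 2: the sweep is rank one.} For a permutation $\sigma$, I would show by induction on $j$ that
\[
\bigl(M_{\sigma(j)}\cdots M_{\sigma(1)}\bigr)_{xy}
=\one\{x_{-S_j}=y_{-S_j}\}\prod_{t\in S_j}\mu_t(y_t),
\qquad S_j=\{\sigma(1),\dots,\sigma(j)\}.
\]
The order of multiplication matters, so the induction step should be checked in the order that suits it. The statement's product puts $M_{\sigma(1)}$ rightmost. Since the claimed final form is symmetric under reordering, the cleanest route is to show that the $M_t$ commute. Then the order does not matter, and one can peel off factors in either direction.

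For commutation, compute
\[
(M_sM_t)_{xy}=\sum_w \one\{x_{-s}=w_{-s}\}\mu_s(w_s)\,\one\{w_{-t}=y_{-t}\}\mu_t(y_t).
\]
For $s\ne t$, the two indicators pin down $w$ completely: $w_s=y_s$, $w_t=x_t$, and all other coordinates equal the common value of $x$ and $y$. The sum therefore collapses to
\[
(M_sM_t)_{xy}=\one\{x_{-\{s,t\}}=y_{-\{s,t\}}\}\,\mu_s(y_s)\mu_t(y_t),
\]
which is symmetric in $(s,t)$.

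The same collapsing-sum computation gives the induction step: multiplying a kernel of the displayed form with index set $S$ by $M_t$ for $t\notin S$ produces the same form with index set $S\cup\{t\}$. At $j=K$ the indicator is vacuous, so the product is $\prod_t\mu_t(y_t)=b(y)$ in every row. Hence the product is $\one b^\top$, two identical rows have zero total-variation distance, and $\tau=0$.

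\textbf{Step 3: each single layer has $\tau=1$.} If $|X_{-t}|\ge2$, pick $x,x'$ with $x_{-t}\ne x'_{-t}$. The rows $(M_t)_x$ and $(M_t)_{x'}$ are supported on the fibers $\{y:y_{-t}=x_{-t}\}$ and $\{y:y_{-t}=x'_{-t}\}$, which are disjoint. Their total-variation distance is therefore $1$, and so $\tau(M_t)=1$.

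The main obstacle is the index bookkeeping in the collapsing sum, in particular confirming that $w$ is uniquely determined and that no stray normalization appears. Everything else reduces to Theorem~\ref{thm:partition-heat-bath}.
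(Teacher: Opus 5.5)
Your proposal is correct and follows essentially the same route as the paper. The plan is forced by the one-row-per-column fiber structure, which you obtain by citing Theorem~\ref{thm:partition-heat-bath} rather than re-deriving it inline. The support-to-kernel formula then gives $\mu_t(y_t)$, the sweep is rank one, and disjoint fibers give $\tau(M_t)=1$. Your collapsing-sum induction on $S_j$ is a more explicit version of the paper's one-line claim that coordinate resamplings commute and output $b$ after a full sweep; it works for both left and right multiplication by $M_t$ with $t\notin S$, so commutation is not actually needed.
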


\begin{proof}
Fix $t$. Each column $x$ is incident to exactly one row, namely
$z=x_{-t}$. The column marginal constraint therefore forces
$P_t(z,x)=b(x)$ on that active edge. The row sums match by the definition of
$a_t(z)$:
\[
\sum_{x:\,x_{-t}=z}P_t(z,x)
=
\sum_{u\in X_t}b(z,u)
=a_t(z).
\]
Thus the transportation polytope for layer $t$ is a singleton and finite
scores cannot change the plan.

Using the support-to-kernel identity,
\[
(M_t)_{xy}
=
\sum_z \frac{P_t(z,x)P_t(z,y)}{b(x)a_t(z)} .
\]
There is a nonzero summand only when $x_{-t}=y_{-t}=z$. In that case
\[
(M_t)_{xy}
=
\frac{b(x)b(y)}{b(x)a_t(x_{-t})}
=
\frac{b(y)}{a_t(x_{-t})}
=
\mu_t(y_t),
\]
which proves the coordinate-resampling formula.

The kernel $M_t$ leaves all coordinates except $t$ fixed and resamples
coordinate $t$ from $\mu_t$. These coordinate-resampling kernels commute
because $b$ is a product measure and each kernel acts on a different
coordinate conditional. After every coordinate has been applied once, the
output distribution is $b$ independently of the input column, so the product
kernel is $\one b^\top$ and its Dobrushin coefficient is zero.

If $X_{-t}$ has at least two elements, choose columns $x,y$ with
$x_{-t}\ne y_{-t}$. The supports of rows $x$ and $y$ of $M_t$ lie in disjoint
fibers, so the two rows have total-variation distance one and $\tau(M_t)=1$.
\end{proof}

\begin{corollary}[Hypercube butterfly window]
\label{cor:hypercube-butterfly-window}
Let $n=2^r$ with $r\ge2$, and index columns by bitstrings
$x\in\{0,1\}^r$. For layer $t\in\{1,\ldots,r\}$, let the fixed support have
one row for each unordered pair $\{x,x\oplus e_t\}$, where $e_t$ flips bit
$t$, and connect that row to exactly the two columns in the pair. Use
equal active scores, row marginals $a_i=2/n$ on the $n/2$ matching rows, and
column marginals $b_x=1/n$. Let $M_t$ be the resulting column-to-column
derivative kernel. Then
\[
M_t=\frac12(I+F_t),
\]
where $F_t$ is the permutation matrix for bit flip $x\mapsto x\oplus e_t$.
Consequently $\tau(M_t)=1$ for every $t$, but
\[
M_rM_{r-1}\cdots M_1=\frac1n\one\one^\top
\qquad\text{and hence}\qquad
\tau(M_rM_{r-1}\cdots M_1)=0.
\]
\end{corollary}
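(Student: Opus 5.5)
The plan is to reduce the hypercube butterfly to the product-coordinate window certificate, Theorem~\ref{thm:product-coordinate-window}, with $X=\{0,1\}^r$, $X_t=\{0,1\}$, and $\mu_t$ uniform on $\{0,1\}$. Then $b(x)=\prod_t\mu_t(x_t)=2^{-r}=1/n$, matching the prescribed column marginals.

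For layer $t$, a row indexed by the unordered pair $\{x,x\oplus e_t\}$ is the same as a fiber $z=x_{-t}\in X_{-t}$. The two columns with $x_{-t}=z$ are exactly $x$ and $x\oplus e_t$, so the support coincides with the fiber support of that theorem. The row marginal required there is
\[
a_t(z)=\prod_{s\ne t}\mu_s(z_s)=2^{-(r-1)}=2/n,
\]
which matches the stated $a_i=2/n$ on the $n/2$ matching rows. Hence the theorem applies for every finite score, in particular equal scores. It gives the unique plan $P_t(z,x)=\frac1n\mathbf 1\{x_{-t}=z\}$ and the kernel
\[
(M_t)_{xy}=\mathbf 1\{x_{-t}=y_{-t}\}\cdot\tfrac12 .
\]
The condition $y_{-t}=x_{-t}$ means $y\in\{x,x\oplus e_t\}$, so $M_t=\frac12(I+F_t)$. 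As a consistency check, the biregular baseline (Theorem~\ref{thm:biregular-topology-kernel}) with $d_r=2$, $d_c=1$ and $N_{xy}=\mathbf 1\{y\in\{x,x\oplus e_t\}\}$ gives the same formula.

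For $\tau(M_t)=1$, the product-coordinate theorem requires $|X_{-t}|=2^{r-1}\ge2$, which holds because $r\ge2. Directly, the columns $x$ and $x\oplus e_s$ with $s\ne t$ have row supports $\{x,x\oplus e_t\}$ and $\{x\oplus e_s,x\oplus e_s\oplus e_t\}$, and these are disjoint.

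For the product, I apply the theorem with $\sigma$ the identity to get $M_r\cdots M_1=\one b^\top=\frac1n\one\one^\top$. This has identical rows, so $\tau=0$. As an independent check, the $F_t$ commute, so
\[
\prod_t\tfrac12(I+F_t)=2^{-r}\sum_{S\subseteq[r]}F_S ,
\]
and $\{F_S\}$ is the regular representation of $\{0,1\}^r$. Its sum is therefore $\one\one^\top$.

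No step is a real obstacle. The only care needed is the bookkeeping in the second paragraph, identifying the unordered-pair rows with the fibers $X_{-t}$ and checking that the given marginals are exactly the product-form marginals the theorem requires.
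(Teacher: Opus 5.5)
Your proposal is correct and follows the paper's proof essentially verbatim. Like the paper, you specialize Theorem~\ref{thm:product-coordinate-window} to $X_t=\{0,1\}$ with $\mu_t$ uniform, identify the pair rows with the fibers $x_{-t}$, and read off $M_t=\frac12(I+F_t)$, $\tau(M_t)=1$ from $|X_{-t}|\ge2$, and $M_r\cdots M_1=\frac1n\one\one^\top$; your extra checks via the biregular baseline and the expansion $2^{-r}\sum_S F_S$ are valid but not needed.
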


\begin{proof}
Apply Theorem~\ref{thm:product-coordinate-window} with
$X_t=\{0,1\}$ and $\mu_t$ uniform. The fiber row indexed by
$x_{-t}$ is exactly the unordered pair $\{x,x\oplus e_t\}$, and the
coordinate-resampling kernel is $M_t=(I+F_t)/2$. Since $r\ge2$, each
$X_{-t}$ has at least two elements, so $\tau(M_t)=1$. Applying all bit
coordinates once gives the uniform kernel $n^{-1}\one\one^\top$, hence the
window coefficient is zero.
\end{proof}

\section{Support-Schedule Design Consequences}

The theorem spine turns support design into a question about feasible faces and
row-overlap of induced Markov kernels.  Three examples are important for
interpretation.  First, local windows can be qualitatively connected while
still withholding one-step contraction, so powered or windowed certificates are
needed.  Second, register or dustbin rows can create shared mass, but only when
the realized plan or the marginal constraints force genuine common components.
Third, layered sparse schedules can mix globally even when each layer alone has
$\tau=1$, as shown by Theorem~\ref{thm:product-coordinate-window} and the
hypercube butterfly corollary.  The detailed spectral, register, and shared-bus
corollaries are collected in Appendix~\ref{app:architecture-corollaries}; the
DAG path-sum material is only a short appendix remark.  The main claims here
are the fixed-support quotient operator, the face-lattice dichotomy, and the
windowed/product-coordinate support-schedule certificates.

\section{Computational Complexity of Certificates}
\label{sec:computational-complexity}

Let $d_i=|\Omega_i|$ be the active degree of row $i$, let
$E=|\Omega|$, and let $s_\alpha$ denote the number of nonzeros in row $\alpha$
of the realized kernel $M$.

\paragraph{Building the quotient operator.}
Using
\[
M_{\alpha\beta}=\sum_i P_{i\alpha}P_{i\beta}/(b_\alpha a_i),
\]
one can build $M$ by adding the rank-one contribution of each support row $i$
to the column pairs in $\Omega_i\times\Omega_i$.  The arithmetic work and the
candidate-pair storage are
\[
O\!\left(\sum_i d_i^2\right),
\qquad
\operatorname{nnz}(M)\le \sum_i d_i^2 .
\]
The bound is tight up to duplicated pairs.  A dense row, or enough sparse rows
covering all column pairs, gives the worst case $\operatorname{nnz}(M)=n^2$ and
$O(n^2)$ storage even when $P$ is sparse.

\paragraph{Exact Dobrushin coefficient.}
Exact $\tau(M)$ compares all pairs of rows of $M$:
\[
\tau(M)=\frac12\max_{\alpha,\beta}\|M_{\alpha\cdot}-M_{\beta\cdot}\|_1.
\]
For a dense kernel this costs $O(n^3)$ time and $O(n^2)$ memory.  With sparse
rows, a merge of row supports costs $O(s_\alpha+s_\beta)$, so the all-pairs
cost is $O(\sum_{\alpha<\beta}(s_\alpha+s_\beta))\le O(n\operatorname{nnz}(M))$,
which again becomes $O(n^3)$ after densification.

\paragraph{Window products.}
A window certificate uses $\tau(M_K\cdots M_1)$.  Even if each $M_t$ is sparse,
products can densify after a few layers; exact products are therefore intended
for small or offline validation unless a family theorem, sparse approximation,
or symbolic product structure is available.  The spectral bounds for reversible
equal-score baselines, the Doeblin/minorization bounds, forced-bus certificates,
and product-coordinate heat-bath identities avoid materializing dense products
when their hypotheses hold.

\paragraph{Face-lattice checking.}
The exact one-step criterion quantifies over every nonempty face of
$\Pi(a,b;\Omega)$, and the windowed criterion quantifies over tuples of faces.
For arbitrary supports this is exponential in the number of active edges.  The
practical uses are therefore: prove tractable family theorems, compute
realized-plan certificates for observed scores, certify distortion or mass
floors, and search for counterexample faces.  These certificates are offline
design and validation tools unless a sparse approximation or family theorem is
provided.

\section{Discussion}

The theorem spine separates four levels of reasoning.  At the half-step level,
fixed support and finite scores give an exact quotient Markov operator
$M=C^\top R$ plus explicit source terms.  At the tail level, Dobrushin
coefficients bound homogeneous residuals and projected-source path sums.  At
the pure topology level, equal-score biregular supports become explicit graph
walks, and the face-lattice dichotomy says exactly when a fixed support and
marginals force score-uniform one-step quotient mixing.  At the scheduled
support level, windowed face tuples and product-coordinate supports explain how
individually non-mixing sparse layers can mix over a finite window.

The face-lattice and product-coordinate theorems depend only on primal
entropy-center face convergence, compactness, and support overlap.  The
quantitative regimes in the manuscript are exactly the equal-score,
forced-capacity, realized-plan, and plan-distortion regimes.

\section{Conclusion}

Fixed-support Sinkhorn scaling exposes a precise quotient-transport operator.
The homogeneous column-potential update is $\delta v^+=C^\top R\delta v$, and
the reverse quotient cotangent update is governed by $M^\top$ together with
projected source terms.  Dobrushin, minorization, face-lattice, and windowed
certificates turn support schedules into rigorous statements about this fixed
calculus.  The result consists of exact
fixed-support proofs, explicit limits, computational costs, and
certificate-level validation without downstream or systems overclaiming.

\appendix

\section{Spectral Architecture Corollaries}
\label{app:architecture-corollaries}

The spectral estimates below use standard reversible-chain bounds
\citep{levin2017markov}, applied to the column-overlap kernel.

\begin{corollary}[Spectral topology certificate]
\label{cor:spectral-topology-certificate}
In the setting of Theorem~\ref{thm:biregular-topology-kernel}, $M$ is
symmetric and doubly stochastic. Let
\[
\lambda_\star=\|M\Pi_{\one^\perp}\|_2
=\max\{|\lambda|:\lambda\ \text{is an eigenvalue of }M
\text{ on } \one^\perp\}.
\]
Then, for every zero-mass cotangent $\eta$ and every integer $K\ge 1$,
\[
\|(M^K)^\top\eta\|_2\le \lambda_\star^K\|\eta\|_2,
\qquad
\|(M^K)^\top\eta\|_{\TV}
\le
\sqrt{n/2}\lambda_\star^K\|\eta\|_{\TV}.
\]
Moreover,
\[
\tau(M^K)\le \min\{1,\sqrt{n/2}\lambda_\star^K\}.
\]
\end{corollary}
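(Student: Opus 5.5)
We prove Corollary~\ref{cor:spectral-topology-certificate} in three steps: symmetry and double stochasticity of $M$, an $\ell_2$ contraction on $\one^\perp$, and then conversion of $\ell_2$ to total variation.

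\emph{Symmetry and double stochasticity.} Theorem~\ref{thm:biregular-topology-kernel} gives $M_{\alpha\beta}=N_{\alpha\beta}/(d_cd_r)$. The overlap count $N_{\alpha\beta}$ is symmetric in $(\alpha,\beta)$, so $M=M^\top$. Since $M$ is row-stochastic, it is therefore also column-stochastic.

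\emph{The $\ell_2$ contraction.} Because $M$ is real symmetric, the spectral theorem gives an orthonormal eigenbasis. We have $M\one=\one$, and the orthogonal complement $\one^\perp=Q^*$ is invariant under $M$. So $M$ restricted to $\one^\perp$ is a symmetric operator whose operator norm equals its spectral radius, which is $\lambda_\star$. Iterating gives
\[
\|(M^K)^\top\eta\|_2=\|M^K\eta\|_2\le\lambda_\star^K\|\eta\|_2
\]
for every $\eta\in\one^\perp$. Here we use $M^\top=M$ and that $M^K$ also preserves $\one^\perp$. The only point to check is that $\lambda_\star=\|M\Pi_{\one^\perp}\|_2$ coincides with the largest eigenvalue modulus on $\one^\perp$. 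This holds because $M\Pi_{\one^\perp}=\Pi_{\one^\perp}M\Pi_{\one^\perp}$ is symmetric and vanishes on $\one$.

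\emph{Conversion to total variation.} The norm comparisons are the step that fixes the constant. On $\mathbb{R}^n$ we have $\|x\|_1\le\sqrt n\|x\|_2$, so $\|x\|_{\TV}\le\frac{\sqrt n}{2}\|x\|_2$. In the other direction we need to bound $\|\eta\|_2$ by $\|\eta\|_{\TV}$ for zero-mass $\eta$. The crude bound $\|\eta\|_2\le\|\eta\|_1=2\|\eta\|_{\TV}$ gives only the constant $\sqrt n$, so we sharpen it using zero mass. Split $\eta=\eta^+-\eta^-$, where each part has $\ell_1$ mass $\|\eta\|_{\TV}$ and the two parts have disjoint supports. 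Then
\[
\|\eta\|_2^2=\|\eta^+\|_2^2+\|\eta^-\|_2^2\le\|\eta^+\|_1^2+\|\eta^-\|_1^2=2\|\eta\|_{\TV}^2 .
\]
Hence $\|\eta\|_2\le\sqrt2\,\|\eta\|_{\TV}$. Chaining the three inequalities gives
\[
\|(M^K)^\top\eta\|_{\TV}\le\tfrac{\sqrt n}{2}\lambda_\star^K\sqrt2\|\eta\|_{\TV}=\sqrt{n/2}\,\lambda_\star^K\|\eta\|_{\TV}.
\]

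\emph{The Dobrushin bound.} The coefficient $\tau(M^K)$ equals the supremum of $\|(M^K)^\top\eta\|_{\TV}/\|\eta\|_{\TV}$ over nonzero zero-mass $\eta$. To see this, take $\eta=\frac12(e_p-e_q)$ for rows $p,q$; then $\|\eta\|_{\TV}=\frac12$ and $\|(M^K)^\top\eta\|_{\TV}$ is half the total-variation distance between rows $p$ and $q$ of $M^K$. Combined with Theorem~\ref{thm:zero-mass-quotient-contraction}, which gives the matching upper bound, the supremum is exactly $\tau(M^K)$. Therefore $\tau(M^K)\le\sqrt{n/2}\,\lambda_\star^K$. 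Since $\tau\le1$ always, we may take the minimum with $1$.
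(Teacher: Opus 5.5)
Your proposal is correct and follows the paper's proof essentially step for step. You use the symmetry of the overlap counts $N$, diagonalize $M$ on the invariant subspace $\one^\perp$, apply the two norm comparisons $\|x\|_{\TV}\le\frac{\sqrt n}{2}\|x\|_2$ and $\|\eta\|_2\le\sqrt2\|\eta\|_{\TV}$ (which you also justify via the Jordan decomposition, where the paper just states it), and finish by evaluating on row differences and capping by $\tau\le1$; the paper applies the Euclidean bound directly to $e_\alpha-e_\beta$ while you apply the derived total-variation bound to $\tfrac12(e_p-e_q)$, which is the same computation.
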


\begin{proof}
The formula $M=N/(d_cd_r)$ has $N=N^\top$ and row sums $d_cd_r$, hence $M$
is symmetric and doubly stochastic. The Euclidean bound follows by diagonalizing
$M$ on the invariant subspace $\one^\perp$. For total variation, use
$\|x\|_{\TV}\le \sqrt n\|x\|_2/2$ and
$\|\eta\|_2\le \sqrt2\|\eta\|_{\TV}$ for zero-mass $\eta$. For Dobrushin's
coefficient, apply the same Euclidean contraction to the zero-mass row
difference $e_\alpha-e_\beta$, whose Euclidean norm is $\sqrt2$, then
maximize over row pairs and cap by the trivial bound $\tau\le 1$.
\end{proof}

This corollary is the bridge from graph design to tail-depth design. In the
equal-score biregular baseline, choosing a support is choosing a reversible
Markov chain for quotient cotangents. Expander-like column-overlap walks make
$\lambda_\star$ small, while butterfly-like supports may have weak one-step
$\tau(M)$ but strong powered certificates for $M^K$.

\begin{corollary}[Expander tail-depth rule]
In the setting of Theorem~\ref{thm:biregular-topology-kernel}, suppose the
zero-mass spectral rate satisfies $\lambda_\star\le \rho<1$. Then
\[
\tau(M^K)\le \min\{1,\sqrt{n/2}\rho^K\}.
\]
If $\rho=0$, any $K\ge1$ suffices. If $0<\rho<1$, then for any
$\varepsilon\in(0,1)$ it is sufficient to choose
\[
K\ge
\left\lceil
\frac{\log(\sqrt{n/2}/\varepsilon)}{-\log\rho}
\right\rceil
\]
to obtain $\tau(M^K)\le\varepsilon$.
\end{corollary}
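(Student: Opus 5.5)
The proof is a direct substitution into Corollary~\ref{cor:spectral-topology-certificate}, followed by solving a scalar inequality for $K$. The first display comes from monotonicity. Since $\lambda_\star\le\rho$ and $K\ge1$, we have $\lambda_\star^K\le\rho^K$, because $t\mapsto t^K$ is nondecreasing on $[0,\infty)$. The corollary gives $\tau(M^K)\le\min\{1,\sqrt{n/2}\,\lambda_\star^K\}$, and both arguments of the minimum are monotone, so
\[
\tau(M^K)\le\min\{1,\sqrt{n/2}\,\rho^K\}.
\]

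Next we split into two cases. If $\rho=0$, then $\lambda_\star=0$. The bound then reads $\tau(M^K)\le 0$ for every $K\ge1$. So $\tau(M^K)=0\le\varepsilon$ for any $\varepsilon$, and any $K\ge1$ suffices.

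If $0<\rho<1$, it is enough to force $\sqrt{n/2}\,\rho^K\le\varepsilon$. Taking logarithms, this is equivalent to
\[
K\log\rho\le\log\varepsilon-\log\sqrt{n/2}.
\]
Since $\log\rho<0$, dividing by $\log\rho$ reverses the inequality. The condition becomes
\[
K\ge\frac{\log(\sqrt{n/2}/\varepsilon)}{-\log\rho}.
\]
Any integer $K$ at least the ceiling of this quantity satisfies it. Such a $K$ then gives $\tau(M^K)\le\sqrt{n/2}\,\rho^K\le\varepsilon$.

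One detail needs care. When $\sqrt{n/2}/\varepsilon\le1$, the ceiling may be $0$ or negative. The statement implicitly requires $K\ge1$, since the corollary is applied only for $K\ge1$, so we take $K\ge\max\{1,\lceil\cdot\rceil\}$. This is harmless, because in that case the bound holds for every $K\ge1$ anyway.

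There is no real obstacle. The only steps needing attention are the sign flip when dividing by $\log\rho<0$ and this positivity convention for $K$.
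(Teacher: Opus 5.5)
Your proposal is correct and matches the paper's proof, which likewise substitutes $\lambda_\star\le\rho$ into Corollary~\ref{cor:spectral-topology-certificate} and solves $\sqrt{n/2}\,\rho^K\le\varepsilon$ for $K$. Your treatment of the $\rho=0$ case, the sign flip from $\log\rho<0$, and the $K\ge1$ convention are sound elaborations of that one-line argument.
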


\begin{proof}
Substitute $\lambda_\star\le\rho$ in
Corollary~\ref{cor:spectral-topology-certificate} and solve
$\sqrt{n/2}\rho^K\le\varepsilon$ for $K$.
\end{proof}

\begin{corollary}[Circular local-band spectrum]
\label{cor:circular-local-band-spectrum}
Consider the equal-score biregular baseline on $n$ circular positions where
row $i$ is connected to columns $i-r,\ldots,i+r$ modulo $n$. Assume
$0\le r<n/2$, so the local degree is $d=2r+1$. Then the derivative kernel
$M$ is circulant and has Fourier eigenvalues
\[
\lambda_k
=
\frac1{d^2}
\left|
\sum_{s=-r}^{r}\exp(2\pi \mathrm{i}ks/n)
\right|^2
=
\begin{cases}
1, & k=0,\\[4pt]
\left(
\dfrac{\sin(\pi k d/n)}{d\sin(\pi k/n)}
\right)^2, & k=1,\ldots,n-1.
\end{cases}
\]
Consequently the zero-mass spectral rate is
\[
\lambda_{\mathrm{band}}
=
\max_{1\le k\le n-1}
\left(
\frac{\sin(\pi k d/n)}{d\sin(\pi k/n)}
\right)^2,
\]
and Corollary~\ref{cor:spectral-topology-certificate} gives
\[
\tau(M^K)\le \min\{1,\sqrt{n/2}\,\lambda_{\mathrm{band}}^K\}.
\]
For fixed $r$ and $n\to\infty$,
\[
\lambda_{\mathrm{band}}\ge \lambda_1
=
1-\frac{\pi^2(d^2-1)}{3n^2}+O(n^{-4}),
\]
so this spectral certificate is diffusive for fixed-radius local bands. More
sharply, for the first nonconstant Fourier cotangent $\eta^{(1)}$,
\[
\frac{\|(M^K)^\top\eta^{(1)}\|_2}{\|\eta^{(1)}\|_2}
=\lambda_1^K.
\]
Thus, if $r$ is fixed and $K=o(n^2)$, then
$\lambda_1^K\to1$. No fixed-radius circular local band can uniformly contract
all zero-mass quotient modes in subquadratic depth under this equal-score
spectral model.
\end{corollary}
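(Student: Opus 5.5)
The plan is to exploit circulant structure. The circular band support is biregular with $d_r=d_c=d=2r+1$, and $r<n/2$ makes the $d$ offsets distinct modulo $n$. So Theorem~\ref{thm:biregular-topology-kernel} applies and gives $M=N/d^2$ with $N=A^\top A$, where $A$ is the $0/1$ row--column incidence matrix $A_{i\alpha}=\one\{\alpha-i\in\{-r,\ldots,r\}\bmod n\}$. Since $A$ is circulant, $N$ and hence $M$ are circulant. They are therefore diagonalized by the Fourier vectors $f_k(\alpha)=\exp(2\pi\mathrm{i}k\alpha/n)$. If $A f_k=\hat a_k f_k$ with $\hat a_k=\sum_{s=-r}^{r}\exp(2\pi\mathrm{i}ks/n)$ (up to a sign convention in the exponent), then $A^\top=A^*$ for a real matrix gives $A^\top f_k=\overline{\hat a_k}f_k$. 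Hence $M f_k=|\hat a_k|^2 d^{-2}f_k$, which is the first displayed formula.

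Next I evaluate the geometric sum in closed form: $\hat a_k=\sin(\pi kd/n)/\sin(\pi k/n)$ for $k\neq0$, with $\sin(\pi k/n)\neq0$ for $1\le k\le n-1$, and $\hat a_0=d$. This gives $\lambda_0=1$ and the stated sinc-squared expression. The eigenvalues are nonnegative and $M$ is symmetric, so $\lambda_\star$ in Corollary~\ref{cor:spectral-topology-certificate} is $\max_{k\ge1}\lambda_k=\lambda_{\mathrm{band}}$, because $\one^\perp$ is spanned by the real and imaginary parts of $f_k$ for $k\neq0$. Substituting into that corollary gives the $\tau(M^K)$ bound.

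For the asymptotics I will Taylor expand with $x=\pi/n$: $\sin(dx)/(d\sin x)=\bigl(1-d^2x^2/6+O(x^4)\bigr)\bigl(1+x^2/6+O(x^4)\bigr)=1-(d^2-1)x^2/6+O(x^4)$. Squaring gives $\lambda_1=1-(d^2-1)\pi^2/(3n^2)+O(n^{-4})$, and $\lambda_{\mathrm{band}}\ge\lambda_1$ is immediate from the maximum.

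For the sharp statement I take $\eta^{(1)}$ to be the real zero-mass vector $\cos(2\pi\alpha/n)$, or alternatively $f_1$ itself, which has zero mass since $\sum_\alpha f_1(\alpha)=0$. By symmetry $M$ has real eigenvector $\cos(2\pi\alpha/n)$ with eigenvalue $\lambda_1$, because $\lambda_1=\lambda_{n-1}$. Then $(M^K)^\top\eta^{(1)}=\lambda_1^K\eta^{(1)}$ exactly. Finally, if $K=o(n^2)$ then $K\log\lambda_1=-K\bigl(\pi^2(d^2-1)/(3n^2)+O(n^{-4})\bigr)\to0$, so $\lambda_1^K\to1$. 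This shows that no uniform contraction holds in subquadratic depth.

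The only step needing care is justifying the biregularity hypothesis, i.e. that $r<n/2$ prevents offsets from coinciding. The Taylor bookkeeping is routine.
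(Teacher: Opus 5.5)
Your proposal is correct and follows essentially the same route as the paper: $M=A^\top A/d^2$ with a circulant incidence matrix is the paper's cyclic autocorrelation of the band indicator, diagonalized by Fourier characters. The Dirichlet-kernel evaluation, the Taylor expansion at $x=\pi/n$, and the exact eigenvector computation for the first nonconstant mode then match the paper step for step. One small repair: $A^\top f_k=\overline{\hat a_k}f_k$ holds because $A^\top$ is itself circulant (indeed $A^\top=A$ here, since the offset window is symmetric), not merely because $A$ is real.
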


\begin{proof}
Let $h\in\mathbb R^n$ be the circular band indicator,
$h_s=1$ for $s\in\{-r,\ldots,r\}$ modulo $n$ and $h_s=0$ otherwise. By
Theorem~\ref{thm:biregular-topology-kernel}, $M_{\alpha\beta}$ is
$d^{-2}$ times the number of rows incident to both columns $\alpha$ and
$\beta$. Hence $M$ is the circulant cyclic autocorrelation
\[
M_{\alpha\beta}
=
\frac1{d^2}\sum_{s\in\mathbb Z/n\mathbb Z}h_s h_{s+\beta-\alpha}.
\]
The discrete Fourier characters diagonalize circulant convolution operators,
and the autocorrelation has Fourier transform
$|\widehat h_k|^2$, where
\[
\widehat h_k=\sum_{s=-r}^{r}\exp(2\pi \mathrm{i}ks/n).
\]
This gives the displayed eigenvalues. For $k\ne0$, the finite geometric sum
is the Dirichlet kernel
\[
\widehat h_k
=
\frac{\sin(\pi k d/n)}{\sin(\pi k/n)}
\]
up to a phase, whose absolute value gives the second expression. The
zero-mass subspace removes the $k=0$ constant eigenvector, and the spectral
Dobrushin bound is Corollary~\ref{cor:spectral-topology-certificate}.

Finally, put $x=\pi/n$. For fixed $d$,
\[
\frac{\sin(dx)}{d\sin x}
=
1-\frac{(d^2-1)x^2}{6}+O(x^4),
\]
and squaring gives the expansion for $\lambda_1$. Since
$\lambda_{\mathrm{band}}\ge\lambda_1$, the stated lower asymptotic follows.
The first nonconstant Fourier character is an eigenvector with eigenvalue
$\lambda_1$, so the displayed $L^2$ ratio after $K$ applications is exact.
If $K=o(n^2)$ and $r$ is fixed, the expansion gives
$K(1-\lambda_1)\to0$, hence $\lambda_1^K\to1$.
\end{proof}

\begin{proposition}[Circular local-band one-step obstruction]
Consider the equal-score biregular baseline on $n$ circular positions where
row $i$ is connected to columns $i-r,\ldots,i+r$ modulo $n$, with
$r\ge0$. If $n\ge 8r+2$, then $\tau(M)=1$.
\end{proposition}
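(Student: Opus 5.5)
By Theorem~\ref{thm:biregular-topology-kernel}, in the equal-score biregular baseline we have $M_{\alpha\beta}=N_{\alpha\beta}/d^2$. Here $N_{\alpha\beta}$ counts the rows $i$ whose window $\{i-r,\ldots,i+r\}$ contains both $\alpha$ and $\beta$. The plan is to exhibit two columns whose rows of $M$ have disjoint supports. Then the total-variation distance between those two rows is one, and $\tau(M)=1$ by definition. This is exactly the qualitative overlap criterion following Theorem~\ref{thm:support-to-kernel-topology}, since all plan entries $1/E$ are strictly positive.

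The first step is to identify the support of row $\alpha$. Column $\alpha$ is incident to rows $i$ with circular distance $|i-\alpha|\le r$. Each such row reaches the columns within distance $r$ of $i$. So $M_{\alpha\beta}>0$ exactly when the circular distance from $\alpha$ to $\beta$ is at most $2r$. This uses that the arcs do not wrap onto themselves, which holds because $n\ge 8r+2>2r+1$. Thus row $\alpha$ is supported on the arc $\{\alpha-2r,\ldots,\alpha+2r\}$, which contains $4r+1$ distinct positions.

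The second step takes $\alpha=0$ and $\beta=\lfloor n/2\rfloor$. The supports are disjoint provided no $\gamma$ lies within circular distance $2r$ of both. By the triangle inequality on the cycle, such a $\gamma$ would force the circular distance between $\alpha$ and $\beta$ to be at most $4r$. That distance equals $\lfloor n/2\rfloor\ge\lfloor(8r+2)/2\rfloor=4r+1>4r$, so no such $\gamma$ exists.

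The only point needing care is the wraparound bookkeeping. The two arcs of length $4r+1$ must not meet on the far side of the circle either. The circular-distance formulation handles both sides at once, and it yields the requirement $n\ge 8r+2$, since two disjoint arcs of $4r+1$ positions need $n\ge 8r+2$. The degenerate case $r=0$ makes $M=I$ with $n\ge2$, which is covered by the same argument and also by the permutation-like proposition. Finally, I would note that the hypothesis $r<n/2$ from Corollary~\ref{cor:circular-local-band-spectrum} holds automatically, so the degree is indeed $d=2r+1$ and the support is biregular.
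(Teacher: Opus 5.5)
Your proof is correct and follows essentially the same argument as the paper. Both identify the support of row $\alpha$ of $M$ as the columns within circular distance $2r$ of $\alpha$. Both then take two columns at circular distance at least $4r+1$; you make this concrete with $\alpha=0$, $\beta=\lfloor n/2\rfloor$ and spell out the triangle-inequality step, whereas the paper leaves the choice implicit. Their rows of $M$ have disjoint supports, so $\tau(M)=1$.
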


\begin{proof}
The support of row $\alpha$ of the derivative kernel $M$ consists exactly of
columns within circular distance at most $2r$ from $\alpha$: to move from
column $\alpha$ to column $\beta$ in one full derivative step, the two columns
must share a support row, and this happens precisely when their circular
distance is at most $2r$. If $n\ge 8r+2$, there are two columns whose circular
distance is at least $4r+1$. Their radius-$2r$ derivative neighborhoods are
disjoint. The corresponding rows of $M$ therefore have disjoint supports, so
their total-variation distance is one and $\tau(M)=1$.
\end{proof}

\section{Registers, Dustbins, and Shared Buses}

Global tokens, register tokens, dustbin rows or columns, and expert buses can
all create a common component in the derivative kernel. The architectural
reading is simple: if every local region sends some reverse-mode mass through
the same bus, zero-mass cotangents can no longer remain perfectly isolated.

\begin{proposition}[Register/dustbin mixing certificate]
Consider a fixed support in which a register or dustbin substructure induces
a realized derivative kernel $M$. If the realized rows of $M$ share common
mass
\[
\gamma_{\mathrm{cert}}
=\sum_j \min_i M_{ij}>0,
\]
then
\[
\|M^\top\eta\|_{\TV}
\le (1-\gamma_{\mathrm{cert}})\|\eta\|_{\TV}
\]
for every zero-mass cotangent $\eta$.
\end{proposition}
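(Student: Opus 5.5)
The statement is a direct combination of the sparse-support overlap certificate (Proposition~\ref{prop:sparse-support-overlap}) with zero-mass quotient contraction (Theorem~\ref{thm:zero-mass-quotient-contraction}). I will manufacture the common component from the realized column minima of $M$.

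First, set $\ell_j=\min_i M_{ij}\ge0$, so that $\gamma_{\mathrm{cert}}=\sum_j\ell_j$. Since $M$ is row-stochastic, every row dominates $\ell$ elementwise. Hence $\gamma_{\mathrm{cert}}\le\sum_j M_{1j}=1$, and so $\gamma_{\mathrm{cert}}\in(0,1]$. Define the probability vector $q=\ell/\gamma_{\mathrm{cert}}$. Then $M_{ij}\ge\gamma_{\mathrm{cert}}q_j$ for all $i,j$, which is exactly the hypothesis of Proposition~\ref{prop:sparse-support-overlap} with $\gamma=\gamma_{\mathrm{cert}}$. That proposition gives $\tau(M)\le1-\gamma_{\mathrm{cert}}$.

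Second, I apply Theorem~\ref{thm:zero-mass-quotient-contraction} to the row-stochastic kernel $M$. For every zero-mass $\eta\in Q^*$ it gives $\|M^\top\eta\|_{\TV}\le\tau(M)\|\eta\|_{\TV}$. Chaining the two inequalities yields the claim.

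The only point needing care is the edge case $\gamma_{\mathrm{cert}}=1$. There every row equals $\ell$, so $M=\one\ell^\top$ and $M^\top\eta=\ell(\one^\top\eta)=0$. This is consistent with the bound, and the decomposition in the proposition is handled by its stated convention. Nothing in the argument uses the register or dustbin structure beyond the fact that it produces a row-stochastic realized kernel, which holds by Theorem~\ref{thm:support-to-kernel-topology}. So I do not expect a genuine obstacle. The main thing to state explicitly is that the certificate is computed from the realized $M$, not from the support graph.
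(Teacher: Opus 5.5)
Your proposal is correct and takes the same route as the paper. The paper's proof just invokes the sparse-support overlap certificate with $q_j=\min_i M_{ij}/\gamma_{\mathrm{cert}}$ and leaves implicit the zero-mass contraction step that you spell out. One small citation point: row-stochasticity of $M$ comes from $M=C^\top R$ having normalized rows of $R$ and normalized columns of $C$ (equivalently, from summing the support-to-kernel formula against the marginal constraints), not from the positivity characterization in that theorem.
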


\begin{proof}
This is the sparse-support overlap certificate with
$q_j=\min_i M_{ij}/\gamma_{\mathrm{cert}}$. The register or dustbin
interpretation is architectural; the theorem only uses the realized common
component.
\end{proof}

\begin{theorem}[Equal-score register-bus certificate]
\label{thm:equal-score-register-bus}
Let $n\ge1$, $r\ge1$, and let the token-token support on the $n$ token rows
and columns be $d$-regular with $d\ge1$:
every token row and every token column has exactly $d$ active token-token
edges. Add $r\ge1$ register rows and $r$ register columns. Token rows connect
to their active token columns and to every register column; register rows
connect to every token and register column. Let $N=n+r$, use equal active
scores, and set uniform marginals $a=b=N^{-1}\one$.

Define $t>0$ by
\[
r t^2+(n-r)t-d=0,
\qquad
t=\frac{2d}{\sqrt{(n-r)^2+4rd}+(n-r)}.
\]
Then the Sinkhorn plan has four values:
\[
P_{ij}=
\begin{cases}
A, & i,j \text{ token and }(i,j)\text{ active},\\
B, & \text{one endpoint token and one endpoint register},\\
D, & i,j \text{ both registers},
\end{cases}
\]
where
\[
A=\frac{1}{N(d+rt)},\qquad B=tA,\qquad D=t^2A.
\]
For the derivative kernel $M=C^\top R$, every source column $\alpha$ and
every register target column $h$ satisfy
\[
M_{\alpha h}\ge
m_{\mathrm{reg}}
:=
\frac{t(d+rt^2)}{(d+rt)^2}.
\]
Consequently the $r$ register columns form a common component of mass at least
\[
\gamma_{\mathrm{reg}}
=r\,m_{\mathrm{reg}}
=
\frac{rt(d+rt^2)}{(d+rt)^2},
\]
and
\[
\tau(M)\le 1-\gamma_{\mathrm{reg}}.
\]
\end{theorem}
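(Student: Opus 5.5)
The plan is to guess the Sinkhorn plan in product form and confirm it, then read the kernel entries off the support-to-kernel formula of Theorem~\ref{thm:support-to-kernel-topology}. Since all active scores are equal, the entropic plan is a positive diagonal scaling of the support indicator, $P_{ij}=x_iy_j\one\{(i,j)\in\Omega\}$. The support is invariant under permutations of tokens preserving the $d$-regular token graph's degree structure and under swapping rows with columns. So I would try the ansatz $x_i=y_i=\sqrt{A}$ for tokens and $x_i=y_i=t\sqrt{A}$ for registers. This gives exactly the values $A$, $B=tA$ and $D=t^2A$.

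The first step is to check the marginals. A token row has $d$ token edges and $r$ register edges, so its sum is $A(d+rt)$. A register row has $n$ token edges and $r$ register edges, so its sum is $tA(n+rt)$. Columns are identical by the row/column symmetry of the support. Setting the token row sum equal to $1/N$ fixes $A=1/(N(d+rt))$. Setting the register row sum equal to the token row sum gives $t(n+rt)=d+rt$, which is exactly $rt^2+(n-r)t-d=0$. Its unique positive root is the displayed $t$, obtained by rationalizing the quadratic formula; positivity holds because $d>0$. This $P$ is then a strictly positive feasible plan of diagonal-scaling form on the support. By uniqueness of the entropic matrix-scaling plan, as already used in the proof of Theorem~\ref{thm:biregular-topology-kernel}, it is the Sinkhorn plan.

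Next I would compute $M_{\alpha h}=N^2\sum_i P_{i\alpha}P_{ih}$ for a register column $h$, using $a_i=b_\alpha=1/N$.
\begin{itemize}
\item For a token source $\alpha$, the incident rows are $d$ token rows, each contributing $A\cdot B$, and $r$ register rows, each contributing $B\cdot D$. The sum is $N^2A^2\,t(d+rt^2)=t(d+rt^2)/(d+rt)^2=m_{\mathrm{reg}}$, so the bound holds with equality.
\item For a register source $\alpha$, the incident rows are all $n$ token rows, each contributing $B^2$, and $r$ register rows, each contributing $D^2$. The sum is $N^2A^2\,t^2(n+rt^2)$.
\end{itemize}

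The main obstacle is the comparison in the register-source case, namely showing $t^2(n+rt^2)\ge t(d+rt^2)$. I would eliminate $d$ using the quadratic, $d=rt^2+(n-r)t$. Then
\[
t(n+rt^2)-(d+rt^2)=rt^3-2rt^2+rt=rt(t-1)^2\ge0,
\]
and multiplying by $t>0$ gives the inequality. Hence every row of $M$ assigns mass at least $m_{\mathrm{reg}}$ to each of the $r$ register columns.

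Finally, I would apply Proposition~\ref{prop:sparse-support-overlap} with $q$ uniform on the register columns and $\gamma=\gamma_{\mathrm{reg}}=r\,m_{\mathrm{reg}}$. This needs $\gamma\le1$, which holds automatically because $M$ is row-stochastic. The conclusion is $\tau(M)\le1-\gamma_{\mathrm{reg}}$.
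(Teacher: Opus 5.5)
Your proposal is correct and follows the paper's proof essentially step for step: the same class-constant scaling ansatz, the same marginal check reducing to $rt^2+(n-r)t-d=0$, the same two kernel-entry computations, the same elimination of $d$ giving the nonnegative gap $rt^2(t-1)^2/(d+rt)^2$, and the same appeal to the sparse-support overlap certificate. One small wording fix: the token column sums match because every token column has exactly $d$ token-token edges, not because the support is invariant under transposition, since the $d$-regular token-token pattern need not be symmetric.
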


\begin{proof}
The proposed plan has Sinkhorn scaling form for equal scores: token row and
column scales are constant within the token class, and register row and
column scales are constant within the register class. Its token row sum is
\[
dA+rB=A(d+rt)=\frac1N.
\]
The token column sum is identical because the token-token support is
$d$-regular and $B$ is used for register-token edges. The register row and
register column sums are
\[
nB+rD=A(nt+rt^2).
\]
The defining quadratic for $t$ is equivalent to
$nt+rt^2=d+rt$, so these sums also equal $1/N$. Thus the candidate is a
strictly positive feasible scaling of the equal-score support. By uniqueness
of the entropic Sinkhorn plan on the fixed connected support, it is the
Sinkhorn plan.

Since $a=b=N^{-1}\one$,
\[
M_{\alpha\beta}=N^2\sum_i P_{i\alpha}P_{i\beta}.
\]
If $\alpha$ is a token column and $h$ is a register column, the contributing
rows are the $d$ token rows incident to $\alpha$ and the $r$ register rows:
\[
M_{\alpha h}=N^2(dAB+rBD)
=\frac{t(d+rt^2)}{(d+rt)^2}.
\]
If $\alpha$ is itself a register column, then
\[
M_{\alpha h}=N^2(nB^2+rD^2)
=\frac{t^2(n+rt^2)}{(d+rt)^2}.
\]
Using $d=rt^2+(n-r)t$, the difference between the register-source value and
the token-source value is
\[
\frac{r t^2(t-1)^2}{(d+rt)^2}\ge0.
\]
Thus every row of $M$ assigns at least $m_{\mathrm{reg}}$ to each register
target column. The sparse-support overlap certificate applied to the common
mass on the $r$ register columns gives $\tau(M)\le1-\gamma_{\mathrm{reg}}$.
\end{proof}

\paragraph{Equal-score limitation.}
Theorem~\ref{thm:equal-score-register-bus} is a pure topology theorem because
equal scores and uniform marginals force a four-value scaled plan. With
structured scores or nonuniform marginals, the same support still has a
register-bus interpretation, but quantitative contraction must be certified
from the realized plan or from a separate distortion/mass-floor hypothesis.
The degenerate case $d=0$ is a pure register-mediated support and is not
covered by the four-value formula above.

\paragraph{Design implication.}
A narrow local window may have $\tau(M)=1$ because distant row distributions
do not overlap. Adding a small register budget can reduce $\tau(M)$ if the
scaled plan routes enough mass through the shared component. This is a
certificate knob, not a proof of task accuracy.

\section{Certificate-Level Design Rules}

The previous theorems describe what can be proved for the fixed-support
transport derivative under the stated assumptions. They establish certificate
logic rather than accuracy claims.

These results suggest a practical matched-edge workflow. First test an
equal-score biregular baseline to understand the pure graph walk. Then compare
the candidate support's realized $\tau(M^K)$ against simple graph descriptors
and against a local-band control with the same edge budget. Finally, stress
the candidate with nonuniform marginals and score perturbations; if the
certificate survives, the topology becomes a candidate for a downstream task
gate. If it does not survive, the failure is informative: the architecture may
still be useful for restricted data, but the fixed-support transport
derivative has no global quotient-mixing certificate under those conditions.

\section{Short DAG Path-Sum Remark}
\label{app:dag-remark}

A residual or branching computation graph is outside the core chain theorem,
but the same quotient bookkeeping applies once every edge map has an explicitly
bounded operator norm on the chosen quotient representative.  If a sink-to-source
cotangent can be expanded as a finite path sum
\[
\eta_{\rm source}=\sum_p w_p\left(\prod_{e\in p}M_e^\top\right)\eta_{\rm sink}
\]
with zero-mass cotangents throughout, then Theorem~\ref{thm:zero-mass-quotient-contraction}
and the triangle inequality give
\[
\|\eta_{\rm source}\|_{\TV}
\le
\left(\sum_p |w_p|\prod_{e\in p}\tau(M_e)\right)
\|\eta_{\rm sink}\|_{\TV}.
\]
Any inhomogeneous source term must be projected and bounded separately, exactly
as in Theorem~\ref{thm:projected-source-tail-bound}.  This appendix remark is a
bookkeeping consequence, not an additional architecture claim.

\bibliographystyle{plainnat}
\bibliography{references}

@misc{forde2026blockwise,
  author = {Forde, Dylan B.},
  title = {{Block-Wise Differentiable Sinkhorn Attention}: Tail-Refinement Gradients with a {Single-Active-Dustbin Bridge}},
  year = {2026},
  howpublished = {arXiv preprint arXiv:2605.08123v2},
  note = {Title as printed in the v2 manuscript; arXiv listing uses Gap-Aware Dustbin Bridge},
  url = {https://arxiv.org/html/2605.08123v2}
}

@inproceedings{cuturi2013sinkhorn,
  title={Sinkhorn Distances: Lightspeed Computation of Optimal Transport},
  author={Cuturi, Marco},
  booktitle={Advances in Neural Information Processing Systems},
  volume={26},
  year={2013},
  url={https://papers.nips.cc/paper_files/paper/2013/hash/af21d0c97db2e27e13572cbf59eb343d-Abstract.html}
}

@article{peyre2019computational,
  title={Computational Optimal Transport: With Applications to Data Science},
  author={Peyr{\'e}, Gabriel and Cuturi, Marco},
  journal={Foundations and Trends in Machine Learning},
  volume={11},
  number={5--6},
  pages={355--607},
  year={2019},
  doi={10.1561/2200000073}
}

@article{idel2016review,
  title={A Review of Matrix Scaling and {Sinkhorn}'s Normal Form for Matrices and Positive Maps},
  author={Idel, Martin},
  journal={arXiv preprint arXiv:1609.06349},
  year={2016}
}

@inproceedings{eisenberger2022unified,
  title={A Unified Framework for Implicit {Sinkhorn} Differentiation},
  author={Eisenberger, Marvin and Toker, Aysim and Leal-Taix{\'e}, Laura and Bernard, Florian and Cremers, Daniel},
  booktitle={Proceedings of the IEEE/CVF Conference on Computer Vision and Pattern Recognition},
  pages={509--518},
  year={2022}
}

@article{gaubert2013dobrushin,
  title={Dobrushin Ergodicity Coefficient for {Markov} Operators on Cones, and Beyond},
  author={Gaubert, St{\'e}phane and Qu, Zheng},
  journal={arXiv preprint arXiv:1302.5226},
  year={2013}
}

@article{cominetti1994asymptotic,
  title={Asymptotic Analysis of the Exponential Penalty Trajectory in Linear Programming},
  author={Cominetti, Roberto and San Mart{\'i}n, Jaime},
  journal={Mathematical Programming},
  volume={67},
  pages={169--187},
  year={1994},
  doi={10.1007/BF01582220}
}

@inproceedings{weed2018explicit,
  title={An Explicit Analysis of the Entropic Penalty in Linear Programming},
  author={Weed, Jonathan},
  booktitle={Proceedings of the 31st Conference on Learning Theory},
  series={Proceedings of Machine Learning Research},
  volume={75},
  pages={1841--1855},
  year={2018}
}

@article{sinkhornknopp1967,
  title={Concerning Nonnegative Matrices and Doubly Stochastic Matrices},
  author={Sinkhorn, Richard and Knopp, Paul},
  journal={Pacific Journal of Mathematics},
  volume={21},
  number={2},
  pages={343--348},
  year={1967}
}

@book{levin2017markov,
  title = {Markov Chains and Mixing Times},
  author = {Levin, David A. and Peres, Yuval},
  edition = {Second},
  publisher = {American Mathematical Society},
  year = {2017},
  note = {With contributions by Elizabeth L. Wilmer},
  url = {https://pages.uoregon.edu/dlevin/MARKOV/}
}

@article{pauwels2023derivatives,
  title = {The Derivatives of {Sinkhorn--Knopp} Converge},
  author = {Pauwels, Edouard and Vaiter, Samuel},
  journal = {SIAM Journal on Optimization},
  volume = {33},
  number = {3},
  pages = {1494--1517},
  year = {2023},
  doi = {10.1137/22M1512703},
  url = {https://epubs.siam.org/doi/10.1137/22M1512703}
}

@article{wolfowitz1963products,
  author = {Wolfowitz, Jacob},
  title = {Products of Indecomposable, Aperiodic, Stochastic Matrices},
  journal = {Proceedings of the American Mathematical Society},
  volume = {14},
  number = {5},
  pages = {733--737},
  year = {1963},
  doi = {10.1090/S0002-9939-1963-0154756-3}
}

@misc{deloera2013transportation,
  author = {De Loera, Jes{\'u}s A. and Kim, Edward D.},
  title = {Combinatorics and Geometry of Transportation Polytopes: An Update},
  year = {2013},
  howpublished = {arXiv preprint arXiv:1307.0124},
  url = {https://arxiv.org/abs/1307.0124}
}

\end{document}